\DeclarePairedDelimiter{\idt}{\llbracket}{\rrbracket}
\title{One Sample Fits All: Approximating All Probabilistic Values Simultaneously and Efficiently}
\author{%
	Weida Li\\
	School of Computing\\
	National University of Singapore\\
	\texttt{vidaslee@gmail.com} \\
	\And 
	Yaoliang Yu \\
	School of Computer Science\\
	University of Waterloo \\
	Vector Institute \\
	\texttt{yaoliang.yu@uwaterloo.ca} 
}
\begin{document}

	\maketitle

	\begin{abstract}

		The concept of probabilistic values, such as Beta Shapley values and weighted Banzhaf values, has gained recent attention in applications like feature attribution and data valuation. However, exact computation of these values is often exponentially expensive, necessitating approximation techniques. Prior research has shown that the choice of probabilistic values significantly impacts downstream performance, with no universally superior option. Consequently, one may have to approximate multiple candidates and select the best-performing one. Although there have been many efforts to develop efficient estimators, none  are intended to approximate all probabilistic values both simultaneously and efficiently. In this work, we embark on the first exploration of achieving this goal. Adhering to the principle of maximum sample reuse, we propose a one-sample-fits-all framework parameterized by a sampling vector to approximate  intermediate terms that can be converted to any probabilistic value without amplifying scalars. Leveraging the concept of $ (\epsilon, \delta) $-approximation, we theoretically identify a key formula that effectively determines the convergence rate of our framework. By optimizing the sampling vector using this formula, we obtain i) a one-for-all estimator that achieves the currently best time complexity for all probabilistic values on average, and ii) a faster generic estimator with the sampling vector optimally tuned for each probabilistic value. Particularly, our one-for-all estimator achieves the fastest convergence rate on Beta Shapley values, including the well-known Shapley value, both theoretically and empirically. Finally, we establish a connection between probabilistic values and the least square regression used in (regularized) datamodels, showing that our one-for-all estimator can solve a family of datamodels simultaneously. Our code is available at \url{https://github.com/watml/one-for-all}.
	\end{abstract}

	\section{Introduction}
	The problem of attribution is central in many aspects of machine learning \citep{rozemberczki2022shapley}. Examples include data valuation \citep{ghorbani2019data}, feature attribution \citep{lundberg2017unified}, multi-agent reinforcement learning \citep{wang2022shaq}, data attribution \citep{ilyas2022datamodels}, and the list goes on. One popular methodology is to leverage the concept of probabilistic values, which is uniquely characterized by the axioms of linearity, null, monotonicity and symmetry in cooperative game theory \citep{weber1988probabilistic}. Recent studies demonstrate that downstream performance employing this concept relies on the choice of probabilistic values, and the best one varies \citep{kwon2022weightedshap,li2023robust}. Therefore, practitioners may resort to approximating multiple candidates of probabilistic values and then select the best-performing one \citep{kwon2022weightedshap}.

	In general, probabilistic values can only be approximated as they require exponentially many utility evaluations to compute exactly. 
	Precisely, there has been a line of work devoted to developing efficient estimators for the Shapley value \citep[\eg,][]{jia2019towards,covert2021improving,zhang2023efficient,kolpaczki2024approximating}, while \citet{wang2023data,li2023robust} propose efficient estimators specific to weighted Banzhaf values. Although the research on generic estimators designed to approximate any probabilistic value has recently made progress \citep{lin2022measuring,li2024faster}, none of them can approximate all probabilistic values \emph{simultaneously and efficiently}. All in all, there is a strong demand for an efficient one-for-all estimator, the possibilities of which will be explored in this work.
	
	To sum up, we propose a \textbf{O}ne-sample-\textbf{F}its-\textbf{A}ll (OFA) framework parameterized by a sampling vector to approximate  intermediate terms that can be converted to any probabilistic value. Particularly, our framework i) adheres to the principle of maximum sample reuse and ii) does not include amplifying scalars in the conversion. 
	These two properties are considered indispensable as we observe that i) the empirical fastest estimators designed for the Shapley value or weighted Banzhaf values all follow the principle of maximum sample reuse and ii) amplifying scalars could deteriorate the convergence rates of estimators.
	Then, 
	using the concept of $ (\epsilon, \delta) $-approximation, \ie, $ P(\| \hat{\boldsymbol\phi} - \boldsymbol\phi \|_{2} \geq \epsilon) \leq \delta $ where $ \boldsymbol\phi $ refers to some probabilistic value and $ \hat{\boldsymbol\phi} $ is its estimate, we theoretically identify a formula from our framework that effectively determines the corresponding convergence rate, through which the sampling vector can be optimized. Specifically, we deduce i) an efficient one-for-all estimator (OFA-A) while optimizing the formula for all probabilistic values on \textbf{A}verage and ii) a faster generic estimator (OFA-S) while the optimization is done for each \textbf{S}pecific probabilistic value. The results of our convergence analysis are summarized as follows:
	\begin{itemize}[leftmargin=*]
		\item Our OFA-A achieves the convergence rate $ O(n\log n) $ for all probabilistic values on average. Notably, $ O(n\log n) $ is the currently-known best time complexity for \emph{some} probabilistic values. 

		\item For Beta Shapley values parameterized by $ \alpha, \beta \geq 1 $ \citep{kwon2022beta}, our OFA-A estimator requires $ O(n\log n) $ utility evaluations to achieve an $ (\epsilon, \delta) $-approximation \emph{simultaneously}. Note that $ \alpha=\beta=1 $ corresponds to the commonly-used Shapley value \citep{shapley1953value}.
		For the Shapley value, the previous best convergence rate is $ O(n(\log n)^{2}) $, achieved by the group testing estimator \citep[Theorem 6]{wang2023note}; however, we note that in our experiments the previous best-performing estimator is the complement estimator \citep{zhang2023efficient}, whose convergence rate is unknown.
		For Beta Shapley values with $ (\alpha = 1, \beta>1) $ or $ (\alpha> 1, \beta = 1) $, the previous best estimator requires $ O(n(\log n)^{3}) $ utility evaluations instead \citep[Proposition 4 and Remark 3]{li2024faster}.
		
		\item For weighted Banzhaf values parameterized by $ 0<w<1 $, the time complexity of our OFA-A is $ O(n^{\frac{3}{2}}\log n) $, not rivaling the previous best convergence rate $ O(n\log n) $ achieved by the estimator exclusive to weighted Banzhaf values \citep[Proposition 2]{li2023robust}.
		
		\item Nevertheless, our OFA-S achieves the convergence rate of $ O(n\log n) $ for both Beta Shapley values (with $ \alpha,\beta\geq 1 $) and weighted Banzhaf values.
	\end{itemize}
	In our experiments, the empirical convergence rates align well with the theoretical ones derived using the concept of $ (\epsilon, \delta) $-approximation. Additionally, we establish a connection between probabilistic values and the least square regressions employed in datamodels \citep{ilyas2022datamodels}, demonstrating that our OFA-A estimator can solve a family of datamodels simultaneously if it is the distances between feature coordinates that matter. This condition is met while using datamodels to detect similar training examples to a given target. Furthermore, we also identify a group of regularized datamodels that our OFA-A estimator can solve simultaneously without this condition.
	
	\section{Preliminaries}
	Let $ n $ be the number of players and $ [n] := \{ 1,2,\dots,n \} $ be the set of all players. In data valuation (feature attribution, respectively), $ n $ refers to the number of training data (features, respectively). For simplicity, we write $ S\backslash i $ and $ S\cup i $ instead of $ S\backslash \{ i \} $ and $ S \cup \{ i \} $, respectively. Meanwhile, (lowercase) $ s $ denotes the cardinality of the set (uppercase) $ S $.
	Then, each probabilistic value can be written as
	\begin{equation}
		\begin{gathered}
			\phi_{i} = \phi_{i}(U) = \sum_{S \subseteq [n]\backslash i} p_{s+1}[U(S\cup i) - U(S)]
		\end{gathered}
	\end{equation}
	where $ U:2^{[n]} \to \mathbb{R} $ is a utility function and $ \mathbf{p} \in \mathbb{R}^{n} $ is a non-negative vector such that $ \sum_{s=1}^{n}\binom{n-1}{s-1}p_{s} = 1 $. Take data valuation as an example, $ U(S) $ may measure the performance of models trained on $ S \subseteq [n] $, with which $ \phi_{i}(U) $ can be interpreted as the contribution of the $ i $-th data point to the performance of models trained on $ [n] $.
	
	If there exists a (Borel) probability measure $ \mu $ on the closed interval $[0,1]$ such that $ p_{s} = \int_{0}^1 w^{s-1}(1-w)^{n-s}\mathrm{d}\mu(w) $, then the resulting probabilistic value is referred to as a semi-value \parencite{dubey1981value}. 
	If $ \mu $ represents a Dirac delta distribution $ \delta_{a} $, the corresponding probabilistic value is referred to as the weighted Banzhaf value parameterized by $ a $, or WB-$ a $.
	For Beta Shapley values, denoted by Beta$ (\alpha, \beta) $, $ \mu(A) = \int_{A} w^{\beta-1}(1-w)^{\alpha-1} \mathrm{d}w $. In practice, the considered range of $ \alpha $ or $ \beta $ is $ [1,\infty) $ \citep{kwon2022beta,kwon2022weightedshap}. Particularly, Beta$ (1,1) $, whose $ \mu $ is the uniform distribution (over $[0,1]$), corresponds to the Shapley value.
	
	We will use the standard notion of $(\epsilon,\delta)$-approximation to analyze a (randomized) estimate $ \hat{\boldsymbol\phi} $ of some probabilistic value $ \boldsymbol\phi $.
	
	\begin{definition}
		We say a (randomized) estimate $ \hat{\boldsymbol\phi} $ achieves an $ (\epsilon, \delta) $-approximation of $\phiv$ if $ P(\| \hat{\boldsymbol\phi} - \boldsymbol\phi \|_{2} \geq \epsilon) \leq \delta $.
	\end{definition}
	
	For instance, \citet[Theorem 4.9]{wang2023data} proved that their proposed estimator requires $ O(\frac{n}{\epsilon^{2}}\log\frac{n}{\delta}) $ utility evaluations to achieve an $ (\epsilon, \delta) $-approximation for WB-$ 0.5 $, provided that $ \| U \|_{\infty} \leq 1 $. When $\epsilon$ and $\delta$ are considered fixed constants, we  then simply say the estimator converges at $ O(n\log n) $ rate. 

	\section{Motivations} 
	\paragraph{One-For-All Estimators} In this paper, an estimator is referred to as one-for-all if it is capable of sampling subsets \textbf{O}nce to approximate \textbf{A}ll probabilistic values.  
	
	Though existing estimators are not designed to approximate all probabilistic values simultaneously, some of them can be easily modified for this end by using the weighted sampling technique. Take the sampling lift (SL) estimator \citep{moehle2021portfolio} as an example, its approximation is based on
	\begin{equation} 
		\begin{gathered}
			\phi_{i} = \mathbb{E}_{S\subseteq [n]\backslash i}[U(S\cup i) - U(S)] \text{ where } P(S) = p_{s+1} .
		\end{gathered}
	\end{equation}
	If we fix the probability of sampling $ S $ to be the one, denoted by $ \mathbf{q} \in \mathbb{R}^{n} $, for the Shapley value, there is
	\begin{equation} \label{eq:sl-shapley}
		\begin{gathered}
			\phi_{i} = \mathbb{E}_{S\subseteq [n]\backslash i}^{\text{Shap}}\left[\frac{p_{s+1}}{q_{s+1}}(U(S\cup i) - U(S))\right] ,
		\end{gathered}
	\end{equation}
	which is the weighted sampling lift (WSL) estimator employed by \citet{kwon2022beta}.
	Therefore, we can store the accumulated results $ \{ U(S\cup i) - U(S) \} $ separately for each subset size of $ S $ so that they can be reweighted to be any probabilistic value.

	\paragraph{The Effect of Amplifying Factors}
	However, the scalar $ \frac{p_{s+1}}{q_{s+1}} $ potentially amplifies the theoretical convergence rate. To demonstrate, we take the WSL estimator as an example. In this case, $ \hat{\phi}_{i} = \frac{1}{T}\sum_{t=1}^{T}X_{t} $ where $ \{ X_{t} \}_{t=1}^{T} $ are i.i.d. random variable such that $ P(X_{t} = \frac{p_{s+1}}{q_{s+1}}(U(S\cup i) - U(S))) = q_{s+1} $ and thus $ \mathbb{E}[X_{t}] = \phi_{i} $. 
	Assume that $ \| U \|_{\infty}\leq 1 $, by the Heoffding's inequality, $ P(|\hat{\phi}_{i} - \phi_{i}|\geq \epsilon) \leq 2\exp\left( -\frac{T\epsilon^{2}}{8C^{2}} \right) $ where $ C = \max_{1\leq k\leq n} \frac{p_{k}}{q_{k}} $. By solving $ 2\exp\left( -\frac{T\epsilon^{2}}{8C^{2}} \right) \leq \delta $, we eventually obtain $ T\geq \frac{8C^{2}}{\epsilon^{2}}\log\frac{2}{\delta} $ and therefore the convergence rate of $ \hat{\phi}_{i} $ is $ O(\frac{C^{2}}{\epsilon^{2}}\log\frac{2}{\delta}) $. Consequently, if $ C \rightarrow \infty $ as $ n \rightarrow \infty $, this theoretical convergence rate deteriorates asymptotically. 
	For the Banzhaf value, $ p_{k} = \frac{1}{2^{n-1}} $; since $ q_{k} = \frac{(k-1)!(n-k)!}{n!} $, if $ k = \frac{n+1}{2} $, there is $ \frac{p_{k}}{q_{k}} \in \Theta(n^{\frac{1}{2}}) $ by the Stirling's approximation $ m! \simeq \sqrt{m}\left( \frac{m}{e} \right)^{m} $. Therefore, $ C^{2} $ introduces a factor of $ n $ into the theoretical convergence rate, though the derived formula may not be tight. If we switch the roles of $ \mathbf{p} $ and $ \mathbf{q} $, the amplifying scalar could be as worst as $ \Theta(\frac{2^{n}}{n^{k}}) $ for small $ k $. 
	


	Meanwhile, we also notice that \citet{kwon2022weightedshap} resort to a one-for-all estimator based on
	\begin{equation} \label{eq:weightedSHAP}
		\begin{gathered}
			\phi_{i} = \sum_{s=1}^{n}m_{s}\cdot\mathbb{E}_{\substack{R\subseteq[n]\backslash i\\ r=s-1}}[U(R\cup i)-U(R)] 
		\end{gathered}
	\end{equation}
	where $ m_{s} = \binom{n-1}{s-1}p_{s} $ and each expectation is taken over the corresponding uniform distribution. We refer to this estimator as weightedSHAP in this work. As can be verified, Eq.~\eqref{eq:weightedSHAP} does not contain any amplifying scalars, i.e., $ m_{s} \leq 1 $. 
	
	\paragraph{The Principle of Maximum Sample Reuse}
	However, estimators designed according to Eqs.~\eqref{eq:sl-shapley} and~\eqref{eq:weightedSHAP} are not expected to be efficient as it does not obey the principle of maximum sample reuse. Precisely, an estimator adheres to the principle of maximum sample reuse if each sampled subset is used to update all estimates $ \{ \hat{\phi}_{i} \}_{i\in [n]} $.
	As analyzed by \citet[Section 4.2]{zhang2023efficient}, estimators based on sampled marginal contributions $ \{ U(S\cup i) - U(S) \} $ are impossible to meet the principle of maximum sample reuse. 
	By contrast, we observe that the SHAP-IQ estimator proposed by \citet{fumagalli2024shap} can also be adopted for this end, which employs the formula
	\begin{equation} \label{eq:shapiq}
		\begin{gathered}
			\phi_{i} = p_{n} (U([n])-U(\emptyset)) + 2H\mathbb{E}_{\emptyset\subsetneq S \subsetneq [n]}[((n-s)m_{s}\mathds{1}_{i\in S} - sm_{s+1}\mathds{1}_{i\not\in S})(U(S)-U(\emptyset))]
		\end{gathered}
	\end{equation}
	where $ m_{s} = \binom{n-1}{s-1}p_{s} $, $ H = \sum_{j=1}^{n-1}\frac{1}{j} $, and $ P(S) \propto \binom{n-2}{s-1}^{-1} $. In particular, SHAP-IQ is equal to the unbiased KernelSHAP estimator \citep{covert2021improving} for the Shapley value; see \citep[Theorem 4.5]{fumagalli2024shap}. Although SHAP-IQ follows the principle of maximum sample reuse, it is apparent that Eq.~\eqref{eq:shapiq} contains amplifying scalars even for the Shapley value. Meanwhile,	
	there is another line of research in quest of efficient estimators for the Shapley value by reducing the variance via the stratified sampling technique \citep{maleki2013bounding,burgess2021approximating,castro2017improving,wu2023variance}. However, such a technique also does not verify the principle of maximum sample reuse.
	
	\begin{figure}[t]
		\centering
		\begin{tabular}{ccc}
			\multicolumn{3}{c}{\includegraphics[width=0.85\linewidth]{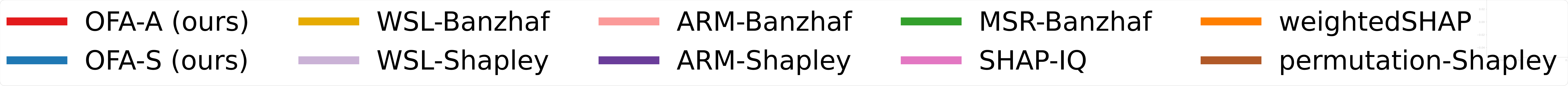}} \\
			\includegraphics[width=0.3\linewidth]{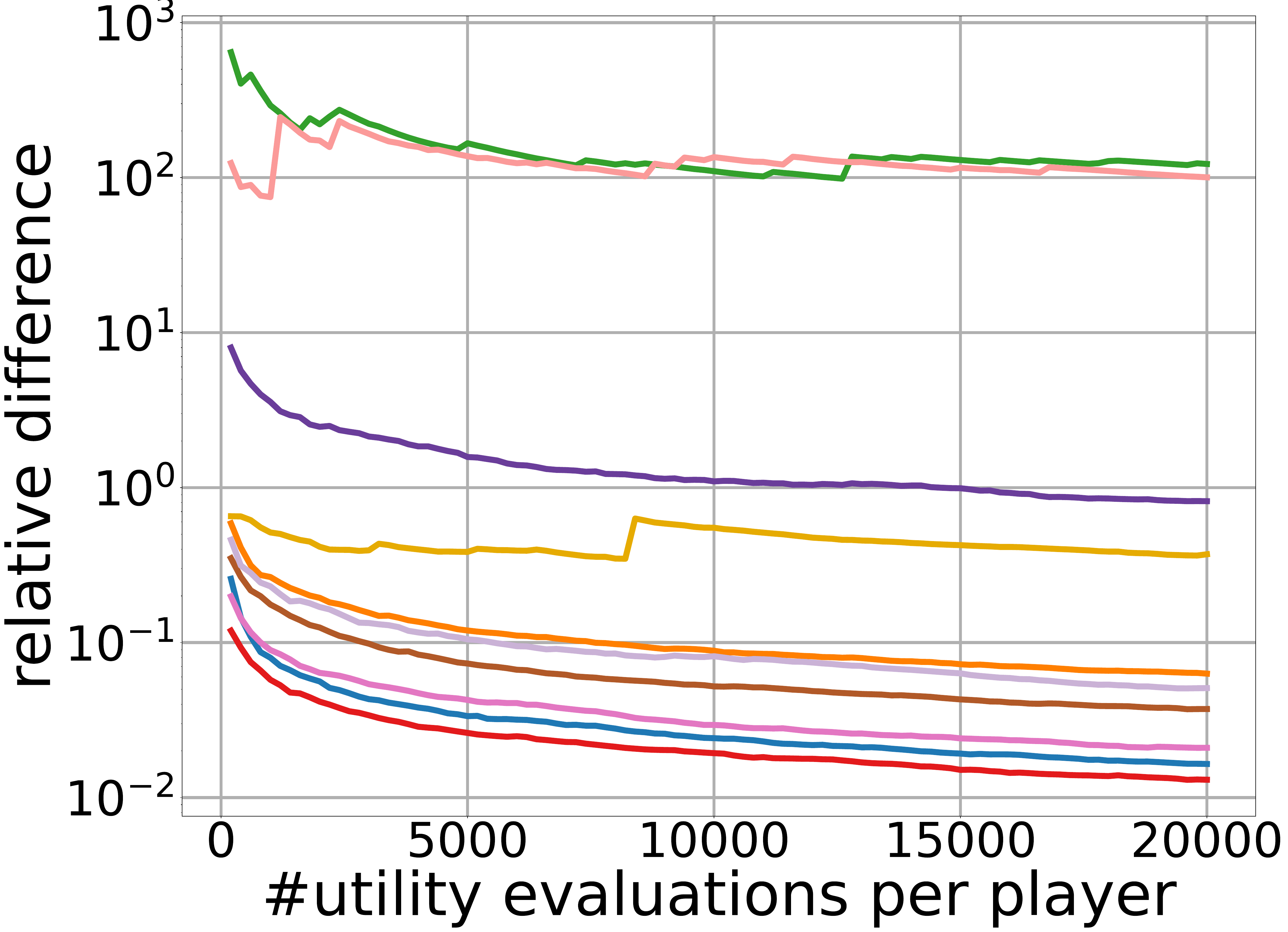} & \includegraphics[width=0.3\linewidth]{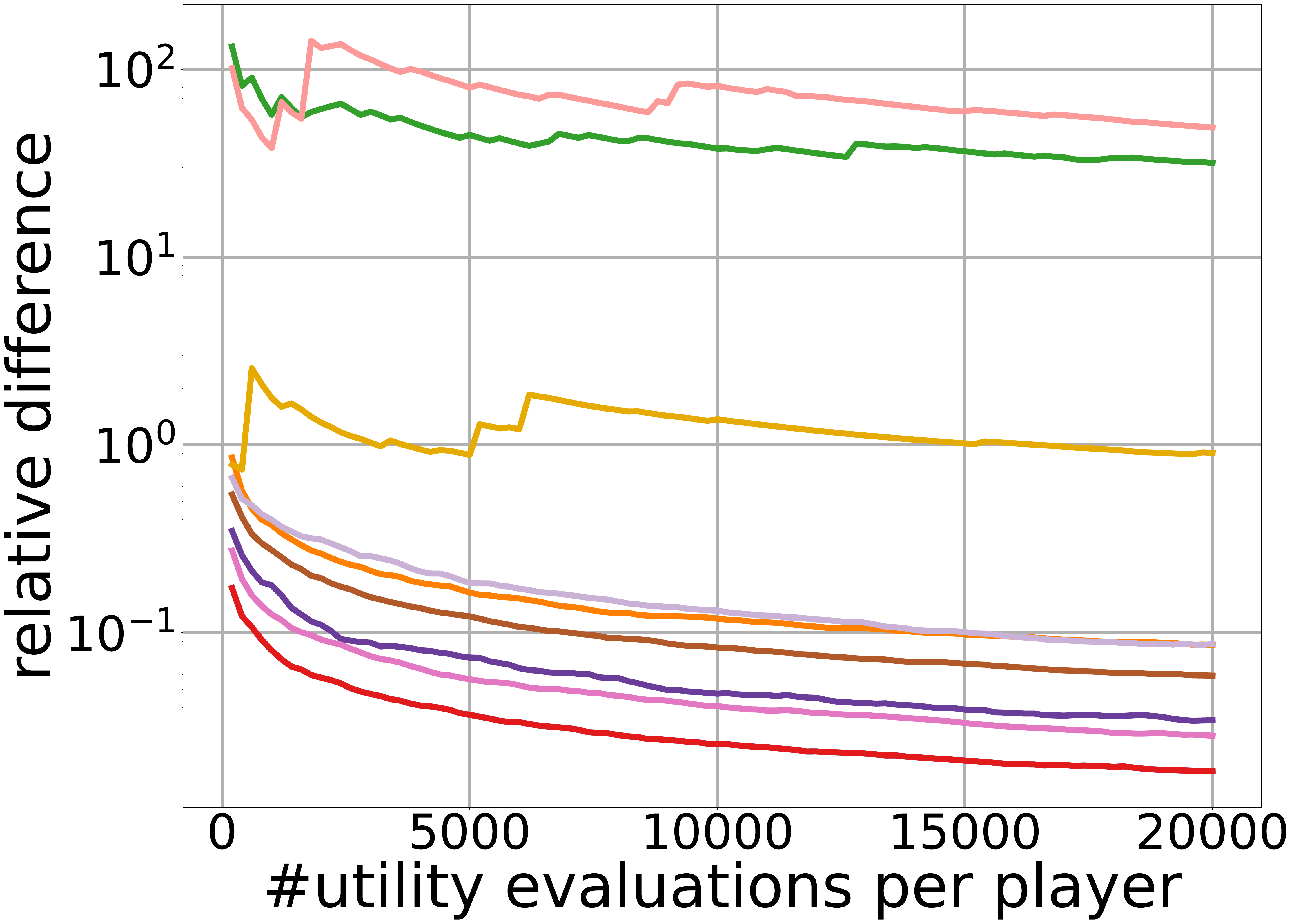} &
			\includegraphics[width=0.3\linewidth]{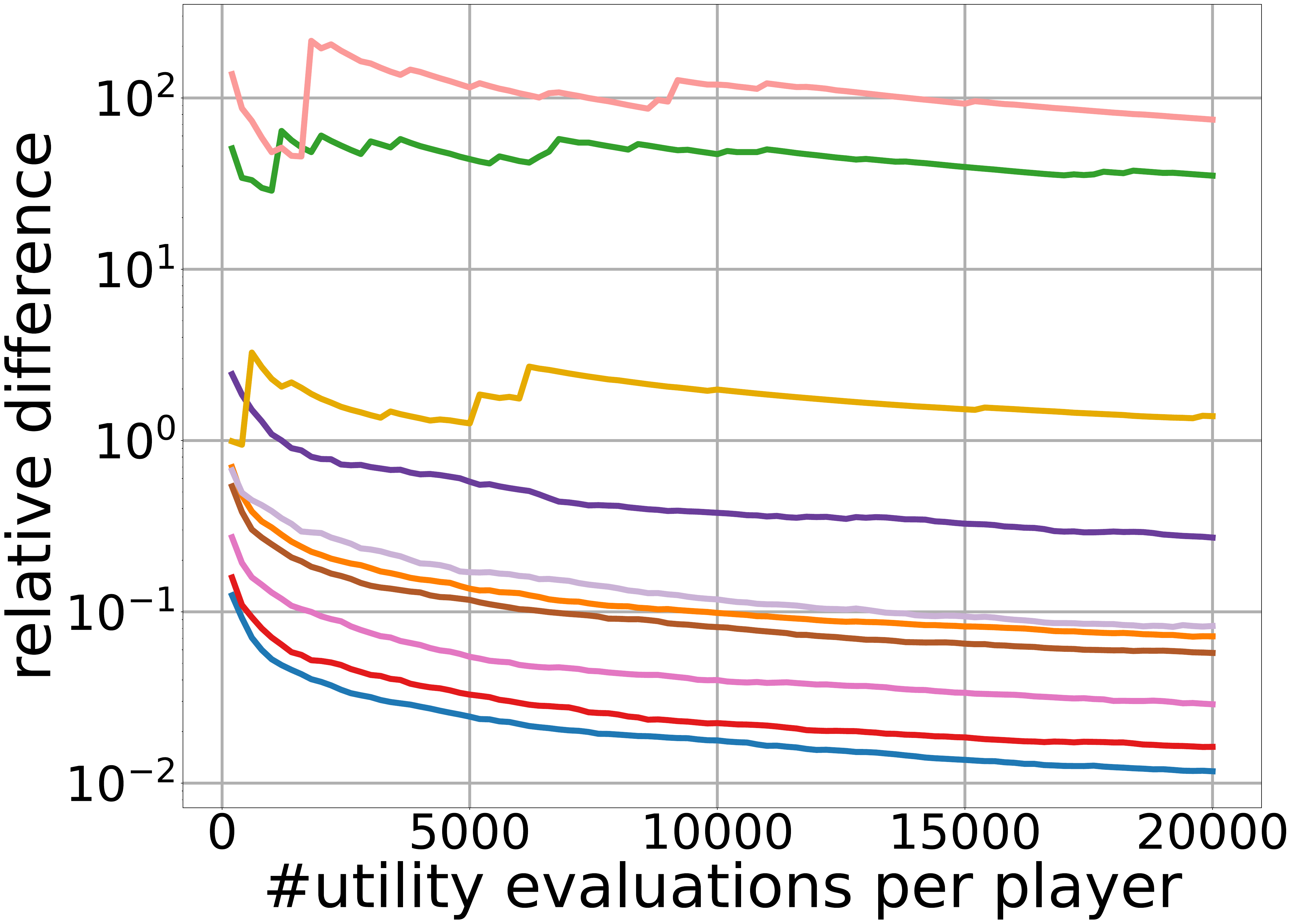} \\
			Beta$ (4,1) $ & Beta$ (1,1) $ & Beta$ (1, 4) $\\
			\includegraphics[width=0.3\linewidth]{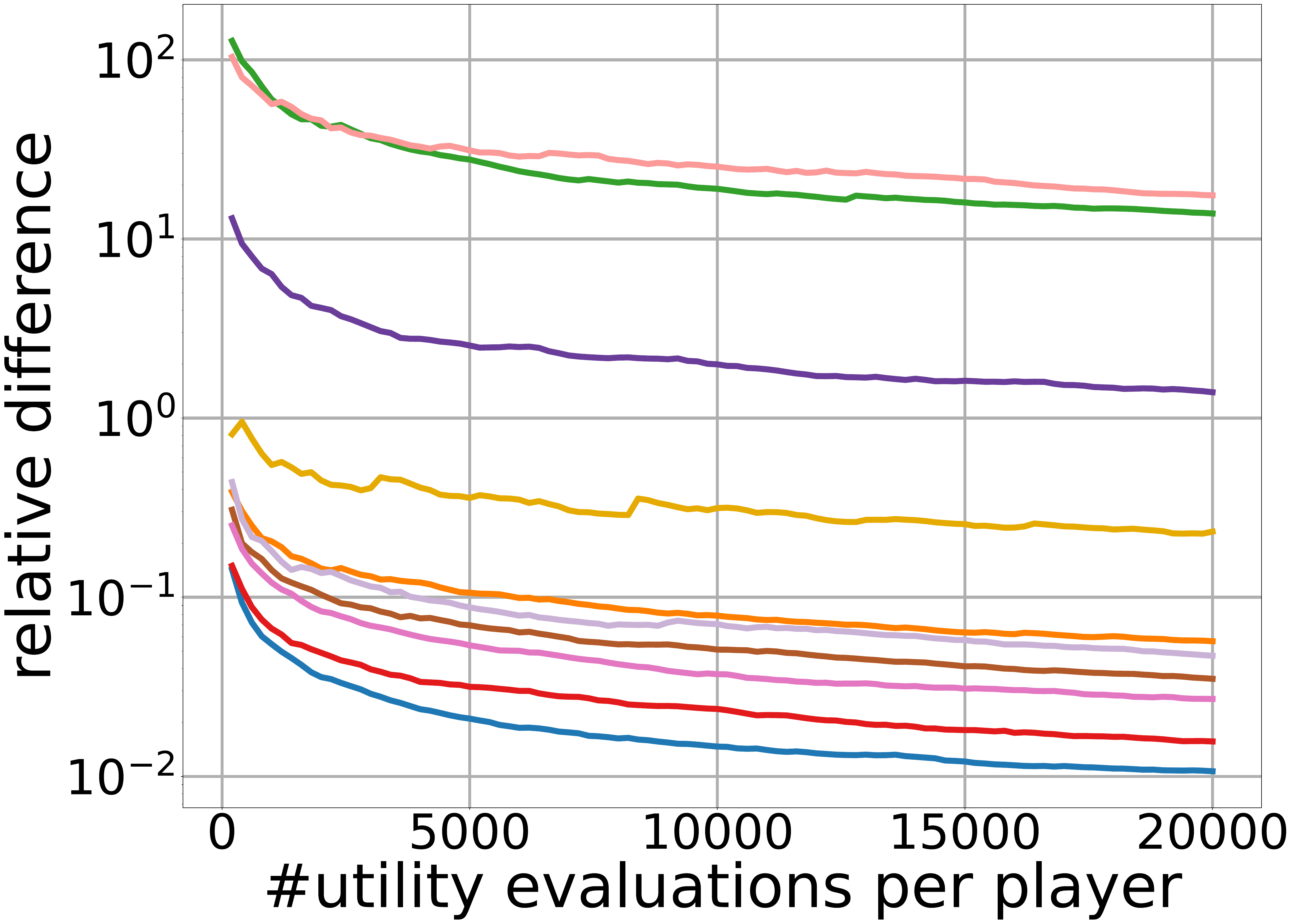} & \includegraphics[width=0.3\linewidth]{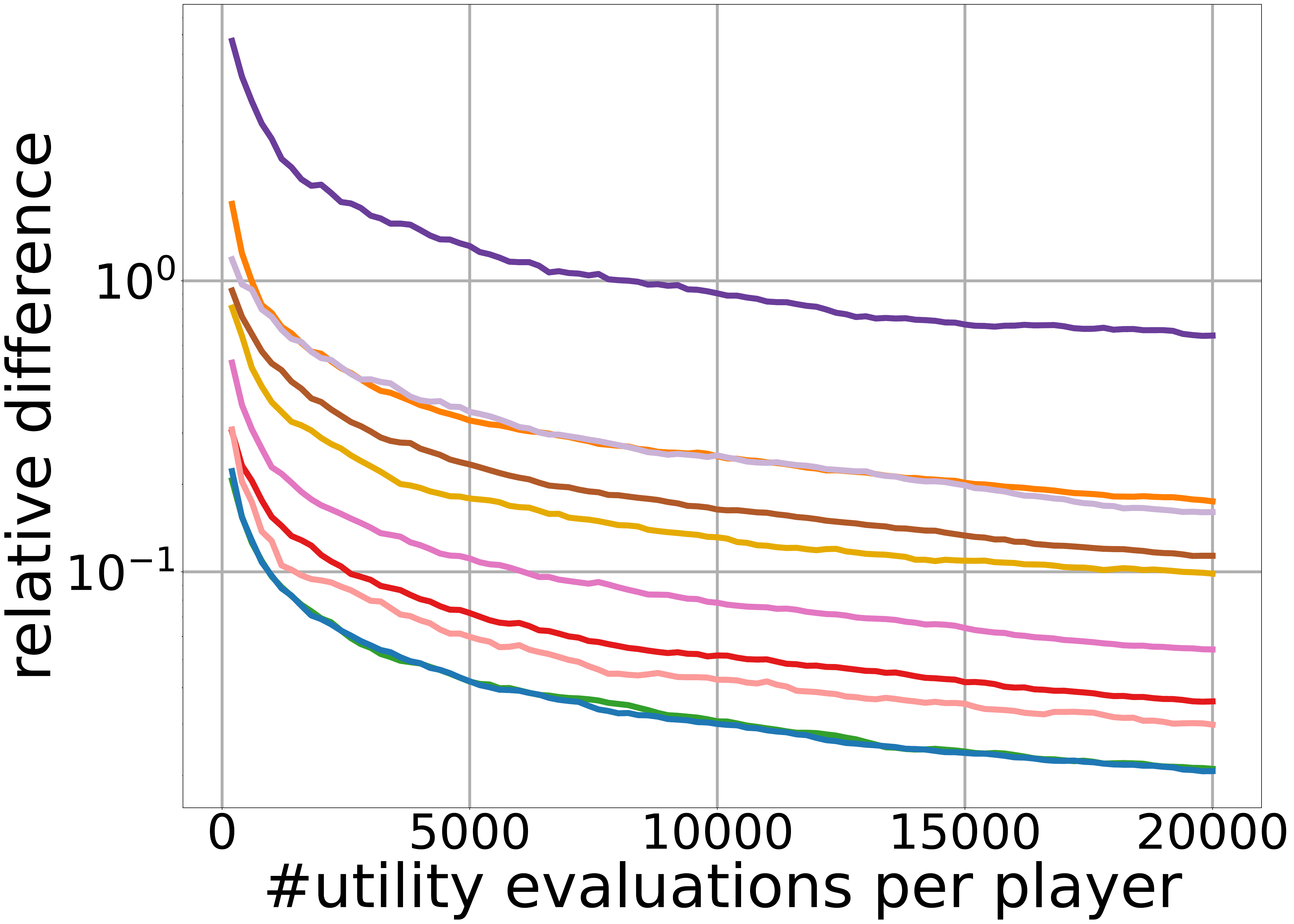} &
			\includegraphics[width=0.3\linewidth]{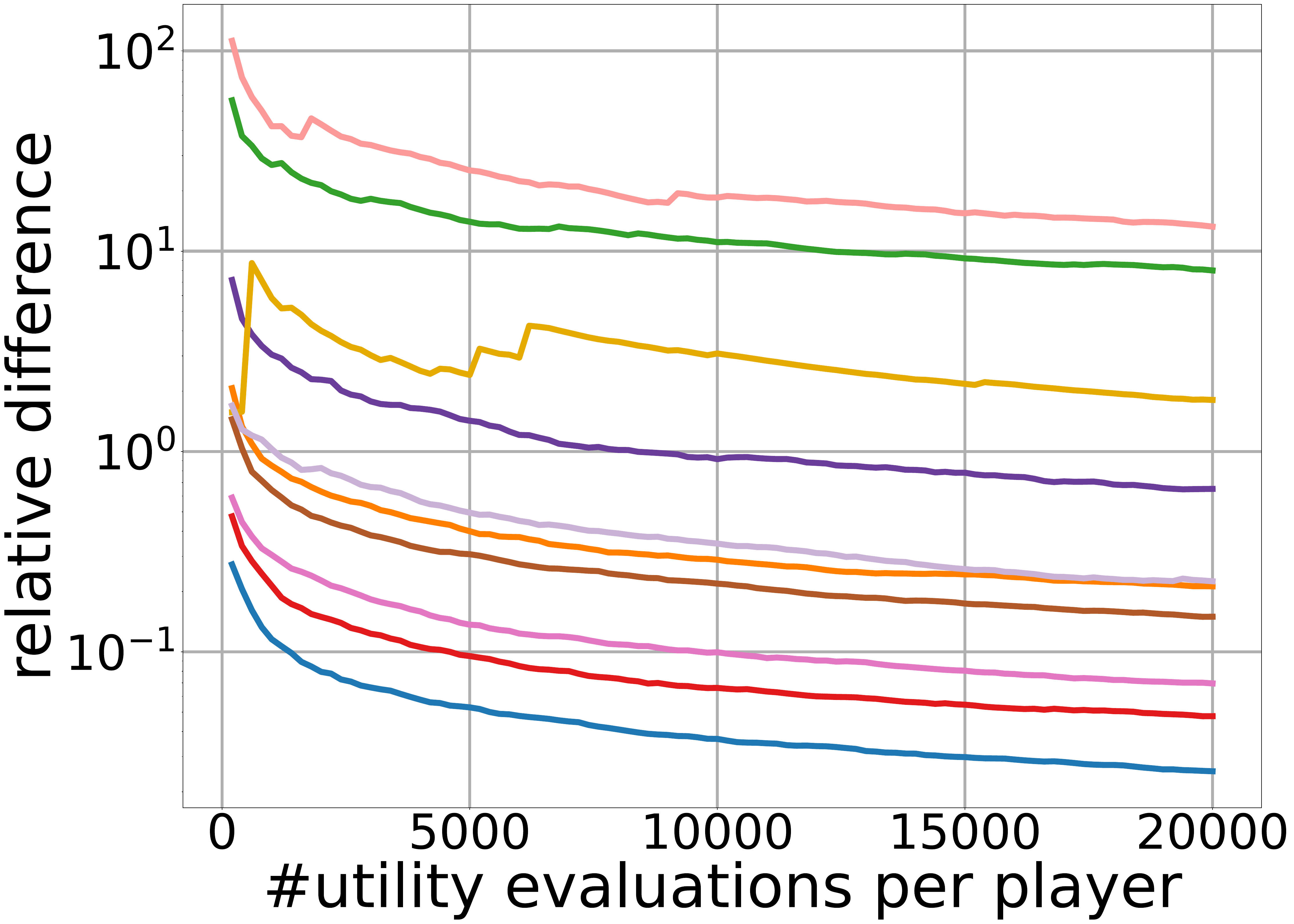} \\
			WB-$ 0.2 $ & WB-$ 0.5 $ & WB-$ 0.8 $
		\end{tabular}
		\vspace{-.5em}
		\caption{Comparison of ten one-for-all estimators. Beta$ (\alpha, \beta) $ denotes Beta Shapley values, whereas WB-$ a $ refers to weighted Banzhaf values. Our OFA-S estimator is equal to the OFA-A estimator for the Shapley value. The suffix ``Shapley'' indicates that there is no reweighting for the Shapley value, while ``Banzhaf'' stands for the Banzhaf value. The permutation estimator is originally proposed for the Shapley value. The utility function $ U $ is the cross-entropy loss of LeNet trained on $ 24 $ data from FMNIST. All the results are averaged using $ 30 $ random seeds.}
		\vspace{-1em}
		\label{fig:motivation}
	\end{figure}
	\paragraph{Empirical Evidence}
	For convenience, we formally define the two aforementioned desirable properties for estimators to possess as \textbf{P1:} The underlying formula contains no amplifying scalars and \textbf{P2:} Each sampled subset is used to update all the estimates $ \{ \hat{\phi}_{i} \}_{i=1}^{n} $.
	In Figure~\ref{fig:motivation}, we provide some experiment results while setting $ n=24 $ to support our aforementioned informal analysis. Precisely, we implement six one-for-all estimators by combining the weighted sampling technique and the previous estimators. 		Some of our observations are:
	\begin{itemize}[leftmargin=*]
		
		
		\item On WB-0.5, weightedSHAP, which satisfies \textbf{P1} but not \textbf{P2}, is empirically not comparable to MSR-Banzhaf that possesses both \textbf{P1} and \textbf{P2}. This observation supports the necessity of \textbf{P2}.

		\item On WB-0.5, SHAP-IQ sticks to \textbf{P2} but not \textbf{P1}. It is clear that SHAP-IQ also performs significantly worse than MSR-Banzhaf, which highlights the role of \textbf{P1}. 
		
		
		
		\item The sudden rises of relative differences stem from the existence of significantly large amplifying scalars. For WSL-Banzhaf on Beta$ (1,1) $, the amplifying scalar for $ U(i) - U(\emptyset) $ is as large as $ \frac{2^{24}}{24} $!
	\end{itemize}	
	In Table~\ref{tab:sum}, we summarize the previous estimators in terms of \textbf{P1} and \textbf{P2}, and defer the technical details to Appendix~\ref{app:overview}. Notably, the complement estimator is empirically the best for the Shapley value, while it is MSR for weighted Banzhaf values; both of them follow \textbf{P1} and \textbf{P2}.
	
	\begin{table}[t]
		\centering
		\caption{A scope of ``all'' indicates that the estimator is able to approximate any probabilistic value, whereas ``weighted Banzhaf'' suggests that the estimator can only approximate weighted Banzhaf values.
		\textbf{P1} refers to the property that the underlying formula does not contain any amplifying scalars \textit{for all probabilistic values in its scope}, while \textbf{P2} means whether each sampled subset is used to update all the estimates $ \{ \hat{\phi}_{i} \}_{i=1}^{n} $.} 
		\label{tab:sum}
		\resizebox{\columnwidth}{!}{
			\begin{tabular}{cccccccc}
				\toprule
				& \stackanchor{WSL}{\citep{kwon2022beta}} & \stackanchor{SL}{\citep{moehle2021portfolio}} & \stackanchor{GELS}{\citep{li2024faster}} & \stackanchor{ARM}{\citep{kolpaczki2024approximating}} & \stackanchor{MSR}{\citep{wang2023data}} & \stackanchor{SHAP-IQ}{\citep{fumagalli2024shap}} & \stackanchor{weightedSHAP}{\citep{kwon2022weightedshap}} \\ \midrule
				scope & all & Shapley & all & all & weighted Banzhaf & all & all\\
				\textbf{P1} & \xmark & \cmark & \xmark & \cmark & \cmark & \xmark & \cmark \\
				\textbf{P2} & \xmark & \xmark & \xmark & \xmark & \cmark & \cmark & \xmark \\\midrule		
				& \stackanchor{permutation}{\citep{castro2009polynomial}} & \stackanchor{kernelSHAP}{\citep{lundberg2017unified}} & \stackanchor{unbiased kernelSHAP}{\citep{covert2021improving}} & \stackanchor{group testing}{\citep{wang2023note}} & \stackanchor{complement}{\citep{zhang2023efficient}} & \stackanchor{AME}{\citep{lin2022measuring}} & OFA (ours) \\
				scope & Shapley & Shapley & Shapley & Shapley & Shapley & partial & all\\
				\textbf{P1} & \cmark & \xmark  & \xmark & \xmark & \cmark & \xmark & \cmark \\
				\textbf{P2} & \xmark & \cmark & \cmark & \xmark & \cmark & \cmark & \cmark \\\bottomrule
			\end{tabular}
		}
	\end{table}
	
	\section{Main Results} \label{sec:main}
	The framework we propose is built upon
	\begin{equation}\label{eq:ofa}
		\begin{gathered}
			\phi_{i} = \sum_{s=1}^{n} m_{s}\cdot\left( \underset{\substack{i \in R\\r=s}}{\mathbb{E}}[U(R)] - \underset{\substack{i\not\in R\\r=s-1}}{\mathbb{E}}[U(R)] \right) 
		\end{gathered}
	\end{equation}
	where $ m_{s} = \binom{n-1}{s-1} p_s$ and each expectation is taken over the corresponding uniform distribution.
	For simplicity, we write $ \phi_{i,s}^{+} = \mathbb{E}_{i\in R, r=s}[U(R)] $ and $ \phi_{i,s-1}^{-} = \mathbb{E}_{i\not\in R, r=s-1}[U(R)] $.
	Clearly, there is no amplifying scalars in Eq.~\eqref{eq:ofa}. Meanwhile, the structure of Eq.~\eqref{eq:ofa} suits the principle of maximum sample reuse. 
	Since $ \{ \phi_{i, k}^{+} \}_{k=1,n-1,n} $ and $ \{ \phi_{i,k}^{-} \}_{k=0,1,n-1} $ can be calculated exactly using $ 2n+2 $ utility evaluations of $ U $, our focus is to efficiently approximate $ \{ \phi_{i,s}^{+}, \phi_{i,s}^{-} \}_{2\leq s\leq n-2} $. 
	The resulting framework is demonstrated in Algorithm~\ref{alg:ofa}; $ q_{j} $ refers to the probability of drawing a subset of $ [n] $ with size $ j+1 $.
	
	\begin{figure}[t]
		\begin{algorithm}[H]
			\DontPrintSemicolon
			\caption{The \textbf{O}ne-Sample-\textbf{F}its-\textbf{A}ll (OFA) Framework}
			\label{alg:ofa}
			\KwIn{A utility function $ U: 2^{[n]} \to \mathbb{R} $, a positive probability vector $ \mathbf{q} \in \mathbb{R}^{n-3} $, and a total number $ T $ of samples}
			
			\KwOut{Estimates to $ \phi^{+}_{i,k^{+}} $ and $ \phi^{-}_{i,k^{-}} $ with $ i, k^{+} \in [n] $ and $ 0\leq k^{-} \leq n-1 $} 
			
			$ \hat{\phi}_{i,k^{+}}^{+} \gets \phi_{i,k^{+}}^{+} $ and $ \hat{\phi}_{i,k^{-}}^{-} \gets \phi_{i,k^{-}}^{-} $ for $ i \in [n] $, $ k^{+} \in \{ 1,n-1,n \} $ and $ k^{-} \in \{ 0,1,n-1 \} $
			
			$ \hat{\phi}_{i,k}^{+} \gets 0 $, $ T^{+}_{i,k} \gets 0 $, $ \hat{\phi}_{i,k}^{-} \gets 0 $ and $ T^{-}_{i,k} \gets 0 $ with $ i\in [n] $ and $ 2\leq k \leq n-2 $

			\For{$ t = 1,2,\dots, T $}{
				Sample $ s_{t} $ from $ \{ 2,3,\dots,n-2 \} $ according to $ \mathbf{q} $
				
				Sample $ S_{t} $ uniformly from $ \{ R \subseteq [n] \mid r=s_{t} \} $
				
				$ v \gets U(S_{t}) $
				
				\For{$ i=1,2,\dots,n $}{
					\uIf{$ i \in S_{t} $}{
						$ \hat{\phi}_{i,s_{t}}^{+} \gets \frac{T_{i,s_{t}}^{+}}{T_{i,s_{t}}^{+} + 1}\hat{\phi}_{i,s_{t}}^{+} + \frac{1}{T_{i,s_{t}}^{+} + 1} v $ and $ T_{i,s_{t}}^{+} \gets T_{i,s_{t}}^{+} + 1 $	
					}
					\uElse{
						$ \hat{\phi}_{i,s_{t}}^{-} \gets \frac{T_{i,s_{t}}^{-}}{T_{i,s_{t}}^{-} + 1}\hat{\phi}_{i,s_{t}}^{-} + \frac{1}{T_{i,s_{t}}^{-} + 1} v $ and $ T_{i,s_{t}}^{-} \gets T_{i,s_{t}}^{-} + 1 $	
					}
				}
			}
			
			\textbf{Aggregation Phase:} $ \hat{\phi}_{i} = \sum_{s=1}^{n} m_{s}(\hat{\phi}_{i,s}^{+} - \hat{\phi}_{i,s-1}^{-}) $
		\end{algorithm}
		\vspace{-.5em}
	\end{figure}
	
	To facilitate the choice of the sampling vector $ \mathbf{q} \in \mathbb{R}^{n-3} $ appearing in Algorithm~\ref{alg:ofa}, our first step is to theoretically ascertain a key formula that effectively determines the convergence rate of Algorithm~\ref{alg:ofa}.
	
	\begin{restatable}{theorem}{convergenceFormula} \label{thm:convergence}
		Assume i) $ \| U \|_{\infty} \leq u $ and ii) $ 0<\epsilon\leq \sqrt{2D(\mathbf{m}, \mathbf{q})\gamma(\mathbf{q})^{2}u^{2}} $. For $ \hat{\boldsymbol\phi} $ in Algorithm~\ref{alg:ofa}, it requires $ \frac{4nu^{2}D(\mathbf{m}, \mathbf{q})}{\epsilon^{2}}\log\frac{8n^{2}}{\delta} $ evaluations of $ U $ to achieve $ P(\| \hat{\boldsymbol\phi} - \boldsymbol\phi \|_{2} \geq \epsilon) \leq \delta $ where
		\begin{equation}
			\begin{gathered}
				D(\mathbf{m}, \mathbf{q}) = \sum_{s=2}^{n-2}\frac{n}{q_{s-1}}\left( \frac{m_{s}^{2}}{s} + \frac{m_{s+1}^{2}}{n-s} \right) \text{ and } \gamma(\mathbf{q}) = \min_{2\leq s\leq n-2} \min\left( \frac{q_{s-1}\cdot s}{n},\ \frac{q_{s-1}\cdot (n-s)}{n} \right).
			\end{gathered}
		\end{equation}
	\end{restatable}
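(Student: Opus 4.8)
The plan is to fix a player $i$, write its error $\hat{\boldsymbol\phi}_i-\boldsymbol\phi_i$ as a \emph{single} sum of $T$ independent, mean-zero, bounded increments, and apply Hoeffding's inequality once to that whole sum rather than controlling each of the $2(n-3)$ buckets separately. Organizing the error this way is what keeps the final rate at order $nD$ instead of $n^2 D$. Since the boundary estimates $\hat\phi^{+}_{i,k^{+}}$ ($k^{+}\in\{1,n-1,n\}$) and $\hat\phi^{-}_{i,k^{-}}$ ($k^{-}\in\{0,1,n-1\}$) are computed exactly, reindexing the aggregation phase of Algorithm~\ref{alg:ofa} gives
\[
\hat{\boldsymbol\phi}_i-\boldsymbol\phi_i=\sum_{s=2}^{n-2}\Bigl(m_s(\hat\phi^{+}_{i,s}-\phi^{+}_{i,s})-m_{s+1}(\hat\phi^{-}_{i,s}-\phi^{-}_{i,s})\Bigr).
\]
Writing $a_s:=q_{s-1}s/n$ and $b_s:=q_{s-1}(n-s)/n$ for the probabilities that one sampled subset lands in bucket $(i,s,+)$ and $(i,s,-)$ respectively, one checks directly that $D(\mathbf{m},\mathbf{q})=\sum_{s=2}^{n-2}\bigl(m_s^2/a_s+m_{s+1}^2/b_s\bigr)$ and $\gamma(\mathbf{q})=\min_{s}\min(a_s,b_s)$. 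Conditioning on the ``routing'' $\mathcal{R}_i$ that records, for every round $t$, the size $s_t$ and whether $i\in S_t$, each $\hat\phi^{\pm}_{i,s}$ becomes an average of i.i.d.\ uniform draws, so $\hat{\boldsymbol\phi}_i-\boldsymbol\phi_i=\sum_{t=1}^{T}W_{i,t}$ where the $W_{i,t}$ are conditionally independent and mean-zero, and $W_{i,t}$ lies in an interval of length at most $2m_{s}u/N^{+}_{i,s}$ (resp.\ $2m_{s+1}u/N^{-}_{i,s}$) on the bucket that round $t$ feeds, with $N^{\pm}_{i,s}$ the random bucket counts. Because each round reuses one evaluation $v\gets U(S_t)$ across all players, $T$ rounds cost exactly $T$ evaluations, plus the $2n+2$ exact boundary evaluations.

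I expect the main obstacle to be that the bucket counts $N^{\pm}_{i,s}$ are random and sit in the denominators of the increments. I would handle this with a two-event split. Let $E_1$ be the event that every count satisfies $N^{+}_{i,s}\ge\tfrac12 Ta_s$ and $N^{-}_{i,s}\ge\tfrac12 Tb_s$. Since $N^{+}_{i,s}\sim\mathrm{Bin}(T,a_s)$ with $a_s\ge\gamma(\mathbf{q})$ (and likewise for $N^{-}_{i,s}$), a multiplicative Chernoff bound gives $P(N^{+}_{i,s}<\tfrac12 Ta_s)\le\exp(-Ta_s/8)\le\exp(-T\gamma(\mathbf{q})/8)$, and a union bound over the at most $2n(n-3)\le 2n^2$ buckets yields $P(E_1^{c})\le 2n^2\exp(-T\gamma(\mathbf{q})/8)$. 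Here assumption (ii), $\epsilon^2\le 2D\gamma^2 u^2$, is exactly what is needed: it forces $u^2D/\epsilon^2\ge 1/(2\gamma^2)$, hence $T\ge (2n/\gamma^2)\log(8n^2/\delta)$ and thus $T\gamma(\mathbf{q})/8\ge (n/4\gamma)\log(8n^2/\delta)\ge\log(8n^2/\delta)$ (using $n/4\gamma\ge1$), so that $P(E_1^{c})\le\delta/4$.

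On $E_1$ I would apply Hoeffding conditionally on $\mathcal{R}_i$. The decisive computation is that the squared interval lengths telescope cleanly: summing $(2m_s u/N^{+}_{i,s})^2$ over the $N^{+}_{i,s}$ rounds of each $+$ bucket (and likewise for the $-$ buckets) and using $N^{+}_{i,s}\ge\tfrac12 Ta_s$ makes the random counts cancel against the $m_s^2/a_s$ weights,
\[
\sum_{t=1}^{T}c_{i,t}^2\le\frac{8u^2}{T}\sum_{s=2}^{n-2}\Bigl(\frac{m_s^2}{a_s}+\frac{m_{s+1}^2}{b_s}\Bigr)=\frac{8u^2 D(\mathbf{m},\mathbf{q})}{T}.
\]
Hoeffding then gives $P\bigl(|\hat{\boldsymbol\phi}_i-\boldsymbol\phi_i|\ge\epsilon/\sqrt{n}\mid\mathcal{R}_i\bigr)\le 2\exp\bigl(-\epsilon^2 T/(4nu^2 D)\bigr)$ on $E_1$, which equals $\delta/(4n^2)$ for the stated $T$. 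A union bound over the $n$ players, together with $\|\hat{\boldsymbol\phi}-\boldsymbol\phi\|_2\le\sqrt{n}\max_i|\hat{\boldsymbol\phi}_i-\boldsymbol\phi_i|$, bounds the failure probability on $E_1$ by $\delta/4$; adding $P(E_1^{c})\le\delta/4$ proves the claim. The crux is precisely this interplay: establishing count concentration strong enough (which is why assumption (ii) is imposed) while keeping the increments organized as one length-$T$ sum, so that the $a_s$-cancellation reproduces $D(\mathbf{m},\mathbf{q})$ and the rate stays at order $nD$ rather than $n^2D$.
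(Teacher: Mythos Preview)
Your proposal is correct and follows essentially the same approach as the paper: condition on the routing, control the bad-routing event via concentration of the bucket counts (you use multiplicative Chernoff with a fixed factor-$\tfrac12$ threshold, the paper uses additive Hoeffding on the ratios with a tunable $\omega$ later matched to $\epsilon$), apply Hoeffding conditionally on the good event so that the squared ranges collapse to $D(\mathbf m,\mathbf q)$, and union-bound over players. These are minor execution differences; the structure and the role of assumption~(ii) in bounding the bad-routing probability are the same in both arguments.
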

	We remark that $D(\mv, \qv)$ is jointly convex in $\mv$ and $\qv$.	
	The second assumption in Theorem~\ref{thm:convergence} can be removed if we pre-allocate the number of sampled subsets for each $ \phi_{i,s}^{+} $ or $ \phi_{i,s}^{-} $ and draw subsets in a predetermined order; see the proof in Appendix~\ref{app:theorem 1} for details. Precisely, let $ T_{i,s}^{+} $ be the number of subsets for estimating $ \phi_{i,s}^{+} $, and define $ T_{i,s}^{-} $ similarly; then the pre-allocated numbers are $ T_{i,s}^{+} \approx \frac{s\cdot q_{s-1}}{n} T $ and $ T_{i,s}^{-} \approx \frac{(n-s)q_{s-1}}{n} T $, which are the expected values of $ T_{i,s}^{+} $ and $ T_{i,s}^{+} $ while using Algorithm~\ref{alg:ofa}; $ T $ refers to the total number of sampled subsets. By Theorem~\ref{thm:convergence}, the convergence rate of Algorithm~\ref{alg:ofa} is $ O(D(\mathbf{m}, \mathbf{q})\cdot n\log n) $, and thus achieving the currently best convergence rate $ O(n\log n) $ requires $ D(\mathbf{m}, \mathbf{q}) \in O(1) $.
	
	\subsection{A One-For-All Estimator}
	To obtain our one-for-all estimator, our goal is to find a $ \mathbf{q}^{\text{OFA-A}} $ such that $ D(\mathbf{m}, \mathbf{q}^{\text{OFA-A}}) \in O(1) $ for as many $ \mathbf{m} $ as possible.  To this end, we define $ \mathbf{q}^{\text{OFA-A}} $ to be the uniquely optimal solution to
	\begin{equation}
		\begin{gathered}
			\argmin_{\mathbf{q} \in \mathbb{R}^{n-3}} \overline{D}(\mathbf{q}) = \int_{\mathbf{m}\in\Delta} D(\mathbf{m}, \mathbf{q}) \mathrm{d}\nu(\mathbf{m})
		\end{gathered}
	\end{equation} 
	where $ \Delta = \{ \mathbf{m} \in \mathbb{R}^{n} \mid m_{s} \geq 0 \text{ and } \sum_{s=1}^{n} m_{s} = 1 \} $ and $ \nu $ is the uniform distribution on $ \Delta $. In our work, our OFA-A estimator refers to the use of $ \mathbf{q}^{\text{OFA-A}} $ in Algorithm~\ref{alg:ofa}.
	
	\begin{restatable}{proposition}{ofaEstimator} \label{prop:ofa-a}
		$ \mathbf{q}^{\text{OFA-A}}_{s-1} \propto \frac{1}{\sqrt{(s)(n-s)}}  $ and $ \overline{D}(\mathbf{q}^{\text{OFA-A}}) \in O(1) $. In other words, our OFA-A estimator achieves the convergence rate of $ O(n\log n) $ simultaneously for all probabilistic values on average.
	\end{restatable}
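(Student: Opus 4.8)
The plan is to integrate $D(\mathbf{m},\mathbf{q})$ over $\mathbf{m}$ to obtain a closed form for $\overline{D}(\mathbf{q})$, minimize the result over the simplex of sampling vectors, and then bound the optimal value. Because $D(\mathbf{m},\mathbf{q})$ is quadratic in $\mathbf{m}$ with no cross terms, the integration only needs the second moments $\int_{\Delta} m_s^2\,\mathrm{d}\nu(\mathbf{m})$. The uniform law $\nu$ on $\Delta$ is the Dirichlet distribution with all parameters equal to $1$, so each marginal $m_s$ is $\mathrm{Beta}(1,n-1)$; first I would use this to show $\int_{\Delta} m_s^2\,\mathrm{d}\nu = \frac{2}{n(n+1)}$, a constant independent of $s$. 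Since this same constant multiplies both $m_s^2$ and $m_{s+1}^2$, substituting it and using $\frac1s+\frac1{n-s}=\frac{n}{s(n-s)}$ gives
$$\overline{D}(\mathbf{q}) = \frac{2n}{n+1}\sum_{s=2}^{n-2}\frac{1}{q_{s-1}\,s(n-s)}.$$

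Next I would minimize $\sum_{s}\frac{a_s}{q_{s-1}}$ with $a_s:=\frac{1}{s(n-s)}$ over $\{\mathbf{q}:\sum_s q_{s-1}=1,\ q_{s-1}>0\}$. Applying Cauchy--Schwarz in the form $\bigl(\sum_s\sqrt{a_s}\bigr)^2 \le \bigl(\sum_s\frac{a_s}{q_{s-1}}\bigr)\bigl(\sum_s q_{s-1}\bigr)$ lower bounds the objective by $\bigl(\sum_s\sqrt{a_s}\bigr)^2$, with equality iff $q_{s-1}\propto\sqrt{a_s}$; this yields $q^{\text{OFA-A}}_{s-1}\propto\frac{1}{\sqrt{s(n-s)}}$, the first claim. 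Uniqueness follows from the strict convexity of $q\mapsto a/q$ on $q>0$ (consistent with the joint-convexity remark after Theorem~\ref{thm:convergence}) together with the fact that the objective diverges as any $q_{s-1}\to0$, forcing an interior minimizer.

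Plugging the optimal $\mathbf{q}$ back gives $\overline{D}(\mathbf{q}^{\text{OFA-A}}) = \frac{2n}{n+1}\bigl(\sum_{s=2}^{n-2}\frac{1}{\sqrt{s(n-s)}}\bigr)^2$, so the whole claim reduces to bounding $\sum_{s=2}^{n-2}\frac{1}{\sqrt{s(n-s)}}$ by a constant. This is the step I expect to be the main obstacle, since the sum has $\Theta(n)$ terms yet must stay $O(1)$; the guiding intuition is that it is a Riemann sum for $\int_0^1\frac{\mathrm{d}x}{\sqrt{x(1-x)}}=\pi$. Rigorously, I would split at $s=\lfloor n/2\rfloor$, use $n-s\ge n/2$ on the lower half to get $\frac{1}{\sqrt{s(n-s)}}\le\sqrt{\frac{2}{n}}\,\frac{1}{\sqrt{s}}$, and compare $\sum_s\frac{1}{\sqrt{s}}$ to $\int\frac{\mathrm{d}x}{\sqrt{x}}$, obtaining a bound of $2$ per half and hence $4$ overall. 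Since $\frac{2n}{n+1}<2$, this gives $\overline{D}(\mathbf{q}^{\text{OFA-A}})<32\in O(1)$. Finally, combining $\overline{D}(\mathbf{q}^{\text{OFA-A}})\in O(1)$ with Theorem~\ref{thm:convergence}, whose rate is $O(D(\mathbf{m},\mathbf{q})\cdot n\log n)$, delivers the stated $O(n\log n)$ convergence rate for all probabilistic values on average.
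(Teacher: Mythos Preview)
Your argument is correct and follows the same overall skeleton as the paper's proof: compute $\int_{\Delta} m_s^2\,\mathrm{d}\nu$, observe it is independent of $s$, reduce $\overline{D}(\mathbf{q})$ to $\text{const}\cdot\sum_s \frac{1}{q_{s-1}s(n-s)}$, optimize $\mathbf{q}$, and bound the resulting $\bigl(\sum_s \frac{1}{\sqrt{s(n-s)}}\bigr)^2$. Your constant $\frac{2}{n(n+1)}$ for the second moment coincides exactly with the paper's $\frac{(n-1)!}{\prod_{k=1}^{n-1}(2+k)}$.

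The main methodological difference is in how the integration over $\Delta$ is carried out. The paper builds an explicit affine homeomorphism from a corner simplex in $\mathbb{R}^{n-1}$ onto $\Delta$, checks that the Jacobian factor is $1$, and evaluates $\int y_k^2$ by iterated one-dimensional integrals. You instead recognize $\nu$ as the symmetric Dirichlet law and read off $\mathbb{E}[m_s^2]=\frac{2}{n(n+1)}$ directly from the Beta$(1,n-1)$ marginal; this is shorter and avoids the change-of-variables bookkeeping. For the optimization, the paper invokes KKT while you use the Cauchy--Schwarz equality case; these are equivalent here. For the final $O(1)$ bound, the paper appeals to the Riemann-sum heuristic $\frac{1}{n}\sum_s \frac{1}{\sqrt{(s/n)(1-s/n)}}\to\int_0^1\frac{\mathrm{d}x}{\sqrt{x(1-x)}}=\pi$ (incidentally, the printed integrand $\frac{1}{x(1-x)}$ is a typo), whereas your splitting-plus-integral-comparison argument gives a larger constant ($32$ versus roughly $\pi^2$) but is fully rigorous without needing to justify that the Riemann sum for an endpoint-singular integrand is bounded by the integral.
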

	
	Our next proposition  provides a condition on $ \mu $ for semi-values such that our OFA-A estimator achieves the convergence rate of $ O(n\log n) $. In other words, we explicitly identify a subfamily of semi-values for which our OFA-A estimator achieves the currently best time complexity simultaneously.
	
	\begin{restatable}{proposition}{ofaConvergence}
		Our OFA-A estimator achieves the convergence rate of $ O(n\log n) $ simultaneously for all semi-values whose probability density functions exist and are bounded. Particularly, Beta Shapley values with $ \alpha,\beta \geq 1 $ all satisfy this condition.
	\end{restatable}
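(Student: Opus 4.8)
The plan is to reduce the entire statement to the single estimate $D(\mathbf{m}, \mathbf{q}^{\text{OFA-A}}) \in O(1)$. Indeed, by the remark following Theorem~\ref{thm:convergence} the convergence rate of Algorithm~\ref{alg:ofa} is $O(D(\mathbf{m},\mathbf{q})\cdot n\log n)$, so a bound on $D$ that is independent of $n$ immediately delivers the claimed $O(n\log n)$ rate. Using Proposition~\ref{prop:ofa-a}, I would first write the normalized sampling vector as $q^{\text{OFA-A}}_{s-1} = \frac{1}{Z\sqrt{s(n-s)}}$ with $Z = \sum_{s=2}^{n-2}\frac{1}{\sqrt{s(n-s)}}$, so that $\frac{n}{q^{\text{OFA-A}}_{s-1}} = nZ\sqrt{s(n-s)}$.

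Next I would control the two quantities feeding into $D$. For the normalizer, splitting the sum at $s=n/2$ and comparing each monotone half to the arcsine integral $\int_0^n \frac{\mathrm{d}s}{\sqrt{s(n-s)}} = \pi$ gives $Z = \pi + o(1) \in O(1)$. For the weights, I would invoke the hypothesis that the generating measure has a bounded density, say $\mathrm{d}\mu(w) = f(w)\,\mathrm{d}w$ with $f \leq M$. Combining $m_s = \binom{n-1}{s-1}\int_0^1 w^{s-1}(1-w)^{n-s}f(w)\,\mathrm{d}w$ with the Beta identity $\binom{n-1}{s-1}\int_0^1 w^{s-1}(1-w)^{n-s}\,\mathrm{d}w = \frac{1}{n}$ yields the uniform bound $m_s \leq \frac{M}{n}$ for every $s$.

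The heart of the argument is then a short calculation. Substituting $m_s, m_{s+1} \leq M/n$ into the definition of $D$ gives
\[
D(\mathbf{m}, \mathbf{q}^{\text{OFA-A}}) \leq \frac{ZM^2}{n}\sum_{s=2}^{n-2}\sqrt{s(n-s)}\left( \frac{1}{s} + \frac{1}{n-s} \right),
\]
and the key simplification is the collapsing identity $\sqrt{s(n-s)}\big(\frac{1}{s}+\frac{1}{n-s}\big) = \sqrt{\tfrac{n-s}{s}} + \sqrt{\tfrac{s}{n-s}} = \frac{n}{\sqrt{s(n-s)}}$, which turns the sum back into a second copy of $Z$. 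Hence $D(\mathbf{m},\mathbf{q}^{\text{OFA-A}}) \leq Z^2 M^2 \in O(1)$, uniformly in $n$.

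Finally, for Beta$(\alpha,\beta)$ I would verify that the density $w^{\beta-1}(1-w)^{\alpha-1}$, up to its normalizing constant $B(\beta,\alpha)$, is bounded on $[0,1]$ exactly when $\alpha,\beta \geq 1$: both exponents are then nonnegative, so each factor is at most $1$ and the density is bounded by $1/B(\beta,\alpha)$; for $\alpha<1$ or $\beta<1$ it blows up at an endpoint, which is precisely where the argument fails. The main obstacle is not analytic depth but getting the uniform weight bound $m_s \leq M/n$ to hold across all sampled $s$, since boundedness of $f$ is the only place the restriction $\alpha,\beta\ge 1$ enters; once that bound is secured, the telescoping identity does the real work and $Z = O(1)$ closes the estimate.
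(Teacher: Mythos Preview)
Your proposal is correct and follows essentially the same route as the paper: bound $m_s \le M/n$ via the Beta integral identity, substitute into $D(\mathbf{m},\mathbf{q}^{\text{OFA-A}})$, and use the collapsing identity $\sqrt{s(n-s)}\bigl(\tfrac{1}{s}+\tfrac{1}{n-s}\bigr)=\tfrac{n}{\sqrt{s(n-s)}}$ to obtain $D \le Z^2 M^2 < \pi^2 M^2$. The paper performs the same computation more tersely and does not spell out the Beta$(\alpha,\beta)$ verification, which you include; otherwise the arguments are identical.
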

	
	
	To our knowledge, the previous theoretically-fastest estimator for the Shapley value is demonstrated by \citet[Theorem 6]{wang2023note} as $ O(n(\log n)^{2}) $. By contrast, our OFA-A estimator achieves the convergence rate of $ O(n\log n) $. Meanwhile, it also surpasses the previous best time complexity for Beta Shapley values with $ (\alpha=1, \beta>1) $ or $ (\alpha>1, \beta=1) $, which is $ O(n(\log n)^{3}) $ \citep[Proposition 4 and Remark 3]{li2024faster}. Remarkably, our OFA-A estimator enjoys this fastest convergence rate \emph{simultaneously for a broad subfamily of probabilistic values} .
	
	\begin{restatable}{proposition}{ofaWeighted} \label{prop:ofaWeighted}
		If $ p_{s} = a^{s-1}(1-a)^{n-s} $ with $ 0<a<1 $, which corresponds to the weighted Banzhaf value parameterized by $ w $, then $ D(\mathbf{m}, \mathbf{q}^{\text{OFA-A}}) \in O(n^{\frac{1}{2}}) $. In other words, the OFA estimator achieves the convergence rate of $ O(n^{\frac{3}{2}}\log n) $ simultaneously for all WB-$ a $ with $ 0<a<1 $.
	\end{restatable}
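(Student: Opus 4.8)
The plan is to substitute the explicit minimiser $\mathbf{q}^{\text{OFA-A}}_{s-1} \propto 1/\sqrt{s(n-s)}$ supplied by Proposition~\ref{prop:ofa-a} into $D(\mathbf{m},\mathbf{q})$ and then exploit the fact that, for weighted Banzhaf values, the vector $\mathbf{m}$ is \emph{exactly} a binomial probability mass function. Writing $q^{\text{OFA-A}}_{s-1} = 1/(Z\sqrt{s(n-s)})$ with $Z = \sum_{s=2}^{n-2} 1/\sqrt{s(n-s)}$, I would first record that $Z = \Theta(1)$: comparing the sum to $\int_0^n \mathrm{d}s/\sqrt{s(n-s)} = \pi$ (a Beta integral independent of $n$) pins $Z$ between two absolute constants. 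Substituting $n/q^{\text{OFA-A}}_{s-1} = nZ\sqrt{s(n-s)}$ collapses the two fractions in $D$ and yields
\begin{equation}
	D(\mathbf{m},\mathbf{q}^{\text{OFA-A}}) = nZ \sum_{s=2}^{n-2} \left( \sqrt{\tfrac{n-s}{s}}\, m_s^2 + \sqrt{\tfrac{s}{n-s}}\, m_{s+1}^2 \right),
\end{equation}
so it remains to prove that the bracketed sum is $O(n^{-1/2})$.

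Next I would identify $m_s = \binom{n-1}{s-1} a^{s-1}(1-a)^{n-s}$ as the $\mathrm{Binomial}(n-1,a)$ mass at $s-1$, so that $\sum_s m_s = 1$. The main tool is the elementary inequality $\sum_s w_s m_s^2 \le (\max_s m_s)\sum_s w_s m_s$, valid for any nonnegative weights $w_s$, which I would apply with $w_s = \sqrt{(n-s)/s}$ and (after reindexing $s \mapsto s+1$) with $w_s = \sqrt{s/(n-s)}$. The factor $\max_s m_s$ is the peak of the binomial, which Stirling's approximation bounds by $O\big((na(1-a))^{-1/2}\big) = O(n^{-1/2})$ for fixed $a$. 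It then suffices to show that the weighted sums $\sum_s \sqrt{(n-s)/s}\, m_s = \mathbb{E}\big[\sqrt{(n-s)/s}\big]$ and its counterpart are $O(1)$; combining the two estimates gives a bracketed sum of size $O(n^{-1/2}) \cdot O(1)$, whence $D(\mathbf{m},\mathbf{q}^{\text{OFA-A}}) = nZ \cdot O(n^{-1/2}) = O(n^{1/2})$.

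The hard part will be controlling these weighted expectations near the endpoints, where the weight diverges: $\sqrt{(n-s)/s}$ grows like $\sqrt{n}$ as $s \to 2$, and $\sqrt{s/(n-s)}$ grows like $\sqrt{n}$ as $s \to n-2$. The remedy is that precisely in those regions the binomial mass is exponentially small — for fixed $a \in (0,1)$ a Chernoff/Hoeffding bound forces $m_s$ to decay like $e^{-cn}$ once $s$ is a constant fraction away from the mean $na$ — so the diverging weight is overwhelmed and each expectation is governed by the bulk $s \approx na$, where $\sqrt{(n-s)/s} \approx \sqrt{(1-a)/a} = \Theta(1)$. Making this precise (e.g.\ splitting the sum at $|s-na| > C\sqrt{n\log n}$ and bounding the tail crudely by $\sqrt{n} \cdot (\max_{\text{tail}} m_s) \cdot \sum_s m_s$, which is $o(n^{-1/2})$) is the only step requiring genuine care; the rest is a normalisation computation. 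All constants here may depend on $a$, which is consistent with the statement holding for each fixed WB-$a$ with $0<a<1$.
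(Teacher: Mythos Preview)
Your proposal is correct and reaches the same conclusion as the paper, but by a genuinely different route. Both proofs begin identically, rewriting $D(\mathbf{m},\mathbf{q}^{\text{OFA-A}}) = nZ\sum_{s}\bigl(\sqrt{(n-s)/s}\,m_s^2 + \sqrt{s/(n-s)}\,m_{s+1}^2\bigr)$ with $Z=\Theta(1)$, and both aim to show the sum is $O(n^{-1/2})$. From there they diverge: the paper performs algebraic manipulations to bound each summand by $n\bigl(\binom{n}{s}a^s(1-a)^{n-s}\bigr)^2/\bigl(a(1-a)\bigr)^2$, and then evaluates $\sum_s\bigl(\binom{n}{s}a^s(1-a)^{n-s}\bigr)^2$ via the combinatorial identity $\sum_j\binom{m}{j}^2(x+y)^{2j}(x-y)^{2(m-j)}=\sum_j\binom{2j}{j}\binom{2(m-j)}{m-j}x^{2j}y^{2(m-j)}$ together with Stirling to obtain $O(n^{-1/2})$. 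Your argument instead peels off one factor of $m_s$ via $\sum_s w_s m_s^2 \le (\max_s m_s)\sum_s w_s m_s$, invokes the local-CLT bound $\max_s m_s=O(n^{-1/2})$, and controls the remaining weighted expectation $\mathbb{E}\bigl[\sqrt{(n-s)/s}\bigr]=O(1)$ by binomial concentration (Chernoff for the tails, $\Theta(1)$ weight in the bulk). Your approach is more elementary and probabilistic, sidestepping the somewhat specialised identity; the paper's approach is more explicit and mechanical once the identity is known. Either is perfectly acceptable for establishing the $O(n^{1/2})$ bound.
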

	
	
	The previous best convergence rate for weighted Banzhaf values is $ O(n\log n) $ \citep[Proposition 2]{li2023robust}, ours is slower by a factor of $ n^{\frac{1}{2}} $. Nevertheless, we will demonstrate that our generic estimator, which is expected to be faster than our OFA-A estimator, achieves the best convergence rate for all weighted Banzhaf values.
	
	\subsection{A Faster Generic Estimator}
	Our faster generic estimator (OFA-S) is obtained via optimizing $ \mathbf{q} $ for each \textbf{S}pecific $ \mathbf{m} $. Precisely, for each $ \mathbf{m} $, we have
	\begin{equation} \label{eq:optimal q}
		\begin{gathered}
			\mathbf{q}_{s-1}^{\text{OFA-S}} \propto \sqrt{\frac{m_{s}^{2}}{s} + \frac{m_{s+1}^{2}}{n-s}} \ \text{ where }\ \mathbf{q}^{\text{OFA-S}} =  \argmin_{\mathbf{q} \in \mathbb{R}^{n-3}} D(\mathbf{m}, \mathbf{q})\  \text{ s.t. }\ \sum_{j=1}^{n-3}q_{j} = 1,
		\end{gathered}
	\end{equation}
	which can be obtained using the Cauchy-Schwarz inequality. For the Shapley value, $ \mathbf{q}^{\text{OFA-S}} = \mathbf{q}^{\text{OFA-A}} $. Our next proposition specifies a sufficient condition for semi-values such that $ D(\mathbf{m},\mathbf{q}^{\text{OFA-S}}) \in O(1) $.
	
	\begin{restatable}{proposition}{genericConvergence}
		For semi-values,
		$ D(\mathbf{m}, \mathbf{q}^{\text{OFA-S}}) \in O(1) $ if i) $ \mu $ has a bounded probability density function or ii) $ \int_{(0,1)}\frac{1}{w(1-w)}\mathrm{d}\mu(w) < \infty$. Particularly, this condition covers all weighted Banzhaf values and Beta Shapley values with $ \alpha, \beta \geq 1 $.
	\end{restatable}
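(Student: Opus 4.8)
The plan is to start by substituting the Cauchy--Schwarz optimizer from Eq.~\eqref{eq:optimal q} back into the objective. Writing $a_s := \frac{m_s^2}{s} + \frac{m_{s+1}^2}{n-s}$, the choice $q_{s-1}^{\text{OFA-S}} \propto \sqrt{a_s}$ makes the constrained minimum of $\sum_{s=2}^{n-2}\frac{n\,a_s}{q_{s-1}}$ subject to $\sum_j q_j = 1$ equal to $D(\mathbf{m},\mathbf{q}^{\text{OFA-S}}) = n\left(\sum_{s=2}^{n-2}\sqrt{a_s}\right)^2$. Applying $\sqrt{x+y}\le\sqrt{x}+\sqrt{y}$, it then suffices to bound $\sum_s \frac{m_s}{\sqrt{s}}$ and $\sum_s \frac{m_{s+1}}{\sqrt{n-s}}$ separately, and since every term is nonnegative I may extend each summation to the full range $s\in[n]$, which only strengthens the upper bound.

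Next I would convert these sums into expectations against $\mu$. Inserting $m_s = \binom{n-1}{s-1}\int_0^1 w^{s-1}(1-w)^{n-s}\,\mathrm{d}\mu(w)$ and interchanging sum and integral, the first sum becomes $\int_0^1 \mathbb{E}_{K\sim\mathrm{Bin}(n-1,w)}[(K+1)^{-1/2}]\,\mathrm{d}\mu(w)$ with $K=s-1$. The crucial estimate is the closed form $\mathbb{E}_{K\sim\mathrm{Bin}(n-1,w)}[(K+1)^{-1}] = \frac{1-(1-w)^n}{nw}$, which follows from the index-shift identity $\frac{1}{k+1}\binom{n-1}{k}=\frac{1}{n}\binom{n}{k+1}$ and resummation of the binomial; combining it with Jensen's inequality for the concave map $x\mapsto\sqrt{x}$ gives $\mathbb{E}[(K+1)^{-1/2}]\le\sqrt{\mathbb{E}[(K+1)^{-1}]}\le(nw)^{-1/2}$. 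For the second sum the substitution $s\mapsto n-s'+1$ reflects the binomial through $w\leftrightarrow 1-w$ and reduces it to the same estimate with bound $(n(1-w))^{-1/2}$. Assembling these yields $\sqrt{D(\mathbf{m},\mathbf{q}^{\text{OFA-S}})} \le \int_0^1\bigl(w^{-1/2}+(1-w)^{-1/2}\bigr)\,\mathrm{d}\mu(w)$, so the factor of $n$ cancels entirely and the right-hand side is $n$-independent.

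It then remains to verify that each hypothesis makes this $\mu$-integral finite. Under (i), if the density is bounded by $M$, then $\int_0^1 w^{-1/2}\,\mathrm{d}\mu \le M\int_0^1 w^{-1/2}\,\mathrm{d}w = 2M$, and symmetrically for the other term. Under (ii), I would use $\frac{1}{w}\le\frac{1}{w(1-w)}$ on $[0,1]$ together with Cauchy--Schwarz against the probability measure $\mu$, so that $\int w^{-1/2}\,\mathrm{d}\mu \le \bigl(\int w^{-1}\,\mathrm{d}\mu\bigr)^{1/2} \le \bigl(\int \tfrac{1}{w(1-w)}\,\mathrm{d}\mu\bigr)^{1/2}<\infty$, and again by symmetry for $1-w$. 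Finally, for the stated special cases: weighted Banzhaf values have $\mu=\delta_a$, for which $\int\frac{1}{w(1-w)}\,\mathrm{d}\mu = \frac{1}{a(1-a)}<\infty$ confirms (ii) (note (i) fails since $\delta_a$ has no density); Beta Shapley values with $\alpha,\beta\ge 1$ have density proportional to $w^{\beta-1}(1-w)^{\alpha-1}$, which is continuous and hence bounded on $[0,1]$ precisely because both exponents are nonnegative, confirming (i).

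The main obstacle is the sharp control of $\mathbb{E}_{K\sim\mathrm{Bin}(n-1,w)}[(K+1)^{-1/2}]$: any cruder bound would retain an $n$-dependent factor and destroy the $O(1)$ conclusion, so the exact evaluation of $\mathbb{E}[(K+1)^{-1}]$ followed by concavity is exactly what makes the $\sqrt{n}$ cancel against the leading $\sqrt{n}$. The rest is careful bookkeeping---tracking the edge terms of the second sum under the $w\leftrightarrow 1-w$ reflection and confirming that extending the summation range stays a legitimate upper bound.
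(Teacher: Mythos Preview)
Your argument is correct and reaches the same conclusion by a genuinely different path than the paper.

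The paper, after the same initial split $\sqrt{x+y}\le\sqrt x+\sqrt y$, rewrites $\sqrt{n/s}\,\binom{n-1}{s-1}=\sqrt{s/n}\,\binom{n}{s}$ and then uses the crude pointwise bound $\sqrt{s/n}\,(1-w)+\sqrt{(n-s)/n}\,w\le 2$ to absorb the whole binomial mass $\sum_s\binom{n}{s}w^s(1-w)^{n-s}=1$; this yields $\sqrt{D(\mathbf m,\mathbf q^{\text{OFA-S}})}\le 2\int_0^1\frac{1}{w(1-w)}\,\mathrm d\mu(w)$, proving condition (ii) directly and deferring condition (i) to the earlier OFA-A proposition via $D(\mathbf m,\mathbf q^{\text{OFA-S}})\le D(\mathbf m,\mathbf q^{\text{OFA-A}})$. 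Your route instead interprets each sum as a binomial expectation $\mathbb E_{K\sim\mathrm{Bin}(n-1,w)}[(K+1)^{-1/2}]$ and controls it by Jensen plus the exact identity $\mathbb E[(K+1)^{-1}]=(1-(1-w)^n)/(nw)$, arriving at the strictly sharper bound $\sqrt{D}\le\int_0^1\bigl(w^{-1/2}+(1-w)^{-1/2}\bigr)\,\mathrm d\mu(w)$. This buys you a self-contained treatment of both (i) and (ii) without invoking the previous proposition, and the integrand is milder at the endpoints. The paper's approach is marginally more elementary (no Jensen, no closed-form expectation), but yours is cleaner and tighter.

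One small point worth adding: the paper first reduces to the case $\mu((0,1))=1$, observing that atoms of $\mu$ at $0$ or $1$ only affect $m_1$ or $m_n$, which do not appear in $D(\mathbf m,\mathbf q)$. Under your hypotheses this is automatic for (i) (a density has no atoms), but condition (ii) is stated over the open interval $(0,1)$ and does not by itself rule out an atom at an endpoint; your final bound $\int_0^1 w^{-1/2}\,\mathrm d\mu$ would then be infinite even though $D$ is unaffected. The one-line fix is the same reduction the paper uses.
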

	
	All in all, we demonstrate that by sticking to the principle of maximum sample reuse and avoiding any amplifying scalars, 	
	we are able to establish a generic estimator that achieves the currently best convergence rate for any previously-studied semi-value.
	
	\subsection{A Connection between Probabilistic Values and Datamodels}
	A datamodel, proposed by \citet{ilyas2022datamodels}, is to learn an easy-to-interpret surrogate to represent a model output distribution related to a specific test example. In this circumstance, the set of players $ [n] $ is identified with all the available training data. Precisely, the feature coordinates $ \boldsymbol\theta^{*} \in \mathbb{R}^{n} $ imputed to every data point in $ [n] $ is defined to be the uniquely optimal solution (together with a bias $ b^{*} \in \mathbb{R} $) to the optimization problem
	\begin{equation} \label{op:datamodels}
		\begin{gathered}
			\argmin_{\boldsymbol\theta\in\mathbb{R}^{n}, b \in \mathbb{R}} \sum_{S \subseteq [n]} \eta_{s+1}\left( U(S) - b - \sum_{i\in S}\theta_{i} \right)^{2}
		\end{gathered}
	\end{equation}
	where $ \boldsymbol\eta\in \mathbb{R}^{n+1} $ is non-negative and $ \sum_{s=0}^{n} \eta_{s+1} > 0 $. The weight vector $ \boldsymbol\eta $ can be scaled such that the objective in the problem~\eqref{op:datamodels} can be treated as an expectation, and thus the objective can be approximated through sampling a sufficient number of subsets, upon which an estimate of $(\thetav^*, b^*)$ can be obtained. We show below that $ \boldsymbol\theta^{*} $ to a family of such least square regressions can be cast as some probabilistic values if it is the pairwise differences $ \theta_{j}^{*} - \theta_{k}^{*} $ (for every $ j, k \in [n] $) that matter. 
	
	\begin{restatable}{theorem}{connection} \label{thm:connection}
		Let $ (b^{*}, \boldsymbol\theta^{*}) $ be the uniquely optimal solution to the problem~\eqref{op:datamodels} where $ \eta_{s} = p_{s-1} + p_{s} $ for $ 2\leq s\leq n $. Then, there is
		\begin{equation}
			\begin{gathered}
				\theta_{j}^{*} - \theta_{k}^{*} = \phi_{j} - \phi_{k}\ \text{ for every }\ j, k \in [n] .
			\end{gathered}
		\end{equation}
	\end{restatable}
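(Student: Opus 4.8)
The plan is to work directly from the first-order conditions of the quadratic objective in the problem~\eqref{op:datamodels}. Since the minimizer $(b^{*}, \boldsymbol\theta^{*})$ is assumed to be unique and the objective is a convex quadratic, it is characterized by the vanishing of every partial derivative. Writing $r_{S} := U(S) - b^{*} - \sum_{i\in S}\theta_{i}^{*}$ for the residual at the optimum, stationarity in $\theta_{j}$ reads $\sum_{S\ni j}\eta_{s+1}r_{S}=0$ for each $j$. First I would subtract the condition for $\theta_{k}$ from that for $\theta_{j}$: subsets containing both $j$ and $k$ (in particular $[n]$) cancel, and subsets containing neither (in particular $\emptyset$) never appear, so only the two families ``$j\in S,\ k\notin S$'' and ``$k\in S,\ j\notin S$'' survive. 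In particular the unspecified weights $\eta_{1}$ and $\eta_{n+1}$ play no role whatsoever.

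The second step collapses these two families via the swap bijection $S\mapsto S':=(S\setminus j)\cup k$, which preserves cardinality and hence the weight $\eta_{s+1}$. Using $\sum_{i\in S}\theta_{i}^{*}-\sum_{i\in S'}\theta_{i}^{*}=\theta_{j}^{*}-\theta_{k}^{*}$, the residual difference is $r_{S}-r_{S'}=U(S)-U(S')-(\theta_{j}^{*}-\theta_{k}^{*})$, with $b^{*}$ cancelling. Re-parametrizing each surviving $S$ as $R\cup j$ with $R\subseteq[n]\setminus\{j,k\}$ and $s=r+1$, the subtracted condition becomes
\begin{equation}
(\theta_{j}^{*}-\theta_{k}^{*})\sum_{R\subseteq[n]\setminus\{j,k\}}\eta_{r+2}=\sum_{R\subseteq[n]\setminus\{j,k\}}\eta_{r+2}\big(U(R\cup j)-U(R\cup k)\big).
\end{equation}

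Next I would compute $\phi_{j}-\phi_{k}$ independently and match it to the right-hand side above. Expanding each $\phi$ via its defining sum and splitting the summation index according to membership of the \emph{other} distinguished player, the terms involving $U(R)$ and $U(R\cup j\cup k)$ cancel in pairs, leaving
\begin{equation}
\phi_{j}-\phi_{k}=\sum_{R\subseteq[n]\setminus\{j,k\}}(p_{r+1}+p_{r+2})\big(U(R\cup j)-U(R\cup k)\big).
\end{equation}
Since $\eta_{r+2}=p_{r+1}+p_{r+2}$ by hypothesis, this is exactly the numerator of the stationarity identity. It then remains to evaluate the scalar $\sum_{R\subseteq[n]\setminus\{j,k\}}\eta_{r+2}=\sum_{r=0}^{n-2}\binom{n-2}{r}(p_{r+1}+p_{r+2})$; re-indexing both contributions to a common variable $\ell$ and applying Pascal's rule $\binom{n-2}{\ell-1}+\binom{n-2}{\ell-2}=\binom{n-1}{\ell-1}$ collapses it to $\sum_{\ell=1}^{n}\binom{n-1}{\ell-1}p_{\ell}$, which equals $1$ by the normalization $\sum_{s=1}^{n}\binom{n-1}{s-1}p_{s}=1$. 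Dividing through yields $\theta_{j}^{*}-\theta_{k}^{*}=\phi_{j}-\phi_{k}$.

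The two combinatorial expansions are routine; the step demanding the most care is the bookkeeping in the second one, where each defining sum of $\phi$ must be split into the ``contains the other player'' and ``does not'' cases so that the cross-cancellations leave precisely the coefficient $p_{r+1}+p_{r+2}$. Beyond bookkeeping, the only genuine subtlety is confirming that no boundary term of size $0$ or $n$ is mishandled, which is automatic: such subsets contain both or neither of $j,k$ and hence drop out before the weights $\eta$ even enter the computation.
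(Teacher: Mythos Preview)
Your argument is correct, and it follows a genuinely different route from the paper's. The paper does \emph{not} work directly from the first-order conditions of the unconstrained problem~\eqref{op:datamodels}. Instead it invokes a result of Ruiz et al.\ (stated as Lemma~\ref{lem:constrained}) asserting that the \emph{constrained} least-squares solution $\mathbf{v}^{*}$ (with $\sum_i v_i = U([n])-U(\emptyset)$) already satisfies $v_i^{*}-v_j^{*}=\phi_i-\phi_j$, and then sets up a semi-inner-product projection framework: the unconstrained minimizer $[b^{*},\boldsymbol\theta^{*}]$ is the projection of $U$ onto the space $\mathcal{AG}$ of all affine games, the constrained minimizer $[U(\emptyset),\mathbf{v}^{*}]$ is the projection onto the smaller affine space $\mathcal{A}_{U}\subseteq\mathcal{AG}$, and a successive-projection lemma forces $[U(\emptyset),\mathbf{v}^{*}]$ to equal the projection of $[b^{*},\boldsymbol\theta^{*}]$ onto $\mathcal{A}_{U}$. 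A final direct calculation then shows that this last projection only shifts $\boldsymbol\theta^{*}$ by a constant vector.

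Your approach is shorter and fully self-contained: you neither cite the Ruiz et al.\ lemma nor build the projection machinery, and the swap bijection together with the Pascal-rule collapse of $\sum_{r}\binom{n-2}{r}(p_{r+1}+p_{r+2})$ to the normalizing constant $1$ does all the work. The paper's detour buys a geometric explanation of \emph{why} the constrained and unconstrained solutions differ only by a constant (nested affine subspaces, orthogonal projections), and along the way it also verifies the uniqueness assumed in the theorem statement by computing the Hessian explicitly; your argument treats uniqueness purely as a hypothesis, which is legitimate given how the theorem is phrased.
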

	
	In other words, $\thetav^* = \phiv + c$ for some constant $c\mathbf{1}$; $ \mathbf{1} \in \mathbb{R}^{n} $ is the all-one vector. 
	When using datamodels to detect similar training examples to a given target, what matters is the relative order of components in $ \boldsymbol\theta^{*} $. Meanwhile, \citet{ilyas2022datamodels} showed that the corresponding performance depends on the choice of the weight vector $ \etav $. Therefore, our  OFA-A estimator serves as a sufficient proxy for a range of $ \{ \boldsymbol\theta^{*} \} $ and would facilitate the fine-tuning of $ \etav $ when using datamodels to detect similar training examples.

	\paragraph{When $ \boldsymbol\theta^{*} $ Can Be Recovered From $ \boldsymbol\phi $}
	Interestingly, for specific choices of $ \mathbf{p} \in \mathbb{R}^{n} $ and $ \boldsymbol\eta \in \mathbb{R}^{n+1} $, it holds that $ \boldsymbol\theta = \boldsymbol\phi $. Theorem~\ref{thm:connection} can be seen as an extension to the previous result stated in the below.
	
	\begin{proposition}[{\cite[Proposition 4]{marichal2011weighted}}] \label{prop:weighted and least}
		Suppose $ 0 < a < 1 $ is given, if $ p_{j} = a^{j-1}(1-a)^{n-j} $ for $ 1\leq j \leq n $ and $ \eta_{k} = a^{k-2}(1-a)^{n-k} $ for $ 1\leq k \leq n+1 $, which leads to $ \eta_{s} = p_{s-1} + p_{s} $ for $ 2\leq s\leq n$, there is 
		\begin{equation}
			\begin{gathered}
				\boldsymbol\theta^{*} = \boldsymbol\phi .
			\end{gathered}
		\end{equation}
	\end{proposition}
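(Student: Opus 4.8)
The plan is to recognize problem~\eqref{op:datamodels} under the stated weights as an ordinary linear regression against the Bernoulli product measure, where the indicator features are independent; this makes the normal equations decouple coordinatewise and yields a closed form for $\boldsymbol\theta^*$ that I can match term-by-term to $\boldsymbol\phi$. First I would observe that for a subset $S$ of size $s$ the weight is $\eta_{s+1} = a^{s-1}(1-a)^{n-s-1} = \frac{1}{a(1-a)}\,a^{s}(1-a)^{n-s} = \frac{1}{a(1-a)} P_a(S)$, where $P_a$ is the product measure on $2^{[n]}$ under which each player is included independently with probability $a$. Since $\frac{1}{a(1-a)}>0$ is a fixed scalar, the objective of~\eqref{op:datamodels} is proportional to $\mathbb{E}_{S\sim P_a}\bigl[(U(S)-b-\sum_{i\in S}\theta_i)^2\bigr]$, so the two problems have the same minimizer $(b^*,\boldsymbol\theta^*)$.

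Next, writing $X_i = \mathds{1}_{i\in S}$, these are independent $\mathrm{Bernoulli}(a)$ variables under $P_a$, hence $\mathrm{Var}(X_i) = a(1-a)$ and $\mathrm{Cov}(X_i,X_k)=0$ for $i\neq k$. I would write the first-order optimality (normal) equations $\mathbb{E}[r(S)]=0$ and $\mathbb{E}[r(S)X_j]=0$ for each $j$, where $r(S)=U(S)-b-\sum_i\theta_i X_i$, eliminate $b$ using the first equation, and substitute into the second. Because all off-diagonal covariances vanish by independence, the $(n+1)\times(n+1)$ system collapses to the decoupled scalar identities $\mathrm{Cov}(U,X_j)=\theta_j\,\mathrm{Var}(X_j)$, giving $\theta_j^* = \mathrm{Cov}(U,X_j)/\bigl(a(1-a)\bigr)$.

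Finally I would evaluate $\mathrm{Cov}(U,X_j)=\mathbb{E}[UX_j]-a\,\mathbb{E}[U]$ by splitting subsets according to whether they contain $j$ and reindexing $S=T\cup j$ or $S=T$ with $T\subseteq[n]\setminus j$; this gives $\mathrm{Cov}(U,X_j)=\sum_{T\subseteq[n]\setminus j} a^{t+1}(1-a)^{n-t}\,[U(T\cup j)-U(T)]$. Dividing by $a(1-a)$ yields $\theta_j^* = \sum_{T\subseteq[n]\setminus j} a^{t}(1-a)^{n-t-1}[U(T\cup j)-U(T)]$, and since $p_{t+1}=a^{t}(1-a)^{n-t-1}$ this is exactly $\phi_j$ from the defining sum of the probabilistic value, establishing $\boldsymbol\theta^*=\boldsymbol\phi$. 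To justify the ``uniquely optimal'' hypothesis I would note that the functions $\{1,X_1,\dots,X_n\}$ are linearly independent on $2^{[n]}$ (evaluate at $\emptyset$ and singletons) and every subset carries positive weight, so the weighted Gram matrix is positive definite and the minimizer is unique.

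The main obstacle is the opening step: seeing that $\eta_{s+1}$ is proportional to the Bernoulli$(a)$ product measure. Once that is in hand, the decisive simplification is the decoupling of the normal equations via feature independence, which replaces an $(n+1)\times(n+1)$ matrix inversion by a single scalar per coordinate; the closing covariance computation is then only a routine re-indexing over $T\subseteq[n]\setminus j$.
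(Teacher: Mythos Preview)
Your proof is correct. The paper does not supply its own proof of this proposition; it is quoted from external work (Marichal, 2011) and treated as a known input, so there is no in-paper argument to compare against directly. Your approach---recognizing that $\eta_{s+1}$ is a constant multiple of the product Bernoulli$(a)$ measure $P_a$, then exploiting the independence of the indicator features $X_i$ under $P_a$ to collapse the normal equations to the scalar identities $\mathrm{Cov}(U,X_j)=a(1-a)\,\theta_j^*$---is clean, elementary, and self-contained; the closing covariance computation and the uniqueness argument are both sound.

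For context, the paper does prove the weaker but more general Theorem~\ref{thm:connection}, which for arbitrary $\mathbf{p}$ (with $\eta_s=p_{s-1}+p_s$) yields only $\theta_j^*-\theta_k^*=\phi_j-\phi_k$, via successive projections in a semi-inner-product space together with the constrained least-squares characterization of Ruiz et al. Proposition~\ref{prop:weighted and least} is precisely the case in which the residual constant vanishes. Your direct covariance computation bypasses that projection machinery entirely for the weighted-Banzhaf weights and delivers the stronger exact identity $\boldsymbol\theta^*=\boldsymbol\phi$; the paper's route, by contrast, would still need Proposition~\ref{prop:weighted and least} (or an equivalent) to pin down the constant.
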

	It is worth pointing out that $ \boldsymbol\phi $ in Proposition~\ref{prop:weighted and least} is exactly the weighted Banzhaf value parameterized by $ a $, i.e., WB-$ a $. Even more, under the same setting, we can even solve  a group of datamodels with $ \ell_{1} $ or $ \ell_{2} $ regularization simultaneously.

	\begin{corollary} \label{cor:datamodels}
		Under the setting of Proposition~\ref{prop:weighted and least}, let $ \boldsymbol\theta^{*} $ be the unique optimal solution to
		\begin{equation} \label{op:regularized datamodel}
			\begin{gathered}
				\argmin_{\boldsymbol\theta\in\mathbb{R}^{n}, b\in\mathbb{R}} \left( \sum_{S\subseteq[n]} \eta_{s+1}\left( U(S) - b - \sum_{i\in S}\theta_{i} \right)^{2} \right) + \frac{\lambda}{a(1-a)} \mathcal{R}(\boldsymbol\theta) 
			\end{gathered}
		\end{equation}
		where $ \lambda > 0 $,
		the following are true about the relation between $ \boldsymbol\theta^{*} $ and $ \boldsymbol\phi $:
		\begin{enumerate}
			\item If $ \mathcal{R}(\boldsymbol\theta) = \| \boldsymbol\theta \|_{2}^{2} $, then
			\begin{equation}
				\begin{gathered}
					\boldsymbol\theta^{*} = \left( 1+\frac{\lambda}{a(1-a)} \right)^{-1}\boldsymbol\phi .
				\end{gathered}
			\end{equation}
		
			\item If $ \mathcal{R}(\boldsymbol\theta) = \| \boldsymbol\theta \|_{1} $, then
			\begin{equation}
				\begin{gathered}
					\boldsymbol\theta^{*} = \sign(\boldsymbol\phi)\max\left( 0, |\boldsymbol\phi| - \frac{\lambda}{2a(1-a)} \right) .
				\end{gathered}
			\end{equation}
		\end{enumerate}
		All operations are element-wise.
	\end{corollary}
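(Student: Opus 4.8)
The plan is to characterize $\boldsymbol\theta^*$ through the (sub)stationarity conditions of the strictly convex regularized objective and to reduce them, via the special structure induced by the choice $\eta_k = a^{k-2}(1-a)^{n-k}$, to a coordinate-wise proximal operator applied to $\boldsymbol\phi$. I would write the data-fit term as $F(b,\boldsymbol\theta) = \sum_{S\subseteq[n]}\eta_{s+1}\big(U(S)-b-\sum_{i\in S}\theta_i\big)^2$ and introduce the half-gradients $R_0(b,\boldsymbol\theta) = \sum_{S}\eta_{s+1}\big(b+\sum_{i\in S}\theta_i-U(S)\big)$ and $R_j(b,\boldsymbol\theta)=\sum_{S\ni j}\eta_{s+1}\big(b+\sum_{i\in S}\theta_i-U(S)\big)$, so that $\tfrac12\partial_b F = R_0$ and $\tfrac12\partial_{\theta_j}F=R_j$. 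Since the penalty $\tfrac{\lambda}{a(1-a)}\mathcal R(\boldsymbol\theta)$ involves no $b$, the $b$-optimality condition $R_0=0$ is identical in the unregularized and regularized problems; strict convexity of $F$ (the positive-definite Hessian that also underlies the uniqueness in Proposition~\ref{prop:weighted and least}) together with the convex regularizer guarantees a unique minimizer, so it suffices to exhibit one point satisfying the optimality conditions.

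The key computation is the quadratic structure of $F$. Because $\eta_{s+1}$ depends on $S$ only through its size $s$, the three sums
\[
A=\sum_S\eta_{s+1},\qquad B=\sum_{S\ni j}\eta_{s+1},\qquad C=\sum_{S\ni j,k}\eta_{s+1}\ (j\ne k)
\]
do not depend on the chosen indices; evaluating them with $\eta_{s+1}=a^{s-1}(1-a)^{n-s-1}$ through the binomial theorem gives $A=\tfrac{1}{a(1-a)}$, $B=\tfrac{1}{1-a}$, $C=\tfrac{a}{1-a}$. Collecting terms then produces the affine forms $R_0(b,\boldsymbol\theta)=Ab+B\sum_k\theta_k-u_0$ and $R_j(b,\boldsymbol\theta)=Bb+(B-C)\theta_j+C\sum_k\theta_k-u_j$ for constants $u_0,u_j$ encoding $U$. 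The three identities that drive the whole argument are $B-C=1$, $C-aB=0$, and $B/A=a$.

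By Proposition~\ref{prop:weighted and least} the unregularized solution is $(b^*,\boldsymbol\phi)$, hence $R_0(b^*,\boldsymbol\phi)=0$ and $R_j(b^*,\boldsymbol\phi)=0$. Writing any candidate in deviation form $\delta_b=b-b^*$, $\boldsymbol\delta=\boldsymbol\theta-\boldsymbol\phi$ and using affinity gives $R_0=A\delta_b+B\sum_k\delta_k$ and $R_j=B\delta_b+\delta_j+C\sum_k\delta_k$. Imposing the $b$-condition $R_0=0$ forces $\delta_b=-a\sum_k\delta_k$ (using $B/A=a$), and substituting this into $R_j$ annihilates the aggregate term through $C-aB=0$, leaving the clean identity $R_j=\theta_j-\phi_j$. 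The remaining $\theta_j$-stationarity of the regularized objective then reads $(\theta_j-\phi_j)+\tfrac{\rho}{2}\,\partial\mathcal R(\theta_j)\ni 0$ with $\rho=\tfrac{\lambda}{a(1-a)}$, i.e. $\theta_j=\mathrm{prox}_{(\rho/2)\mathcal R}(\phi_j)$ applied coordinate-wise. For $\mathcal R=\|\cdot\|_2^2$ this proximal map is the shrinkage $\phi_j/(1+\rho)$, yielding $\boldsymbol\theta^*=\big(1+\tfrac{\lambda}{a(1-a)}\big)^{-1}\boldsymbol\phi$; for $\mathcal R=\|\cdot\|_1$ it is soft-thresholding at level $\rho/2=\tfrac{\lambda}{2a(1-a)}$, yielding the stated formula. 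Finally, choosing $b=b^*-a\sum_k(\theta_k-\phi_k)$ restores $R_0=0$, so the exhibited point is genuinely optimal.

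I expect the main obstacle to be the Hessian bookkeeping: correctly evaluating $A,B,C$ in closed form and verifying the two cancellation identities $B-C=1$ and $C-aB=0$, on which the entire reduction hinges. A secondary point that needs care is justifying that the $b$-component may be eliminated freely, i.e. that the unregularized $b$-optimality persists because the penalty acts only on $\boldsymbol\theta$; once this is established, both parts collapse to the standard proximal operators of $\|\cdot\|_2^2$ and $\|\cdot\|_1$, and the two formulas follow immediately.
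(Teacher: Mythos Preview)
Your proof is correct and self-contained, but it takes a different route from the paper. The paper does not argue from scratch; it simply notes that the corollary is immediate by combining Proposition~\ref{prop:weighted and least} (which gives $\boldsymbol\theta^*=\boldsymbol\phi$ in the unregularized case) with Theorem~2.2 of Saunshi et al.\ on regularized datamodels, after a change of variables $x_i\mapsto 2x_i-1$ that reconciles the two papers' conventions.

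What you do differently is re-derive the effect of the regularizer directly: you compute the three Hessian blocks $A,B,C$ in closed form, use the identities $B-C=1$ and $C-aB=0$ to show that once $b$ is profiled out the $\theta$-stationarity collapses to $R_j=\theta_j-\phi_j$, and then read off the solution as a coordinate-wise proximal operator. Your computations of $A=\tfrac{1}{a(1-a)}$, $B=\tfrac{1}{1-a}$, $C=\tfrac{a}{1-a}$ and the two cancellation identities are all correct, as is the final prox identification for $\|\cdot\|_2^2$ and $\|\cdot\|_1$. The uniqueness argument via strict convexity of $F$ (the Hessian being the positive-definite matrix $\mathbf{A}$ exhibited in the proof of Theorem~\ref{thm:connection}) plus a convex penalty is also sound. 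Your approach has the advantage of being fully self-contained and making explicit \emph{why} the particular weighting $\eta_k=a^{k-2}(1-a)^{n-k}$ decouples the problem coordinate-wise; the paper's approach is shorter but relies on an external reference for exactly this decoupling.
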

	This corollary is immediate by combining Proposition~\ref{prop:weighted and least}, and Theorem 2.2 by \citet{saunshi2022understanding}. We comment that replacing $ x_{i} $ by $ 2x_{i} - 1 $, i.e., mapping $ 0 $ and $ 1 $ into $ -1 $ and $ 1 $, respectively, in $ \phi_{\{ i \}}(x) $ used by \citet{saunshi2022understanding} yields $ v_{\{ i \}}(x) $ used by \citet{marichal2011weighted}.	
	A remarkable implication  of the combination of Corollary~\ref{cor:datamodels} and our proposed OFA-A estimator is that we can solve a group of regularized datamodels covered by the problem~\eqref{op:regularized datamodel} \textit{simultaneously}! For example, the coefficient $ \lambda $ can be finetuned by running Algorithm~\ref{alg:ofa} just once.

	\section{Experiments} \label{sec:experiments}	
	In this section, we are to verify i) the simultaneous efficiency of our OFA-A estimator and ii) the faster convergence rate of our OFA-S estimator compared with the considered baselines and our OFA-A estimator. Particularly, if $ D(\mathbf{m}, \mathbf{q}) $ is effective in determining the convergence rate of Algorithm~\ref{alg:ofa}, our OFA-S estimator is expected to be faster than our OFA-A estimator.  All the experiments are conducted using CPUs.
	
	We use two types of utility functions for this end: i) following the experiment settings of \citep{li2024faster}, $ U(S) $ is set to be the cross-entropy of LeNets trained on $ S $ on the classification datasets FMNIST, MNIST and iris; to obtain the exact values, the number of training data $ n $ is set to be $ 24 $; ii) $ U $ is defined to be the sum of unanimity (SOU) games, i.e., $ U(S) = \sum_{j=1}^{d} \alpha_{j} \mathds{1}_{S_{j}\subseteq S} $ where each $ \emptyset\subsetneq S_{j} \subsetneq [n] $ is randomly sampled, for which each semi-value can be computed by $ \phi_{i} = \sum_{j=1}^{d} \alpha_{j} \int_{[0,1]}w^{s_{j}-1}\mathrm{d}\mu(w) $; specifically, we set $ n \in \{ 64, 128, 256 \} $ with $ d = n^{2} $, which implies that the implemented SOU games require $ n^{2} $ utility evaluations to compute semi-values exactly.
	The random seed inside each utility function is fixed as $ 2024 $, and thus each $ U $ is deterministic.
	
	For the simplicity of presenting our empirical results, we use the area under the convergence curve (AUCC) to assess the convergence quality of estimators, and thus the smaller the better. For $ n=24 $, the value of each player is approximated using $ 20,000 $ utility evaluations, and we compute the AUCCs as $ \frac{1}{100}\sum_{j=1}^{100}\frac{\| \hat{\boldsymbol\phi}^{(200j)} - \boldsymbol\phi \|_{2}}{\| \boldsymbol\phi \|_{2}} $ where $ \hat{\boldsymbol\phi}^{(200j)} $ refers to the estimate using $ 200j $ utility evaluations for each player. For $ n \in \{ 64, 128, 256 \} $, the value of each player is approximated using $ 2,000 $ utility evaluations, and the corresponding AUCCs are calculated as $ \frac{1}{100}\sum_{j=1}^{100}\frac{\| \hat{\boldsymbol\phi}^{(20j)} - \boldsymbol\phi \|_{2}}{\| \boldsymbol\phi \|_{2}} $. All the AUCCs are reported with standard deviation using $ 30 $ different random seeds from $ \{ 0,1,2,\dots, 29 \} $.

	\begin{figure}[t]
		\centering
		\begin{tabular}{ccc}
			\multicolumn{3}{c}{\includegraphics[width=0.8\linewidth]{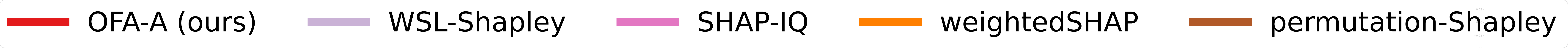}} \\
			\includegraphics[width=0.3\linewidth]{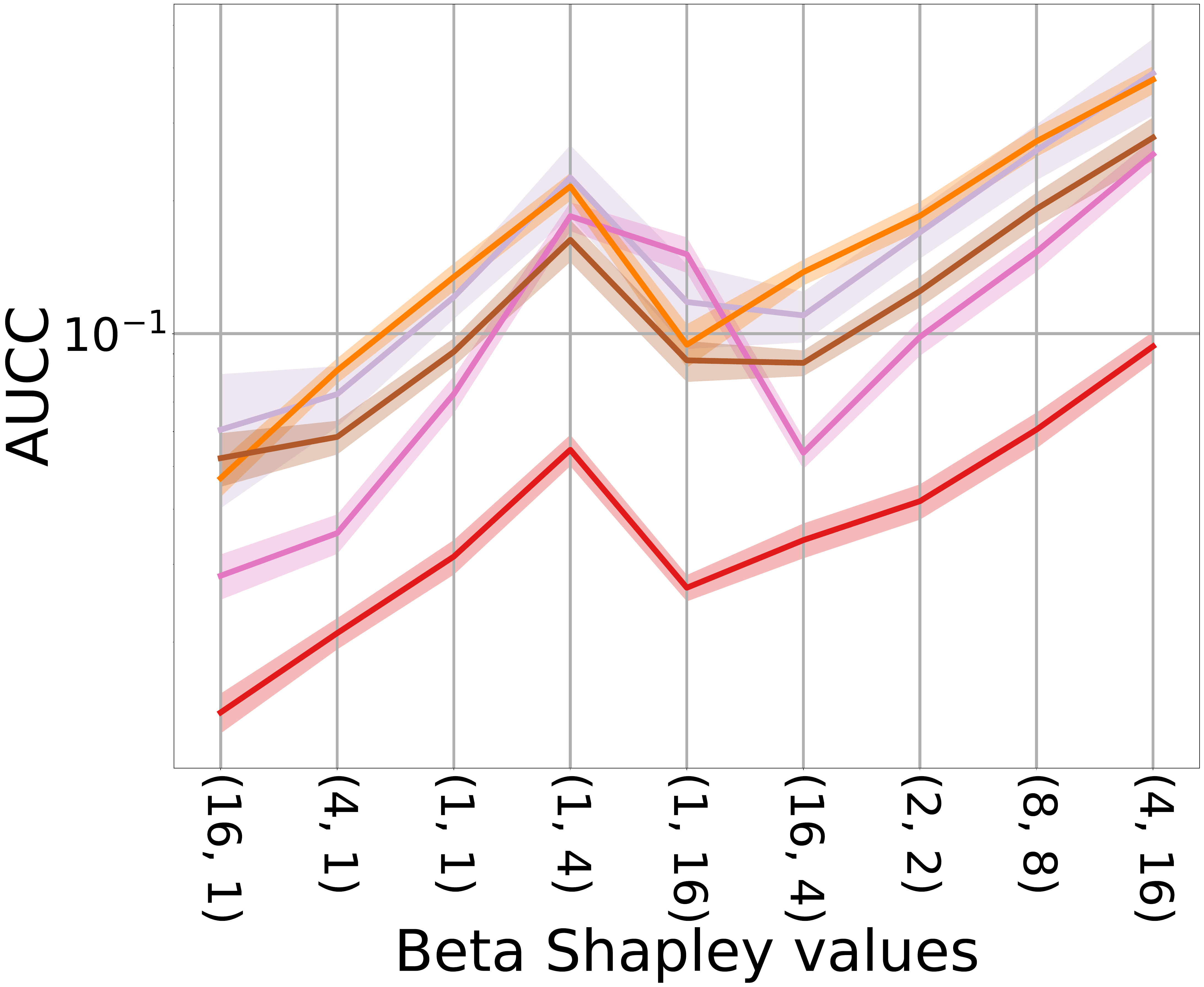} & \includegraphics[width=0.3\linewidth]{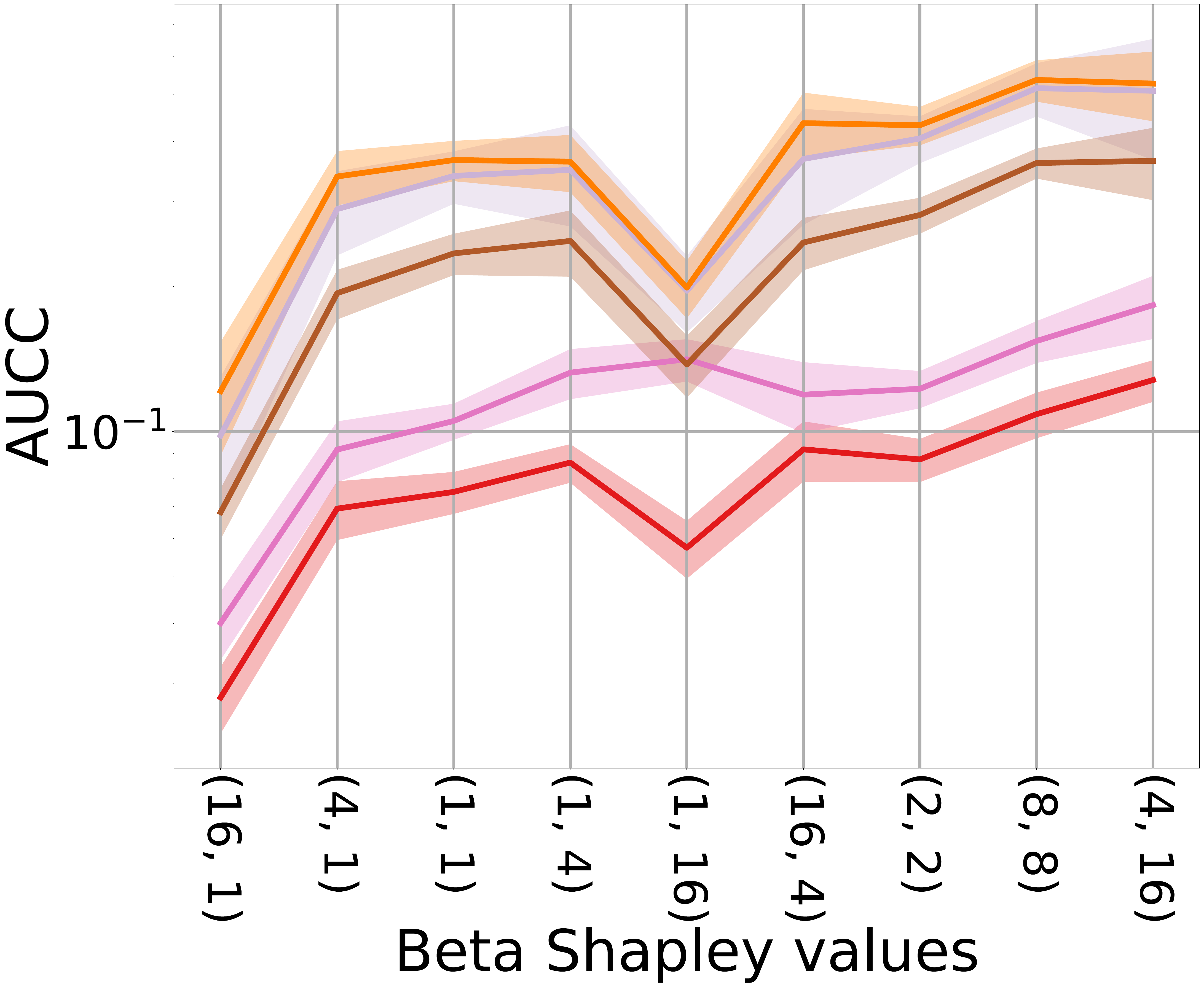} &
			\includegraphics[width=0.3\linewidth]{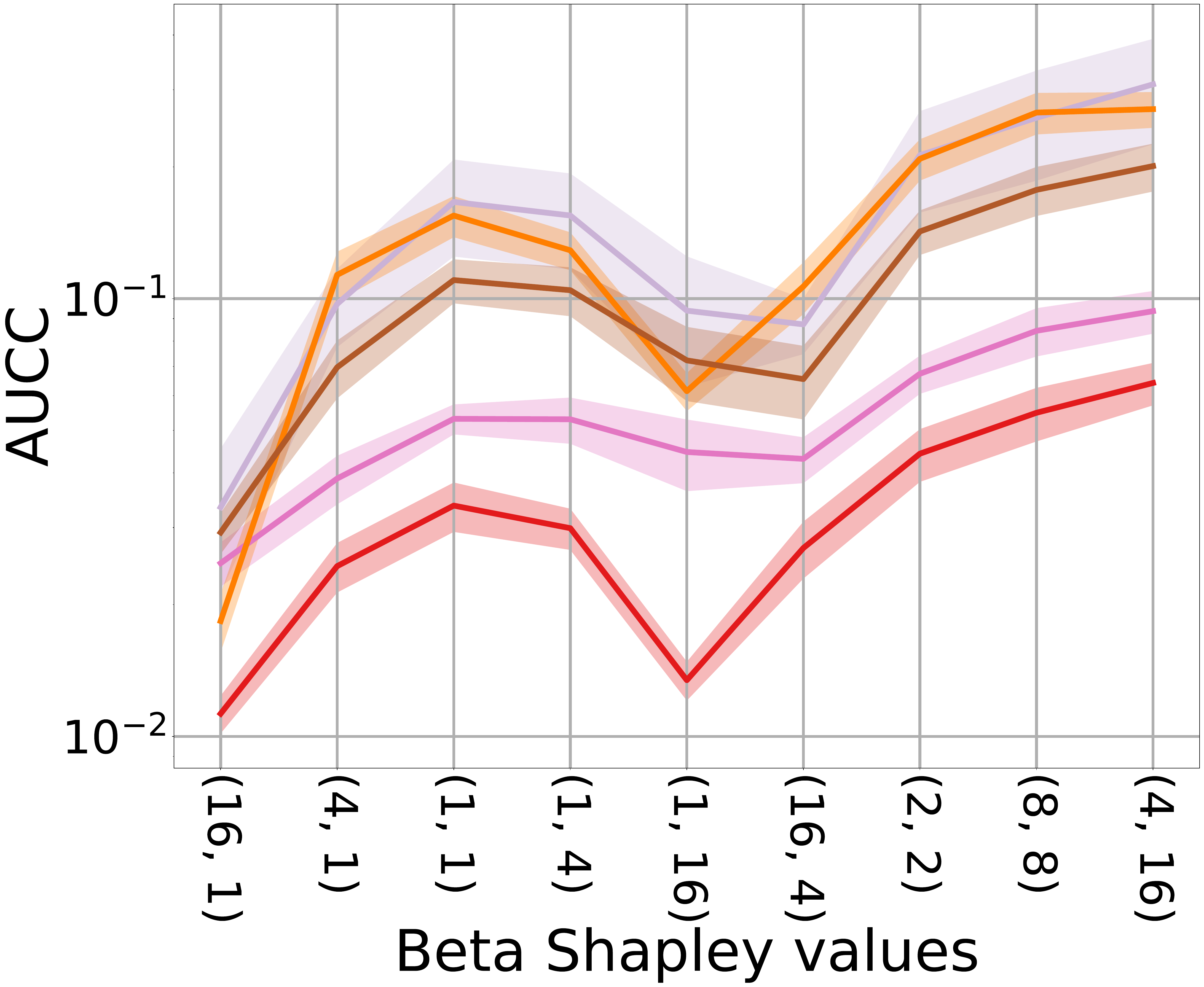} \\
			\includegraphics[width=0.3\linewidth]{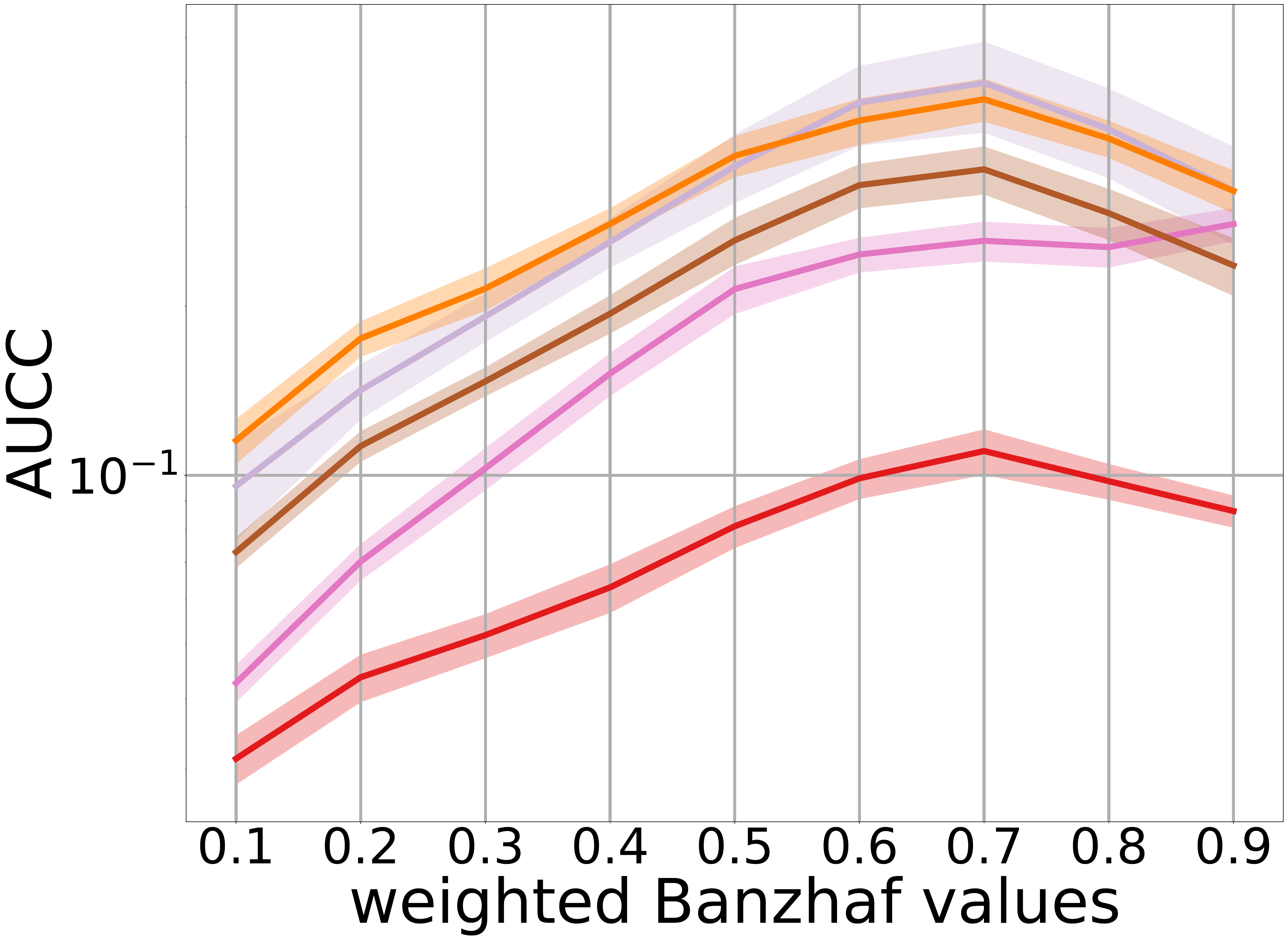} & \includegraphics[width=0.3\linewidth]{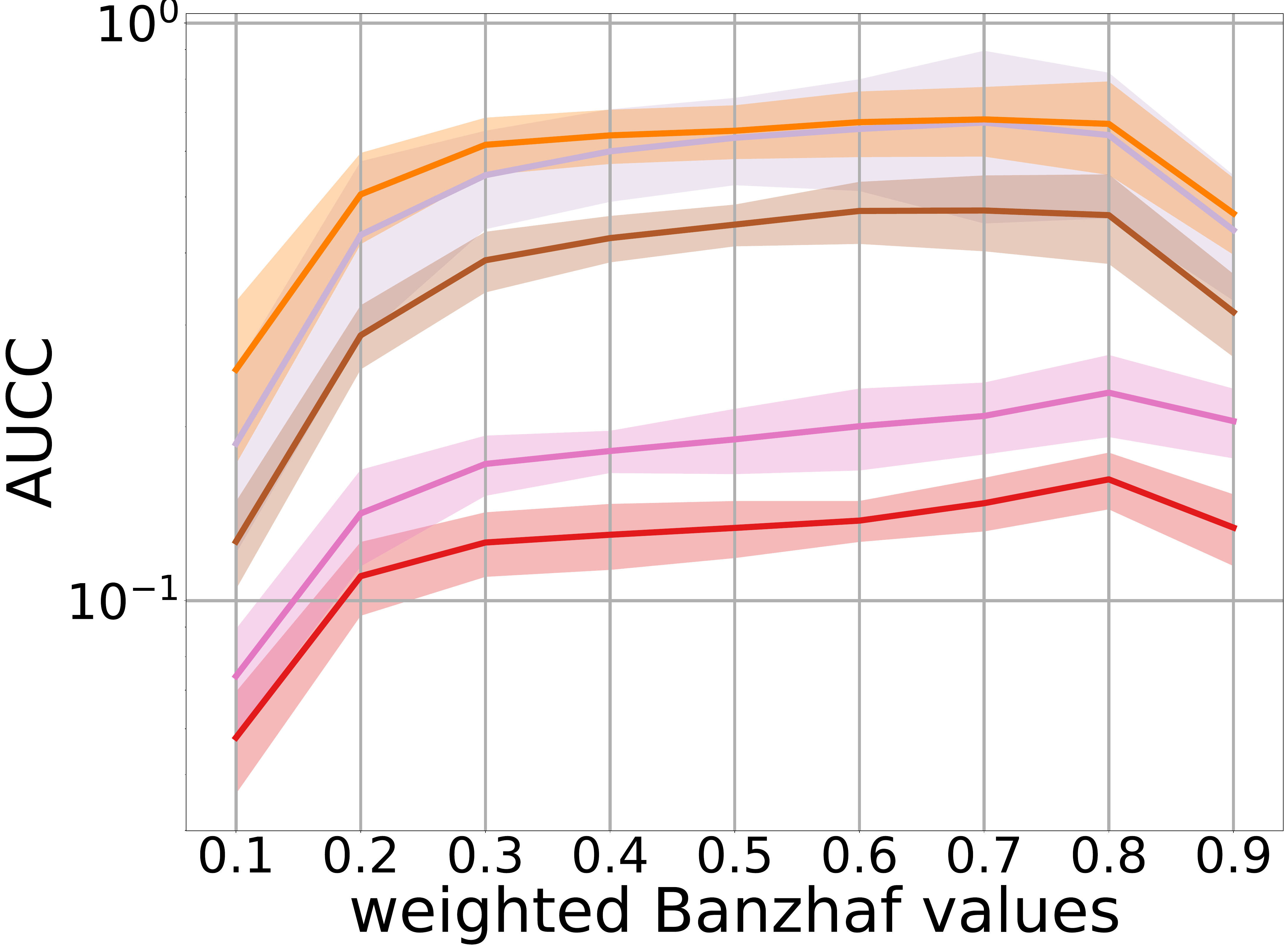} &
			\includegraphics[width=0.3\linewidth]{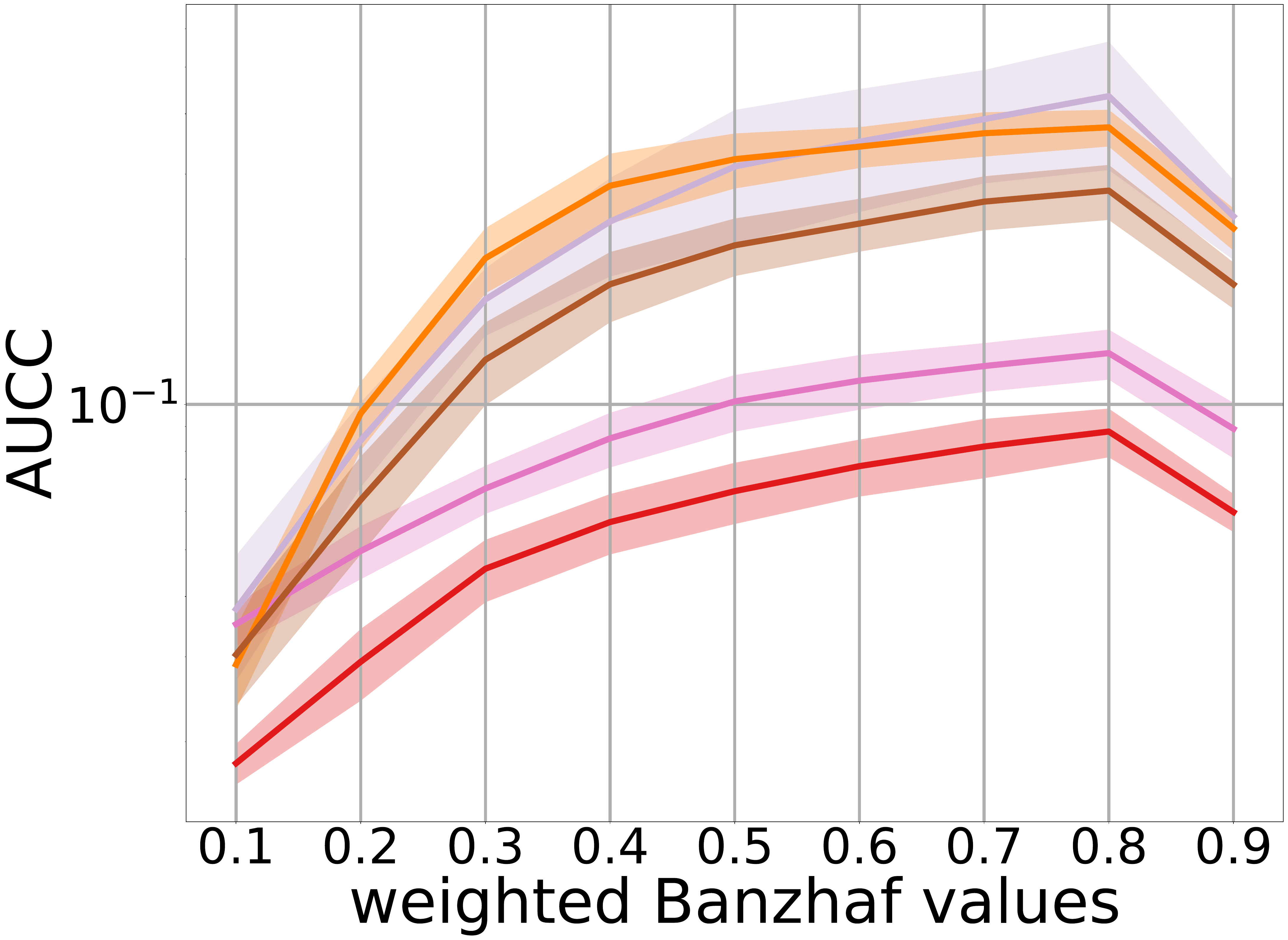} \\
			iris ($ n=24 $) & MNIST ($ n=24 $) & FMNIST ($ n=24 $)\\
			\includegraphics[width=0.3\linewidth]{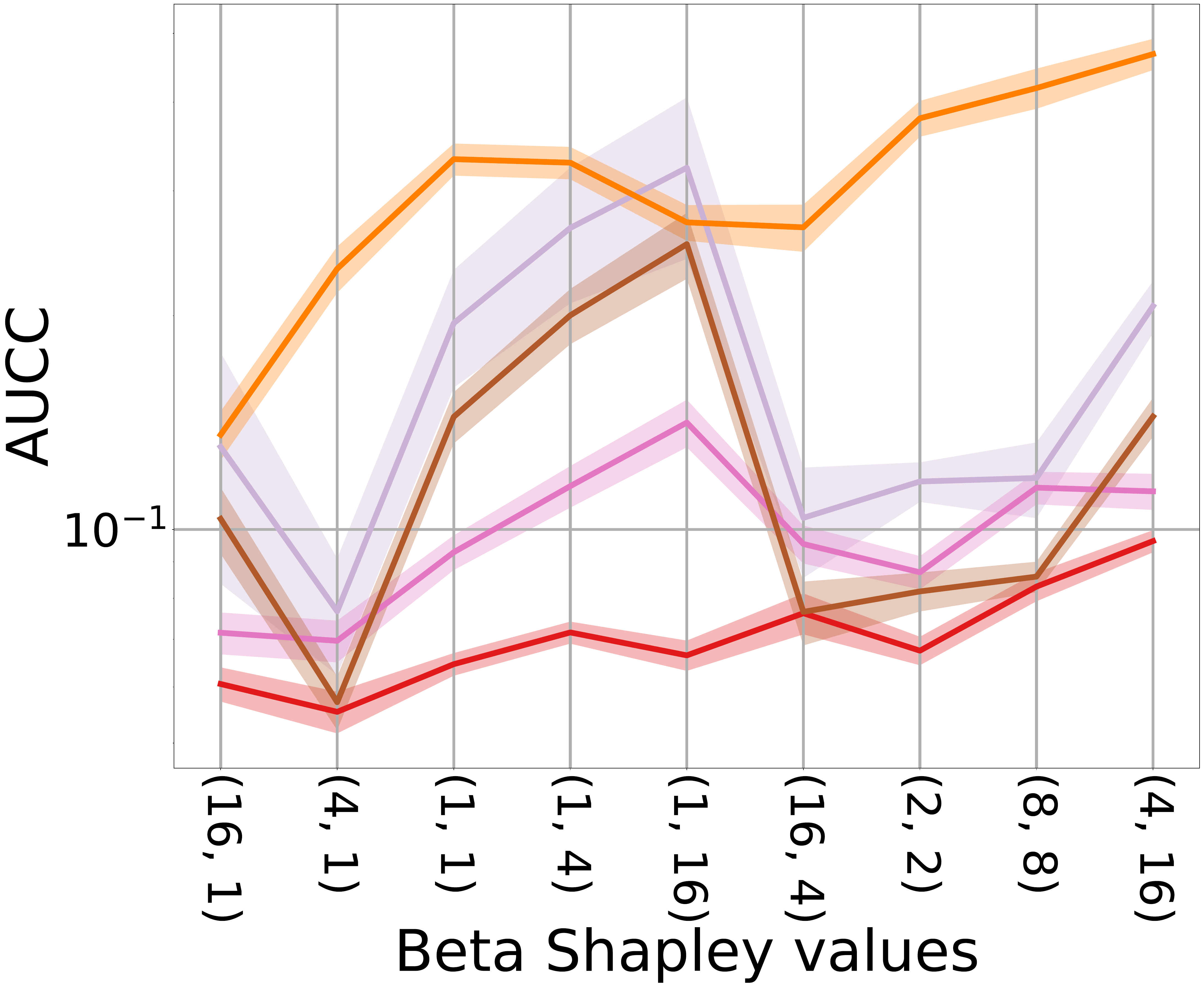} & \includegraphics[width=0.3\linewidth]{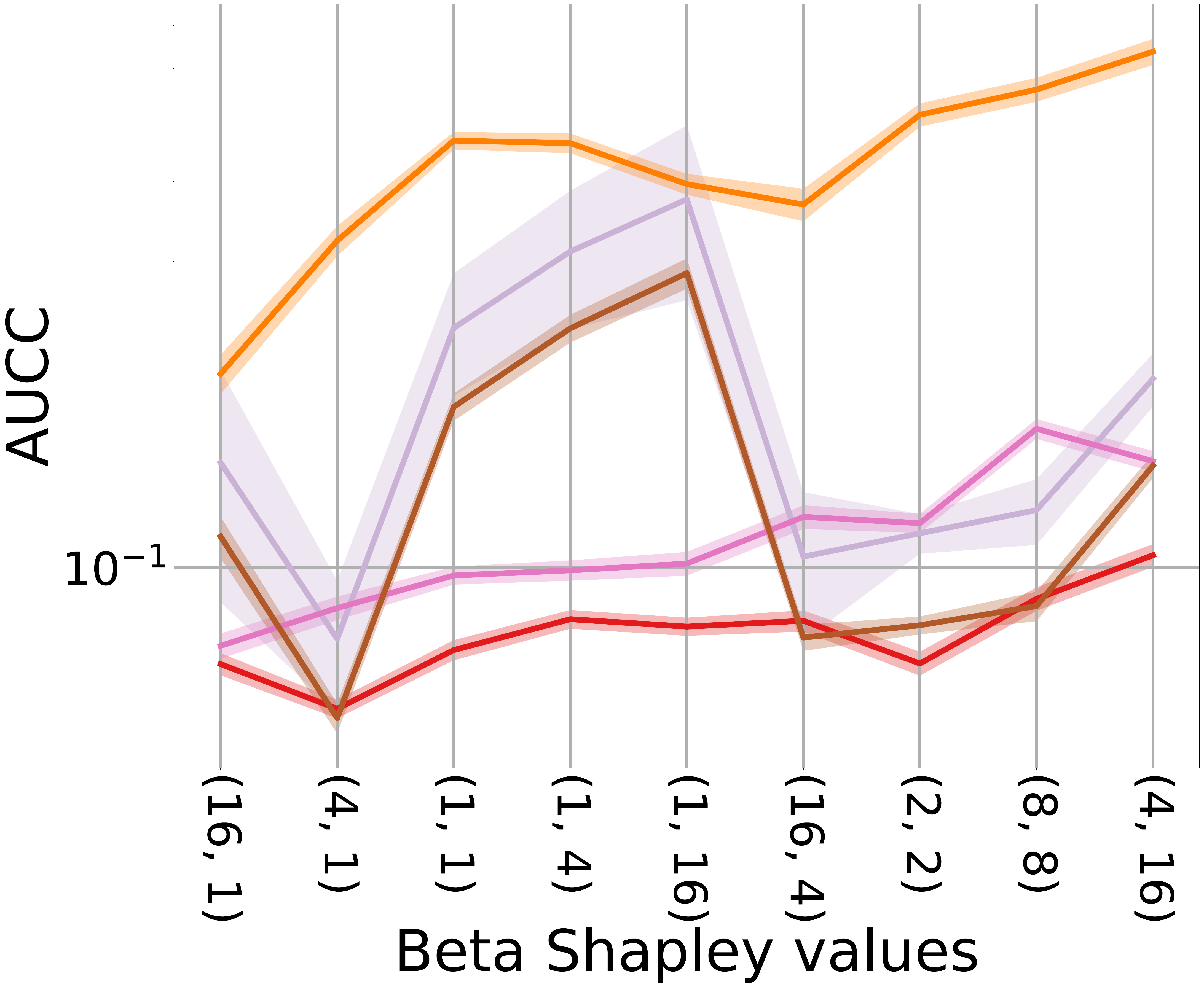} &
			\includegraphics[width=0.3\linewidth]{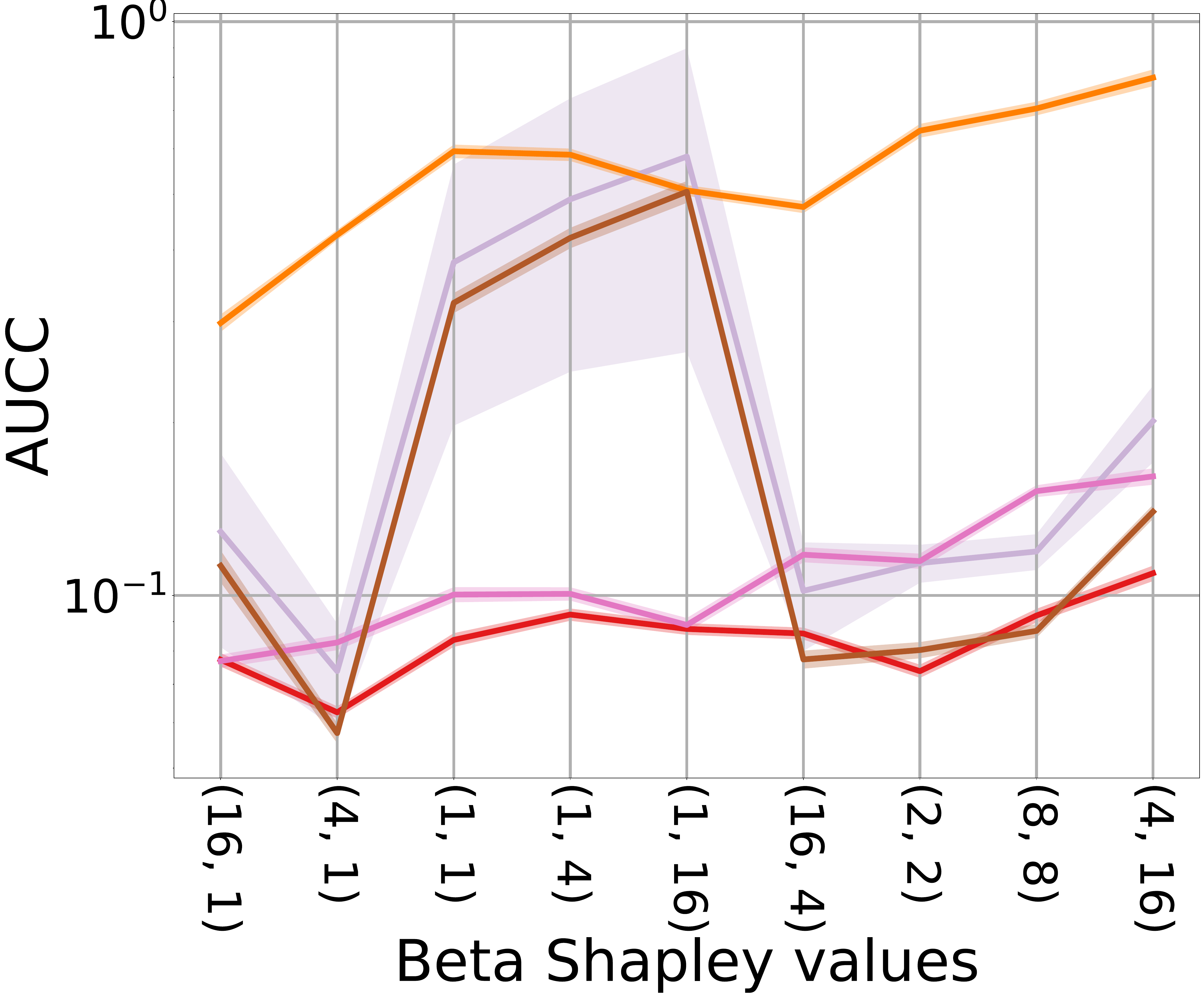} \\
			\includegraphics[width=0.3\linewidth]{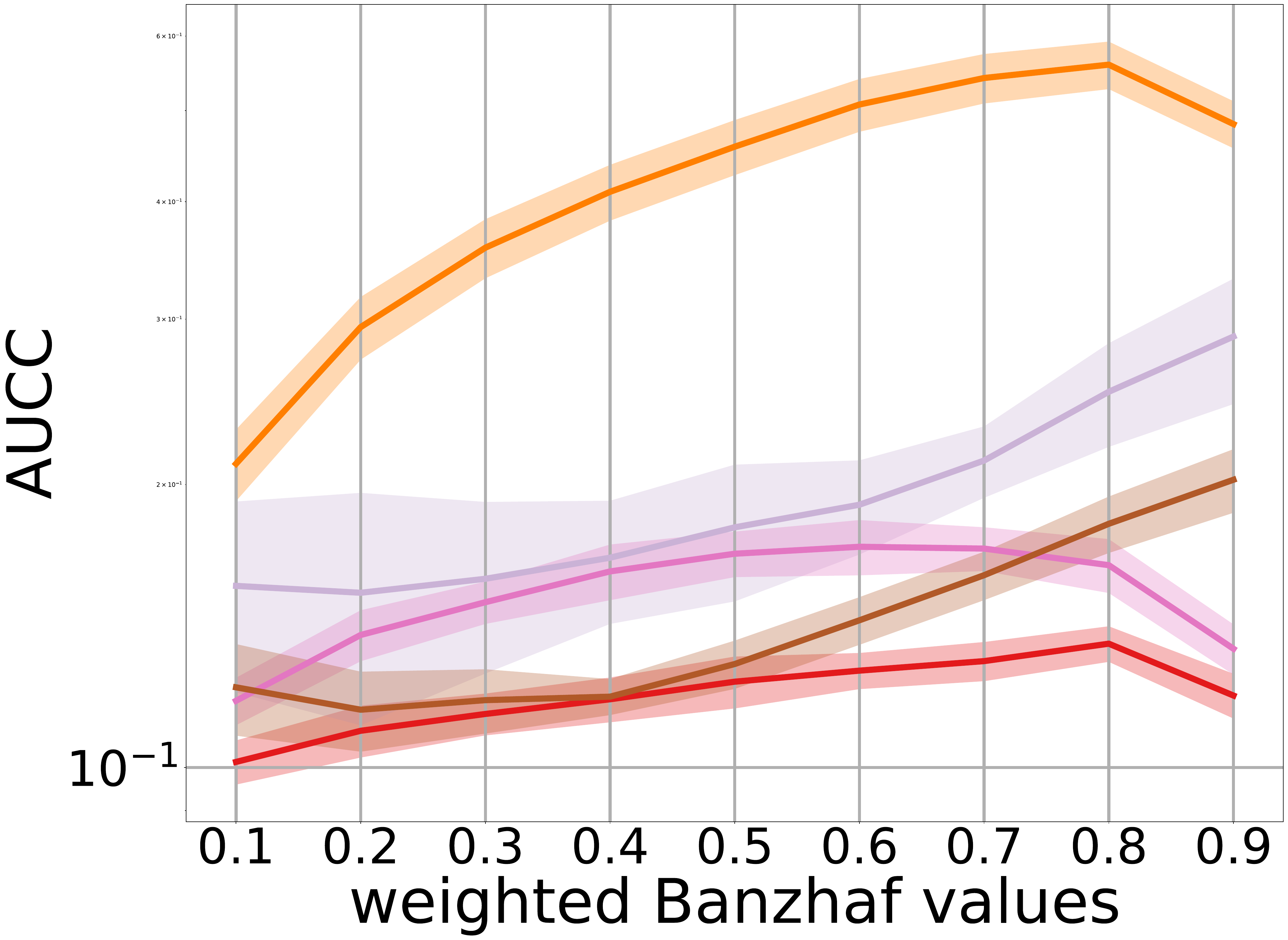} & \includegraphics[width=0.3\linewidth]{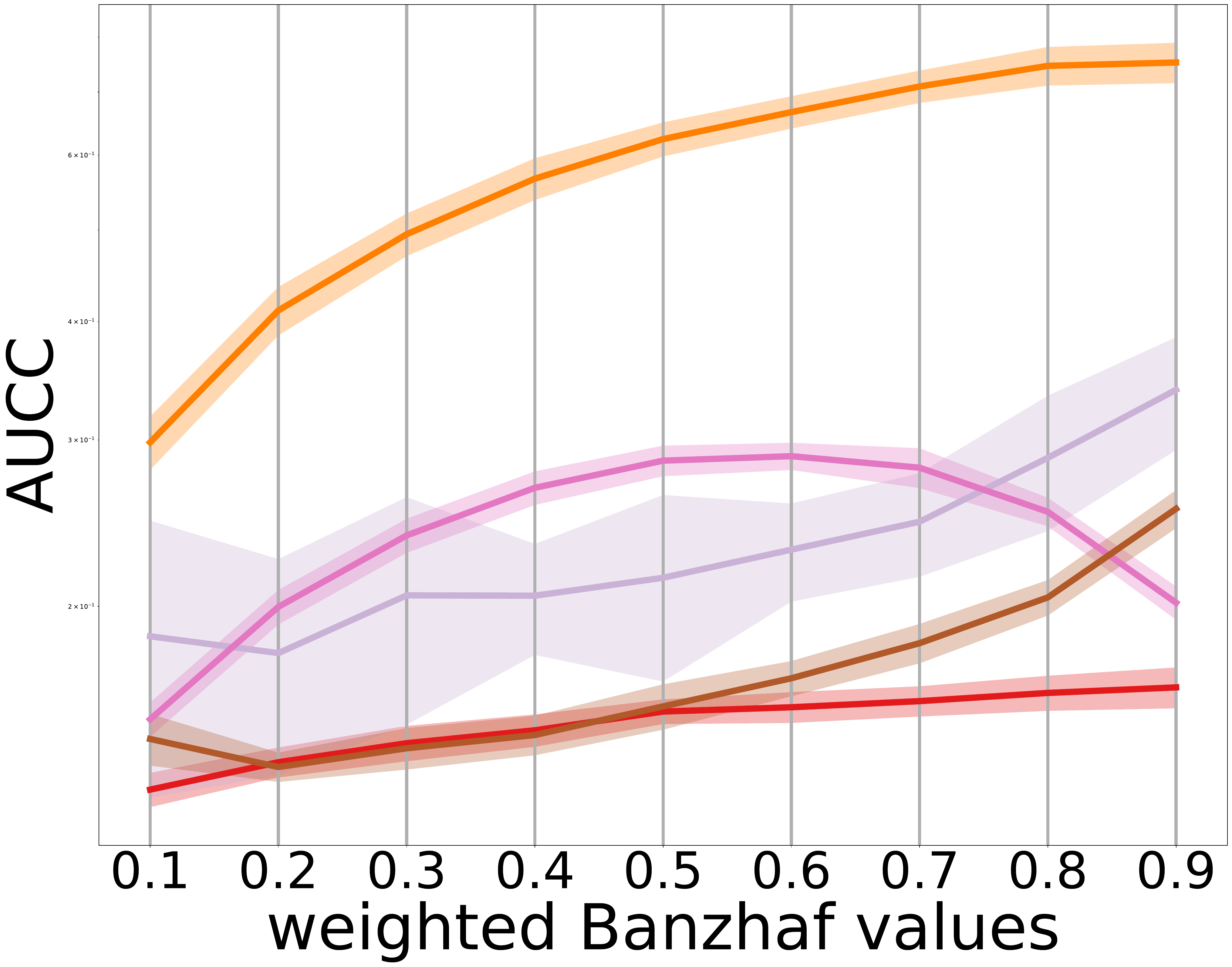} &
			\includegraphics[width=0.3\linewidth]{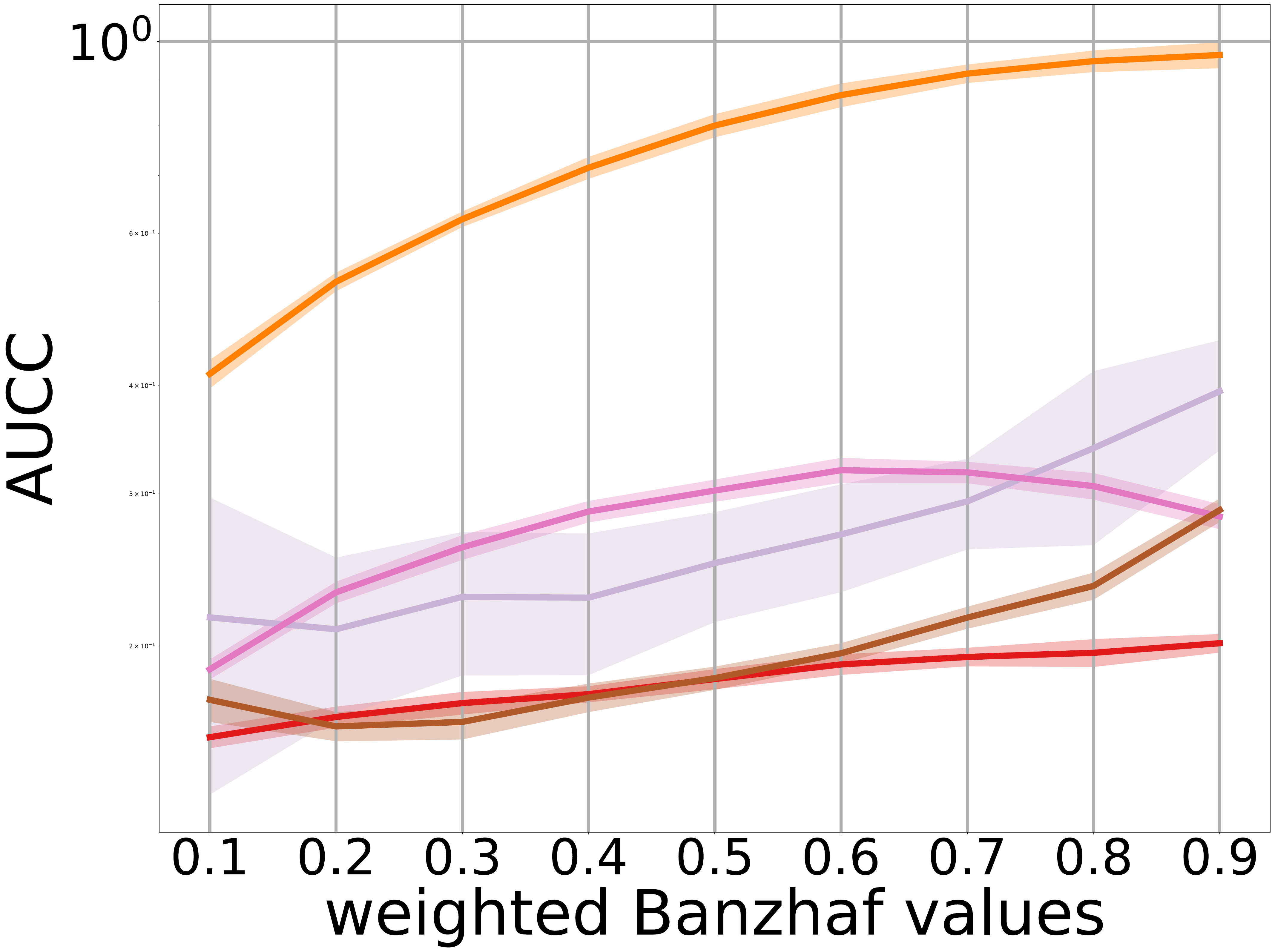}\\
			SOU ($ n=64 $) & SOU ($ n=128 $) & SOU ($ n=256 $)
		\end{tabular}
		\vspace{-.5em}
		\caption{Comparison of one-for-all estimators using six utility functions. All the AUCCs are reported with standard deviation using $ 30 $ random seeds. Smaller AUCC indicates faster convergence rate.}
		\vspace{-1em}
		\label{fig:ofa}
	\end{figure}

	\paragraph{Verification of Our OFA-A Estimator}
	For our OFA-A estimator where we substitute $ \mathbf{q}^{\text{OFA-A}} $, which is defined in Proposition~\ref{prop:ofa-a}, into Algorithm~\ref{alg:ofa}, we choose the baselines according to Figure~\ref{fig:motivation}. The selected baselines include WSL-Shapley \citep{kwon2022beta}, SHAP-IQ \citep{fumagalli2024shap}, weightedSHAP \citep{kwon2022weightedshap} and permutation-Shapley \citep{castro2009polynomial}. The corresponding results are reported in Figure~\ref{fig:ofa}. Overall, our OFA-A estimator performs the best on all the employed $ 18 $ probabilistic values, which verify the simultaneous efficiency of our OFA-A estimator.

	\begin{figure}[t]
		\centering
		\begin{tabular}{ccc}
			\multicolumn{3}{c}{\includegraphics[width=0.97\linewidth]{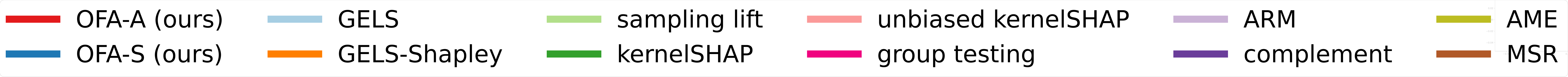}} \\
			\includegraphics[width=0.3\linewidth]{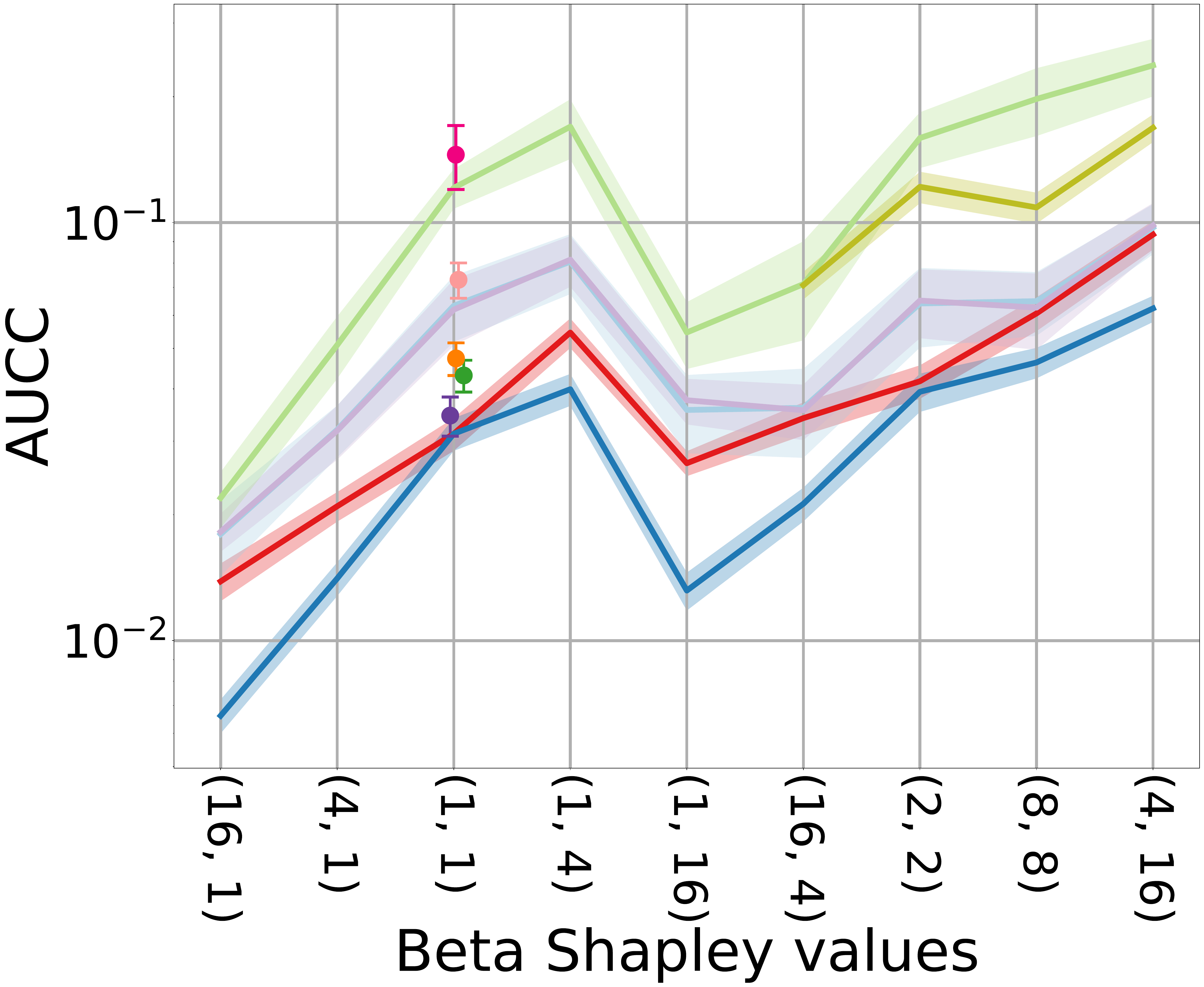} & \includegraphics[width=0.3\linewidth]{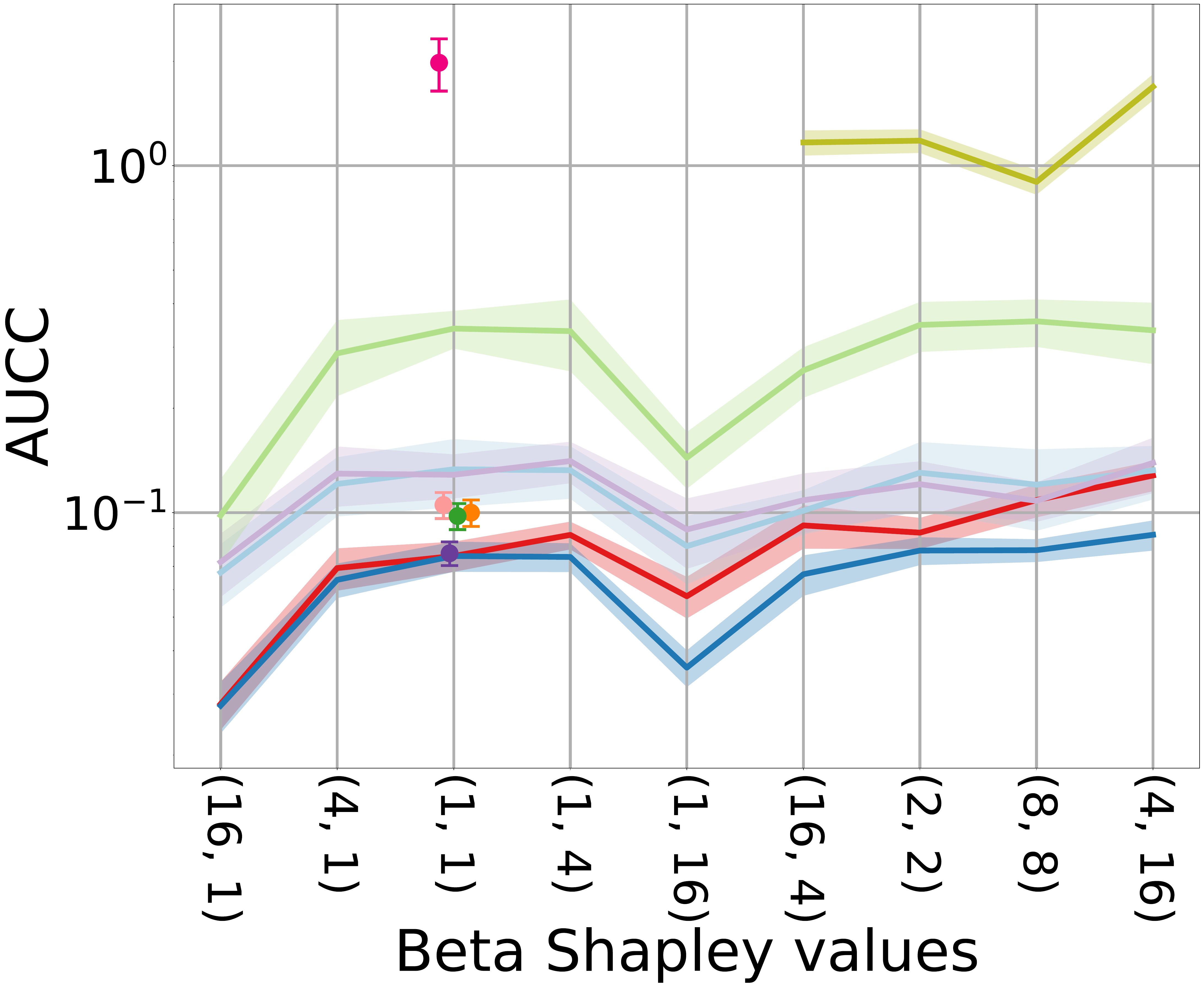} &
			\includegraphics[width=0.3\linewidth]{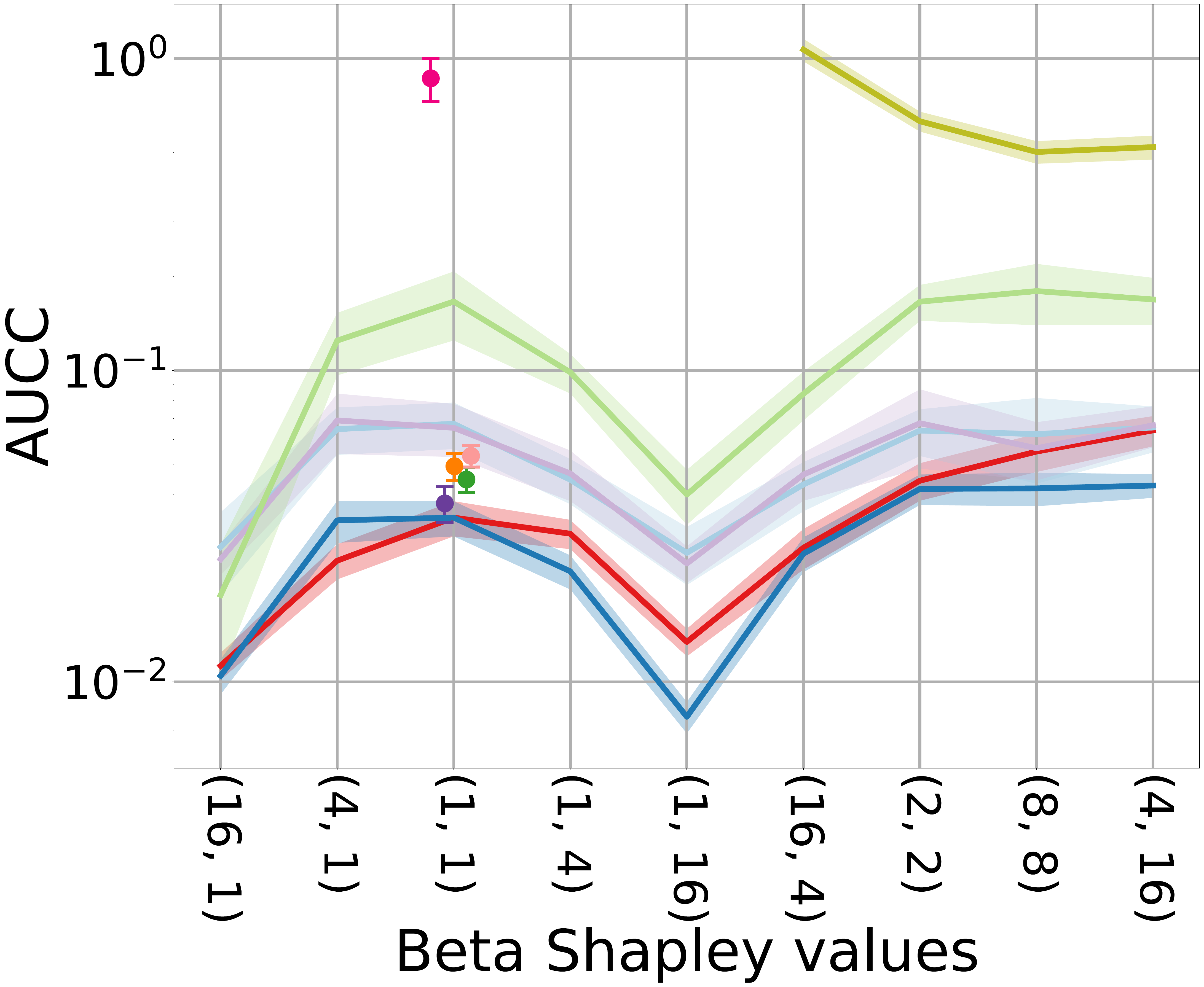} \\
			\includegraphics[width=0.3\linewidth]{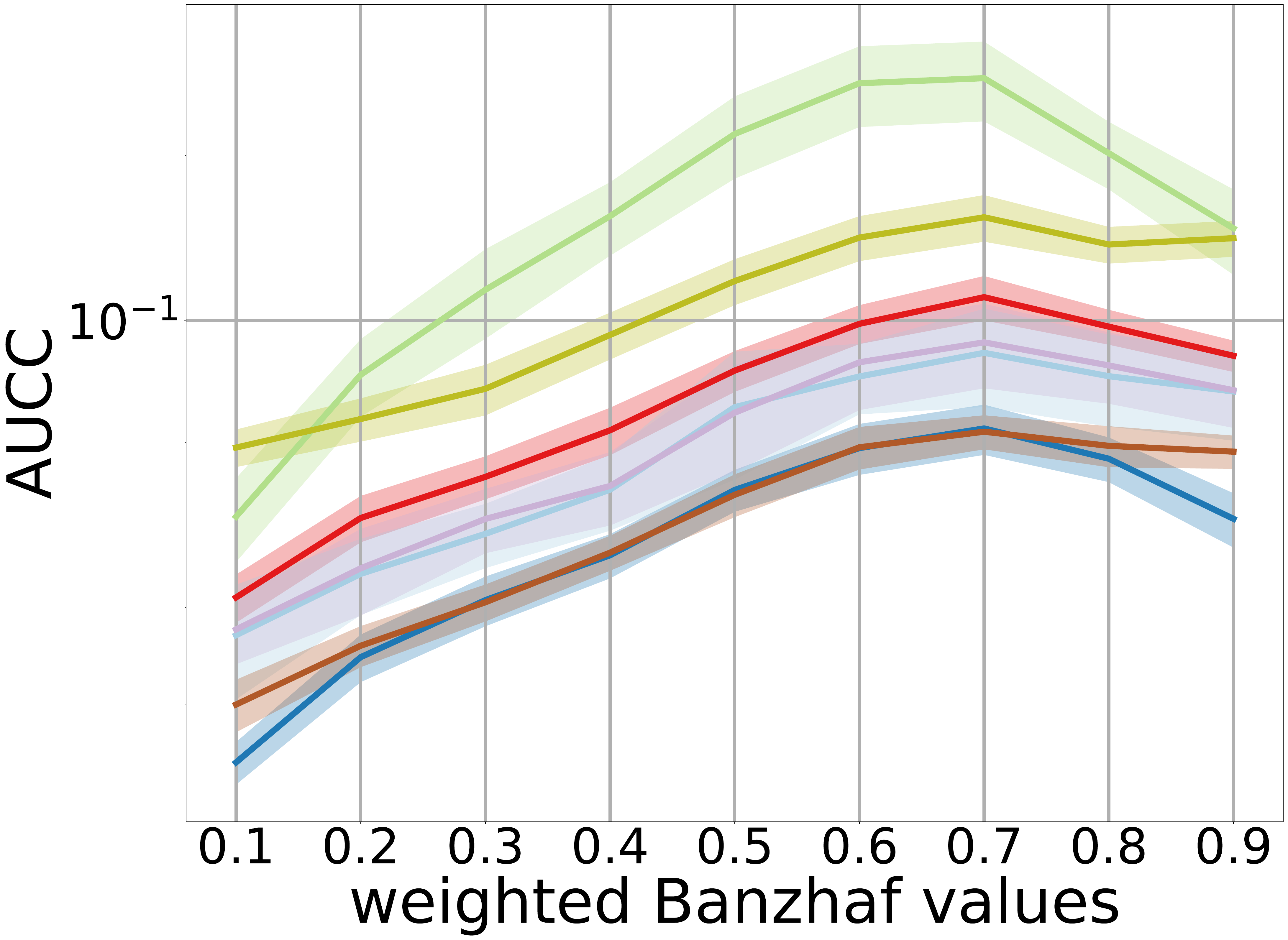} & \includegraphics[width=0.3\linewidth]{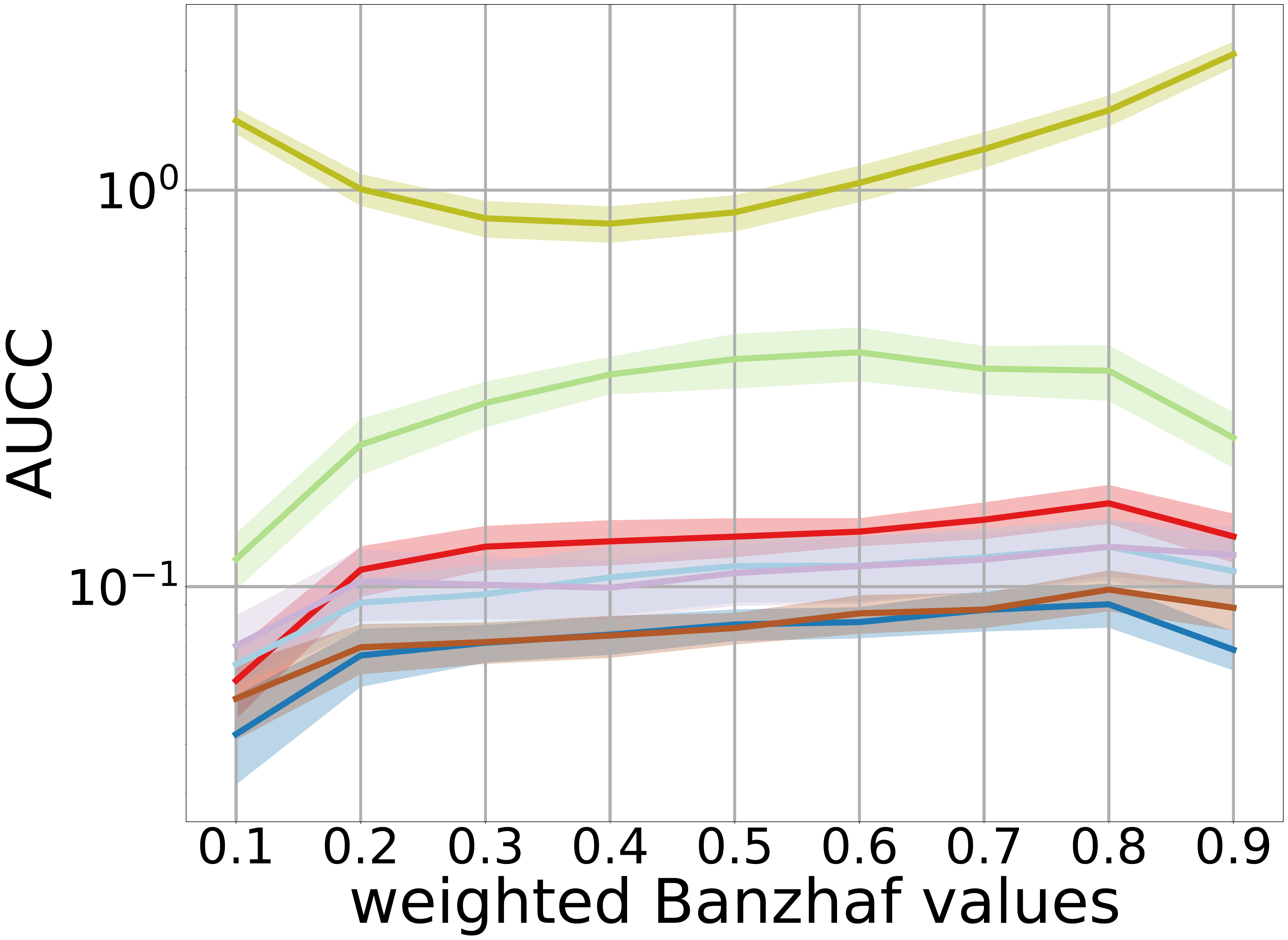} &
			\includegraphics[width=0.3\linewidth]{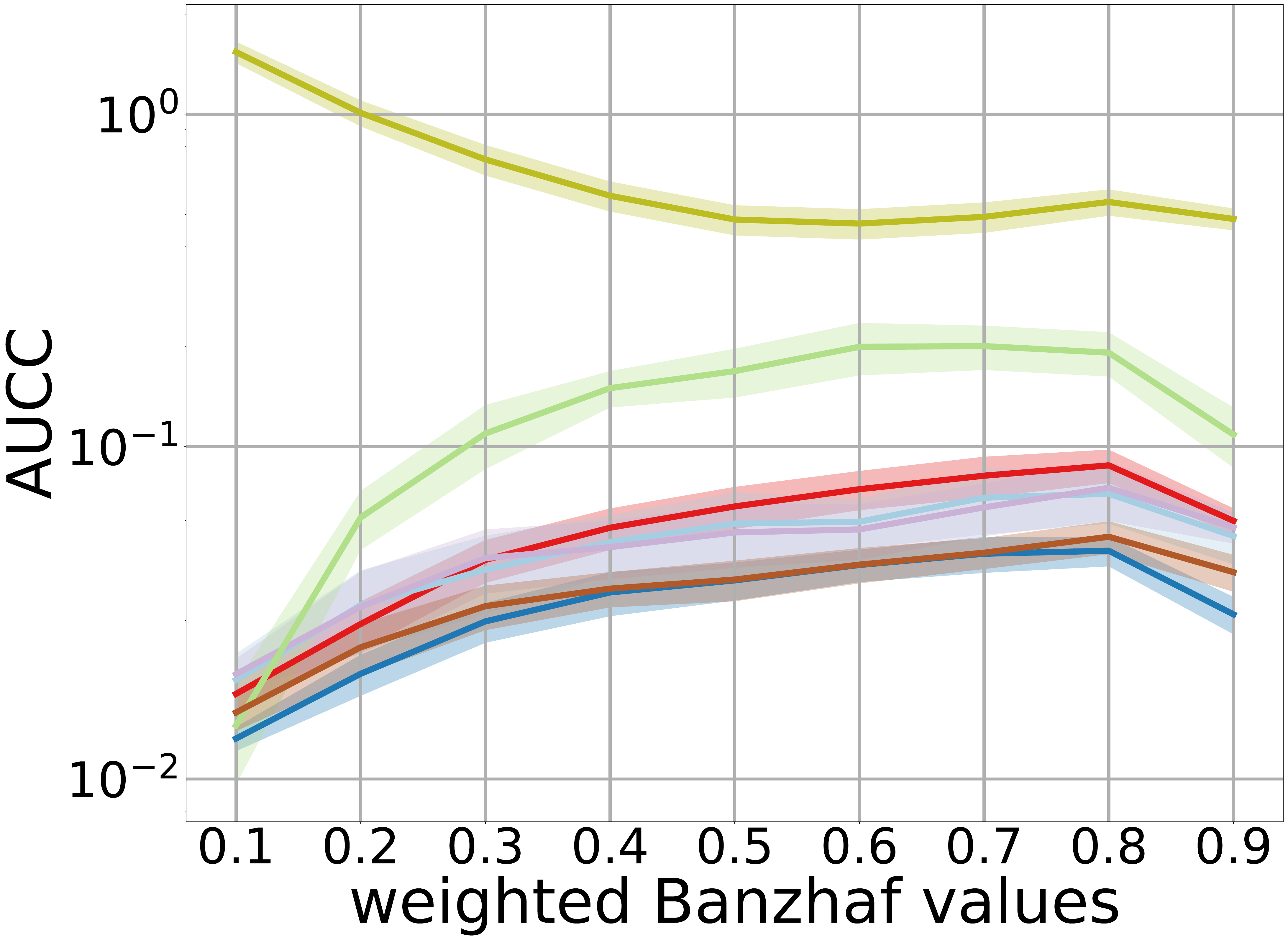} \\
			iris ($ n=24 $) & MNIST ($ n=24 $) & FMNIST ($ n=24 $)\\
			\includegraphics[width=0.3\linewidth]{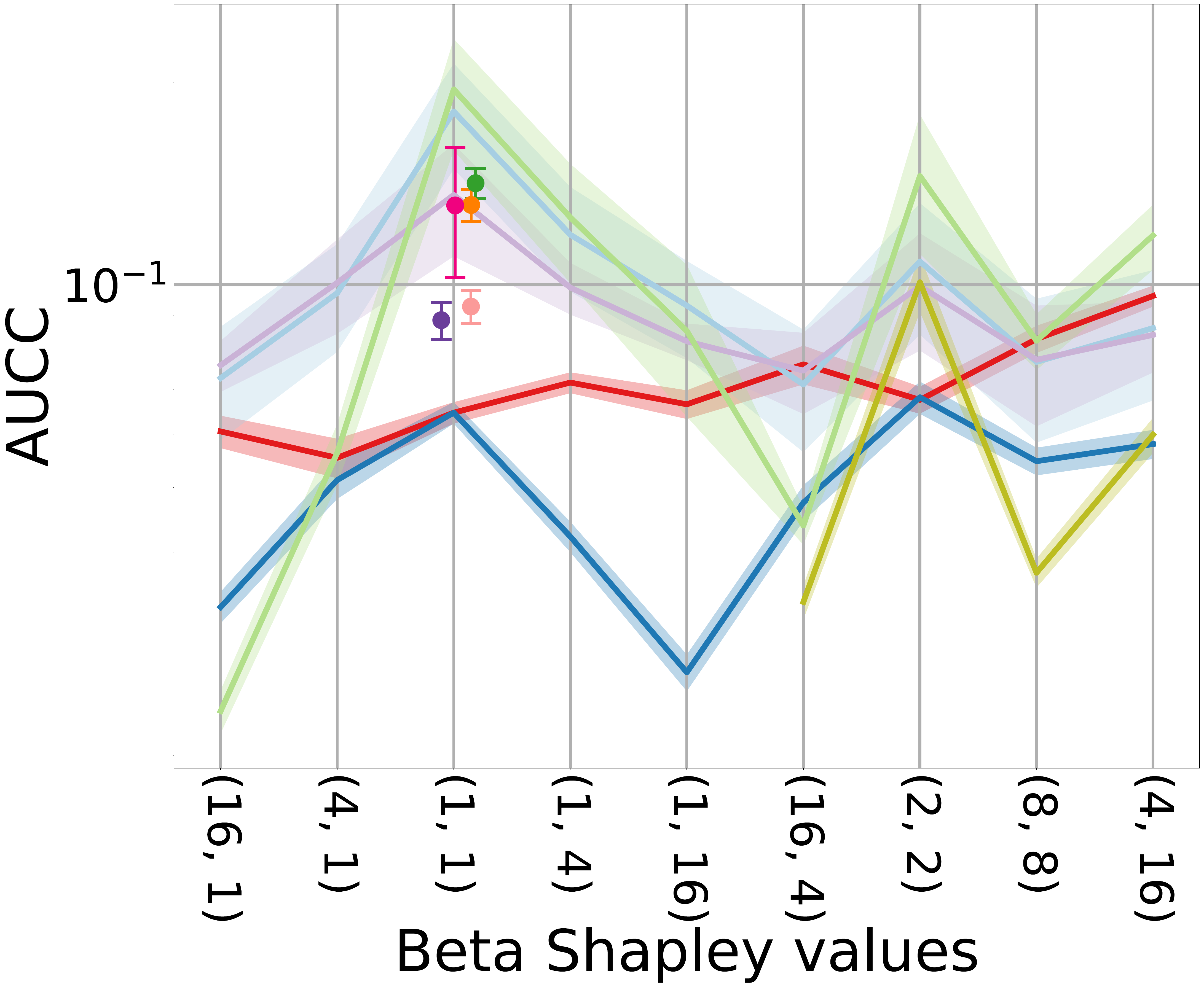} & \includegraphics[width=0.3\linewidth]{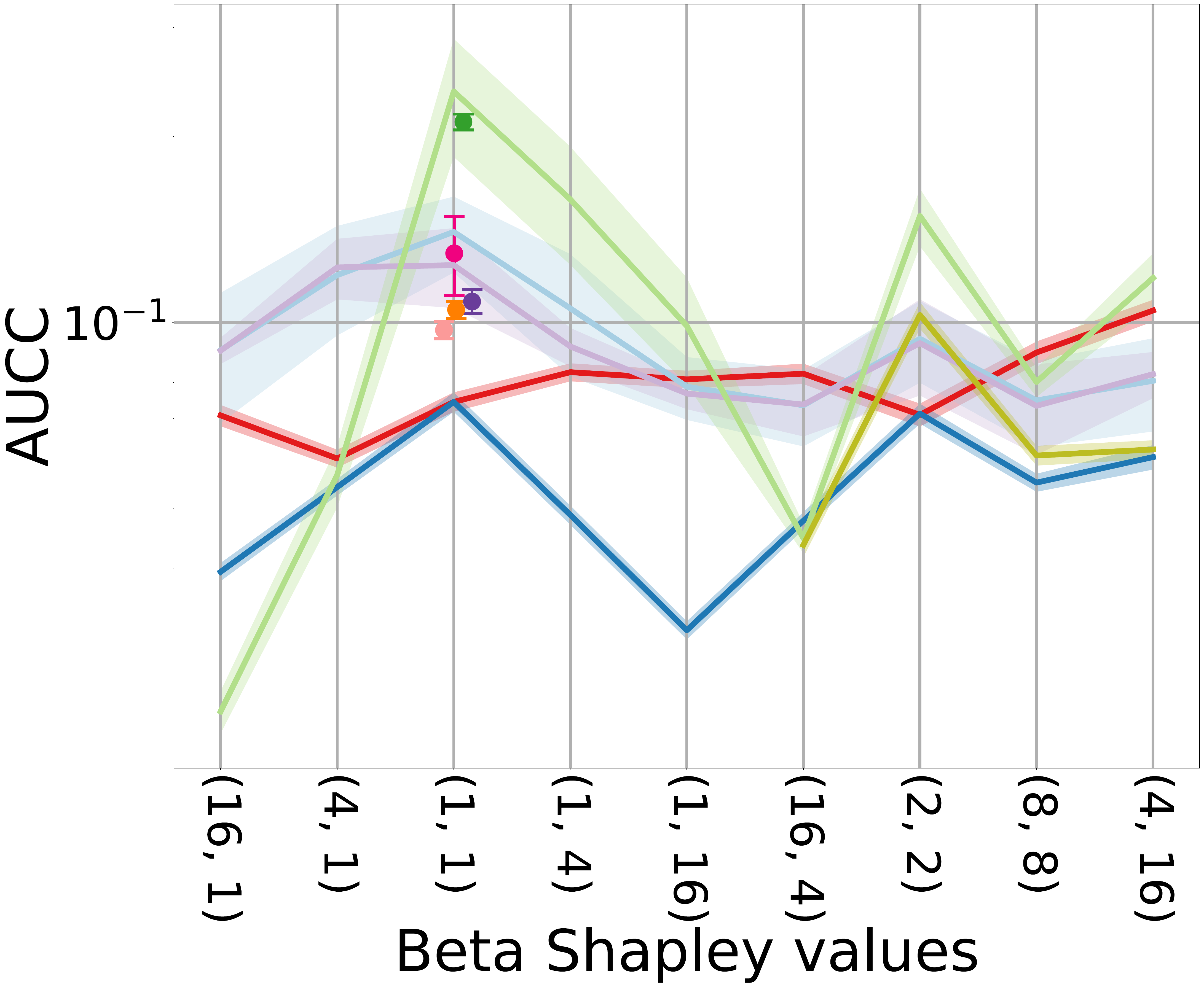} &
			\includegraphics[width=0.3\linewidth]{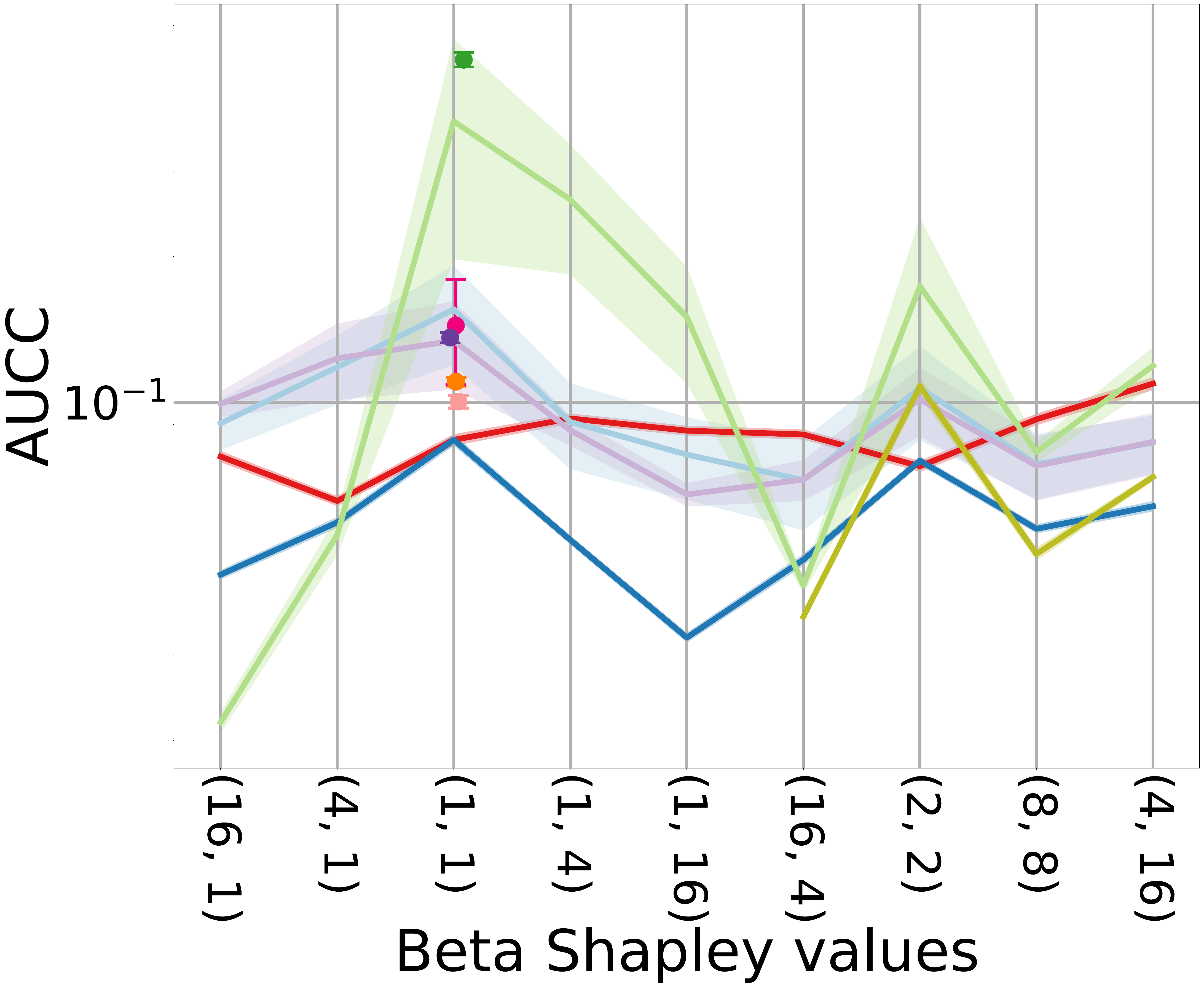} \\
			\includegraphics[width=0.3\linewidth]{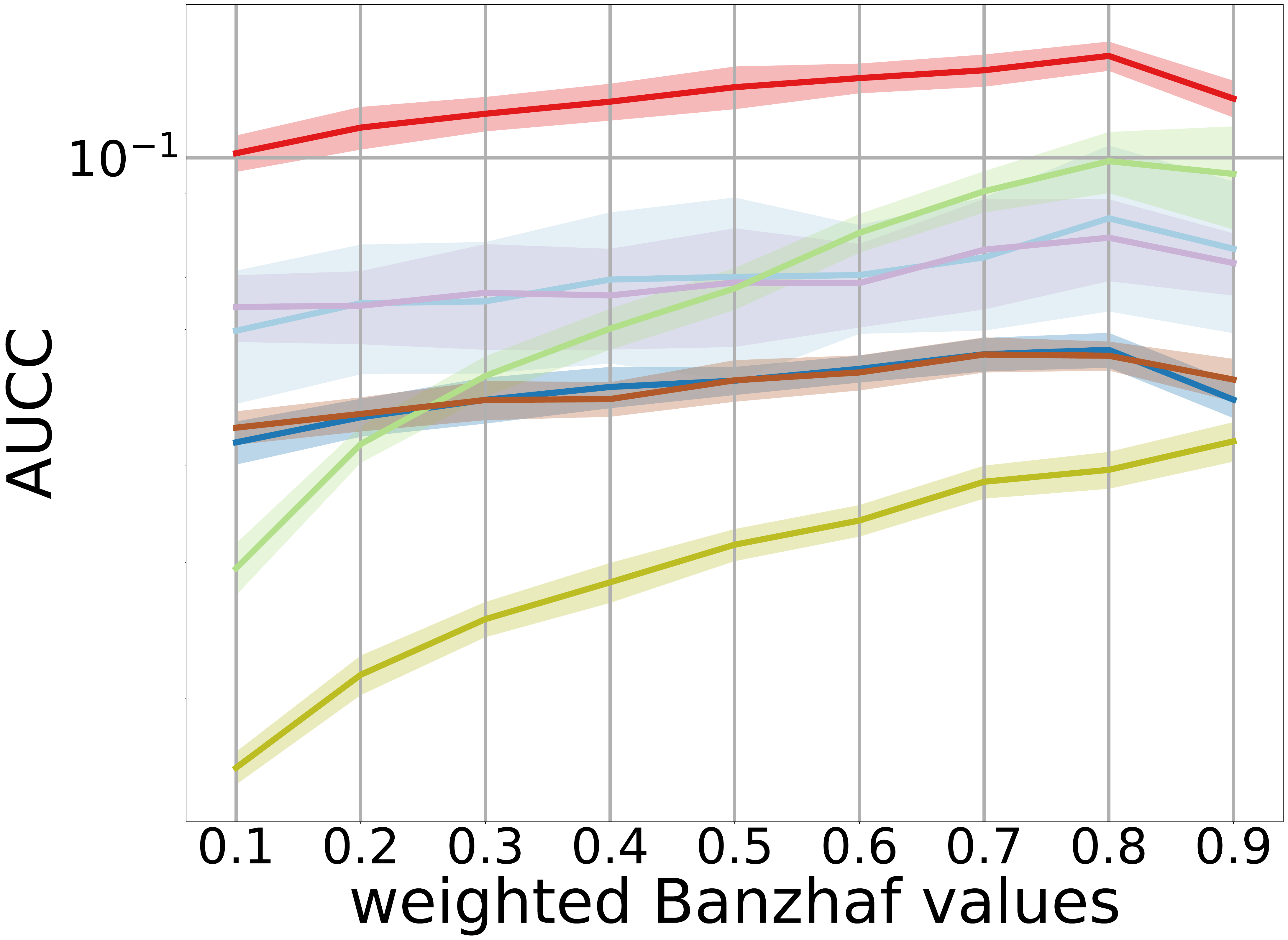} & \includegraphics[width=0.3\linewidth]{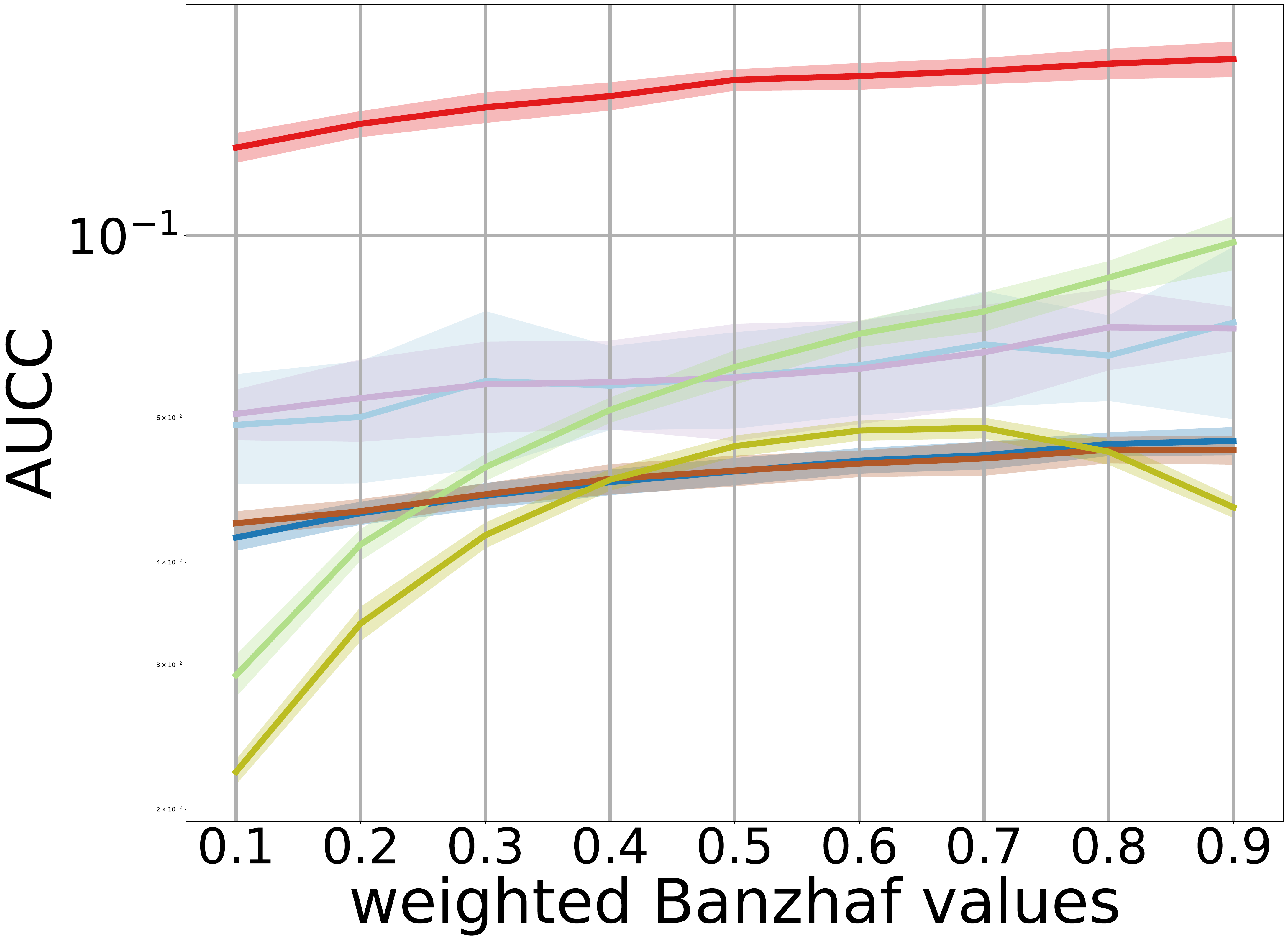} &
			\includegraphics[width=0.3\linewidth]{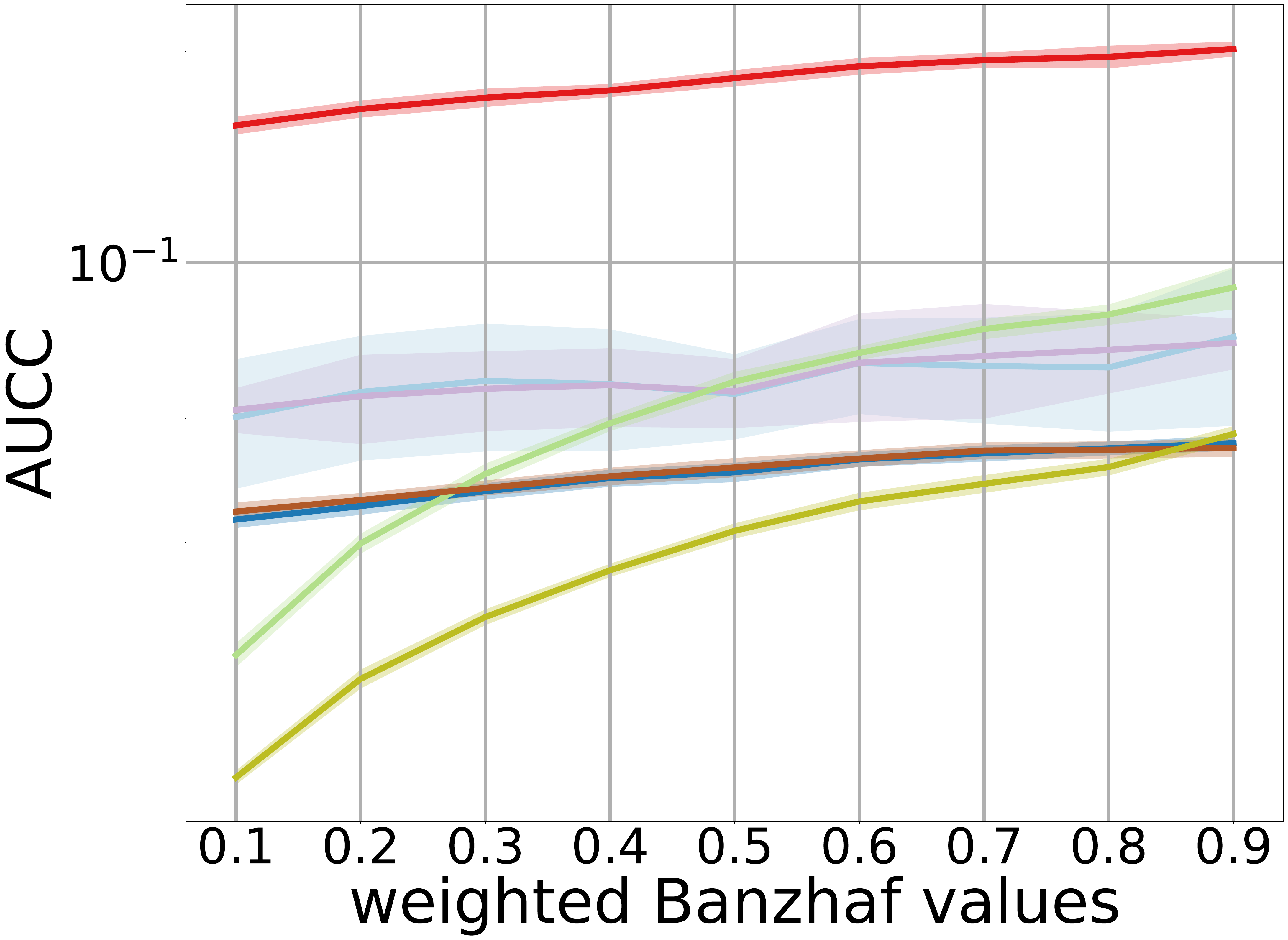}\\
			SOU ($ n=64 $) & SOU ($ n=128 $) & SOU ($ n=256 $)
		\end{tabular}
		\vspace{-.5em}
		\caption{Comparison of twelve estimators using six utility functions. All the AUCCs are reported with standard deviation using $ 30 $ random seeds. Smaller AUCC indicates faster convergence rate.}
		\vspace{-1em}
		\label{fig:generic}
	\end{figure}

	\paragraph{Verification of Our OFA-S Estimator}
	Next, we verify the faster convergence rate of our OFA-S estimator, using $ \mathbf{q}^{\text{OFA-S}} $ as defined in Eq.~\eqref{eq:optimal q}. The baselines we employ in this experiment include: kernelSHAP \citep{lundberg2017unified}, unbiased kernelSHAP \citep{covert2021improving}, GELS and GELS-Shapley \citep{li2024faster}, ARM \citep{kolpaczki2024approximating,li2024faster}, complement \citep{zhang2023efficient}, group testing \citep{jia2019towards,wang2023note}, AME \citep{lin2022measuring}, MSR \citep{wang2023data} and sampling lift \citep{moehle2021portfolio}. Note that not all the baselines are designed for all the probabilistic values we employ. For example, the complement estimator only works for Beta$ (1,1) $, \ie, the Shapley value. The corresponding results are presented in Figure~\ref{fig:generic}. 
	
	First, our OFA-S estimator is indeed faster than our OFA-A estimator, which aligns exactly with our theory; in other words, it implies that our proposed $ D(\mathbf{m}, \mathbf{p}) $ indeed determines the convergence rate of our Algorithm~\ref{alg:ofa}. Second, our OFA-S estimator always performs the best except on the SOU games which require only $ n^{2} $ utility evaluation to get the exact values; by contrast, the utility function defined using the classification datasets require $ 2^{n} $ utility evaluations instead. Third, our proposed estimator is consistently the fastest on the commonly-used Beta$ (1,1) $, i.e., the Shapley value; note that $ \mathbf{q}^{\text{OFA-A}} = \mathbf{q}^{\text{OFA-S}} $ for the Shapley value; therefore, our proposed estimator achieves the currently best convergence rate both empirically and theoretically.

	\section{Conclusion}
	In this work, we propose a framework, termed OFA, that i) adheres to the principle of maximum sample reuse and ii) contains no amplifying scalars for the goal of optimizing all probabilistic values simultaneously and efficiently. Particularly, our OFA framework is parameterized by a sampling vector $ \mathbf{q} \in \mathbb{R}^{n-3} $. To gain insights, we theoretically develop a key formula $ D(\mathbf{m}, \mathbf{q}) $ concerning this framework that effectively determines the corresponding convergence rate. By optimizing $ \mathbf{q} $ in $ D(\mathbf{m}, \mathbf{q}) $ for all probabilistic values on average, we obtain our one-for-all estimator that can theoretically approximate all probabilistic values simultaneously with the currently best convergence rate $ O(n\log n) $ on average. Meanwhile, we propose a faster generic estimator by optimizing $ \mathbf{q} $ for each specific probabilistic value, and we demonstrate that our generic estimate enjoys the best convergence rate for all previously-studied probabilistic values. 
	All of our theoretical findings are verified in our experiments.
	Finally, we establish a connection between probabilistic values and the least square regressions used in datamodels, showing that our OFA-A estimator is capable of solving a family of (regularized) datamodels simultaneously. 
	
	
	\clearpage
	
	\section*{Acknowledgements}
	We thank the reviewers and the area chair for thoughtful comments that have improved our final presentation. 
	YY gratefully acknowledges NSERC and CIFAR for funding support.

	\printbibliography[title={References}]
	\newpage
	\appendix

	\section{Proof of Theorem~\ref{thm:convergence}} \label{app:theorem 1}
	\convergenceFormula*
	\begin{proof}
		Following Algorithm~\ref{alg:ofa}, let $ \{ S_{t} \}_{t=1}^{T} $ be $ T $ independent random subsets. Define
		\begin{equation}
			\begin{gathered}
				T_{i,s}^{+} = \sum_{t=1}^{T} \idt{i\in S_{t}, |S_{t}|=s} \text{ and } T_{i,s}^{-} = \sum_{t=1}^{T} \idt{i \not\in S_{t}, |S_{t}|=s} 
			\end{gathered}
		\end{equation}
		where $ s = 2,3,\dots,n-2 $.
		Then, we have
		\begin{equation}
			\begin{gathered}
				\hat{\phi}_{i,s}^{+} = \frac{1}{T_{i,s}^{+}}\sum_{i=1}^{T} \idt{i\in S_{t}, |S_{t}|=s}\cdot U(S_{t}) \text{ and } \hat{\phi}_{i,s}^{-} = \frac{1}{T_{i,s}^{-}}\sum_{i=1}^{T} \idt{i \not\in S_{t}, |S_{t}|=s}\cdot U(S_{t}) .
			\end{gathered}
		\end{equation}
		Define $ r^{+}_{i,s} = \frac{T_{i,s}^{+}}{T} $ and $ r^{-}_{i,s} = \frac{T_{i,s}^{-}}{T} $. In particular, both $ \idt{i\in S_{t}, |S_{t}|=s} $ and $ \idt{i\not\in S_{t}, |S_{t}|=s} $ are Bernoulli random variables with
		\begin{equation}
			\begin{gathered}
				\mathbb{E}[r_{i,s}^{+}] = q_{s-1}\binom{n-1}{s-1}\binom{n}{s}^{-1} = \frac{q_{s-1}\cdot s}{n} \text{ and } \mathbb{E}[r_{i,s}^{-}] = q_{s-1} \binom{n-1}{s}\binom{n}{s}^{-1} = \frac{q_{s-1}\cdot (n-s)}{n} .
			\end{gathered}
		\end{equation}
		Additionally, $ \bm{R} $ and $ \boldsymbol\tau $ are defined to be vectors in $ \mathbb{R}^{2n-6} $ such that $ R_{2k-1} = r^{+}_{i,k+1} $, $ R_{2k} = r^{-}_{i,k+1} $, $ \tau_{2k-1} = \frac{q_{k}\cdot (k+1)}{n} $ and $ \tau_{2k} = \frac{q_{k}\cdot(n-k-1)}{n} $ for $ k \in [n-3] $. 
		Note that $ \bm{R} $ is a random vector.	
		By Hoeffding's inequality,
		\begin{equation}
			\begin{gathered}
				P(|R_{j} - \tau_{j}| \geq \omega) \leq 2\exp\left( -2T\omega^{2} \right)
			\end{gathered}
		\end{equation}
		where $ \omega > 0 $, and thus
		\begin{equation}
			\begin{gathered}
				P(\| \bm{R} - \boldsymbol\tau \|_{\infty} \geq \omega) \leq P(\bigcup_{j \in [2n-6]} |R_{j} - \tau_{j}| \geq \omega) \leq (4n-12)\exp\left( -2T\omega^{2} \right) .
			\end{gathered}
		\end{equation}
		Denote the event $ \{ \sum_{s=2}^{n-2}[ m_{s}(\hat{\phi}_{i,s}^{+}-\phi_{i,s}^{+}) + m_{s+1}(\phi_{i,s}^{-} - \hat{\phi}_{i,s}^{-})] \geq \epsilon \} $ by $ E_{i} $ where $ \epsilon > 0 $. 
		Let $ \mathcal{C} $ be the set that contains all possible configurations $ \mathbf{C} \in \{ 0,1 \}^{(2n-6)\times T} $ such that $ \frac{\mathbf{C}\mathbf{1}_{T}}{T} = \bm{R} $ and $ \mathbf{1}_{T}^{\top}\mathbf{C} = \mathbf{1}_{2n-6}^{\top} $, i.e., $ C_{j,k} = 0 $ indicates that the $ k $-th subset is sampled from $ \{ R \subseteq [n] \mid r = (j+3)/2 \text{ and } i \in R \} $ if $ j $ is odd and $ \{ R \subseteq [n] \mid r = (j+2)/2 \text{ and } i\not\in R \} $ otherwise. 
		Then,
		\begin{equation}
			\begin{aligned}
				P(E_{i}) = \sum_{\mathbf{C} \in \mathcal{C}} P(E_{i} \cap \mathbf{C}) = \sum_{\mathbf{C} \in \mathcal{C}} P(E_{i} \mid \mathbf{C})\cdot P(\mathbf{C}) .
			\end{aligned}
		\end{equation}
		Observe that $ \mathcal{C} $ can be divided into two separate groups $ \mathcal{C}_{<\omega} $ and $ \mathcal{C}_{\geq\omega} $ such that
		\begin{equation}
			\begin{gathered}
				\sum_{\mathbf{C}_{<\omega} \in \mathcal{C}_{<\omega}}P(\mathbf{C}_{<\omega}) = P(\| \bm{R} - \boldsymbol\tau \|_{\infty} < \omega) \text{ and } \sum_{\mathbf{C}_{\geq\omega}\in \mathcal{C}_{\geq\omega}}P(\mathbf{C}_{\geq\omega}) = P(\| \bm{R} - \boldsymbol\tau \|_{\infty} \geq \omega) .
			\end{gathered}
		\end{equation}
		Therefore,
		\begin{equation} \label{eq:bound Ei}
			\begin{aligned}
				P(E_{i}) &= \sum_{\mathbf{C}_{<\omega} \in \mathcal{C}_{<\omega}} P(E_{i} \mid \mathbf{C}_{<\omega})\cdot P(\mathbf{C}_{<\omega}) + \sum_{\mathbf{C}_{\geq\omega} \in \mathcal{C}_{\geq\omega}} P(E_{i} \mid \mathbf{C}_{\geq\omega})\cdot P(\mathbf{C}_{\geq\omega}) \\
				\leq& \sum_{\mathbf{C}_{<\omega} \in \mathcal{C}_{<\omega}} P(E_{i} \mid \mathbf{C}_{<\omega})\cdot P(\mathbf{C}_{<\omega}) + (4n-12)\exp\left( -2T\omega^{2} \right) .
			\end{aligned}
		\end{equation}
		
		For simplicity, we write $ P_{\mathbf{C}_{<\omega}}(E_{i}) $ instead of $ P(E_{i} \mid \mathbf{C}_{<\omega}) $. Additionally, we assume $ \omega < \frac{\gamma(\mathbf{q})}{2} $ so that neither $ T_{i,s}^{+} $ nor $ T_{i,s}^{-} $ is zero when conditioned on any $ \mathbf{C}_{<\omega} $.
		By the Chernoff bound, for any $ \lambda > 0 $, there is
		\begin{equation}
			\begin{aligned}
				P_{\mathbf{C}_{<\omega}}(E_{i}) &\leq \mathbb{E}_{\mathbf{C}_{<\omega}}\left[ \exp\left( \lambda\sum_{s=2}^{n-2}\left( m_{s}(\hat{\phi}_{i,s}^{+}-\phi_{i,s}^{+}) + m_{s+1}(\phi_{i,s}^{-} - \hat{\phi}_{i,s}^{-}) \right) \right) \right] \cdot e^{-\lambda \epsilon}\\
				&= e^{-\lambda \epsilon} \prod_{s=2}^{n-2}\mathbb{E}_{\mathbf{C}_{<\omega}}\left[\exp\left( \lambda m_{s}(\hat{\phi}_{i,s}^{+} - \phi_{i,s}^{+}) \right)\right] \prod_{s=2}^{n-2}\mathbb{E}_{\mathbf{C}_{<\omega}}\left[ \exp\left( \lambda m_{s+1}(\phi_{i,s}^{-} - \hat{\phi}_{i,s}^{-}) \right) \right]
			\end{aligned}
		\end{equation}
		where the equality is due to the independence that stems from the independence of random subsets and that the configuration is fixed.
		Moreover,
		\begin{equation}
			\begin{aligned}
				\mathbb{E}_{\mathbf{C}_{<\omega}}\left[\exp\left( \lambda m_{s}(\hat{\phi}_{i,s}^{+} - \phi_{i,s}^{+}) \right)\right] &= \mathbb{E}_{\mathbf{C}_{<\omega}}\left[ \exp\left( \lambda m_{s} \frac{1}{T_{i,s}^{+}}\sum_{j=1}^{T_{i,s}^{+}}( U(S_{i,s,j}^{+}) - \phi_{i,s}^{+}) \right) \right]\\
				&= \prod_{j=1}^{T_{i,s}^{+}} \mathbb{E}_{\mathbf{C}_{<\omega}}\left[ \exp\left(  \frac{\lambda m_{s}}{T_{i,s}^{+}}( U(S_{i,s,j}^{+}) - \phi_{i,s}^{+}) \right) \right]
			\end{aligned}
		\end{equation}
		where  $ \{ S_{i,s,j}^{+} \}_{1\leq j\leq T_{i,s}^{+}} $ is obtained by ordering $ \{ S_{t} \mid |S_{t}| = s \text{ and } i \in S_{t} \} $.
		In a similar fashion, we have
		\begin{equation}
			\begin{gathered}
				\mathbb{E}_{\mathbf{C}_{<\omega}}\left[ \exp\left( \lambda m_{s+1}(\phi_{i,s}^{-} - \hat{\phi}_{i,s}^{-}) \right) \right] = \prod_{j=1}^{T_{i,s}^{-}}\mathbb{E}_{\mathbf{C}_{<\omega}}\left[ \exp\left( \frac{\lambda m_{s+1}}{T_{i,s}^{-}}(\phi_{i,s}^{-} - U(S_{i,s,j}^{-})) \right) \right]
			\end{gathered}
		\end{equation}
		By Hoeffding's lemma, 
		\begin{equation}
			\begin{gathered}
				\mathbb{E}_{\mathbf{C}_{<\omega}}\left[ \exp\left(  \frac{\lambda m_{s}}{T_{i,s}^{+}}(U(S_{i,s,j}^{+}) - \phi_{i,s}^{+}) \right) \right] \leq \exp\left( \frac{\lambda^{2}m_{s}^{2}u^{2}}{2T_{i,s}^{+}\cdot T_{i,s}^{+}} \right) ,
				\\
				\mathbb{E}_{\mathbf{C}_{<\omega}}\left[ \exp\left(  \frac{\lambda m_{s+1}}{T_{i,s}^{-}}(\phi_{i,s}^{-} - U(S_{i,s,j}^{-})) \right) \right] \leq \exp\left( \frac{\lambda^{2}m_{s+1}^{2}u^{2}}{2T_{i,s}^{-}\cdot T_{i,s}^{-}} \right) ,
			\end{gathered}
		\end{equation}
		which leads to 
		\begin{equation}
			\begin{gathered}
				\prod_{j=1}^{T_{i,s}^{+}} \mathbb{E}_{\mathbf{C}_{<\omega}}\left[ \exp\left(  \frac{\lambda m_{s}}{T_{i,s}^{+}}( U(S_{i,s,j}^{+}) - \phi_{i,s}^{+}) \right) \right] \leq \exp\left( \frac{\lambda^{2}m_{s}^{2} u^{2}}{2T_{i,s}^{+}} \right) ,\\
				\prod_{j=1}^{T_{i,s}^{-}}\mathbb{E}_{\mathbf{C}_{<\omega}}\left[ \exp\left( \frac{\lambda m_{s+1}}{T_{i,s}^{-}}(\phi_{i,s}^{-} - U(S_{i,s,j}^{-})) \right) \right] \leq \exp\left( \frac{\lambda^{2}m_{s+1}^{2} u^{2}}{2T_{i,s}^{-}} \right) .
			\end{gathered}
		\end{equation}
		Therefore, 
		\begin{equation}
			\begin{gathered}
				P_{\mathbf{C}_{<\omega}}(E_{i})
				\leq \exp\left( \frac{\lambda^{2}u^{2}}{2T}\hat{D} - \lambda \epsilon \right)
			\end{gathered}
		\end{equation}
		where $ \hat{D} = \sum_{s=2}^{n-2}\left( \frac{T}{T_{i,s}^{+}}m_{s}^{2} + \frac{T}{T_{i,s}^{-}}m_{s+1}^{2} \right) $. 
		Next, we aim to show that $ |\hat{D} - D(\mathbf{m}, \mathbf{q})| \leq D(\mathbf{m}, \mathbf{q}) $. Observe that
		\begin{equation}
			\begin{gathered}
				|\hat{D} - D(\mathbf{m}, \mathbf{q})| \leq \sum_{s=2}^{n-2}\left(\left|\frac{1}{r_{2s-3}} - \frac{1}{\tau_{2s-3}}\right|m_{s}^{2} - \left|\frac{1}{r_{2s-2}} - \frac{1}{\tau_{2s-2}}\right|m_{s+1}^{2} \right) ,
			\end{gathered}
		\end{equation}
		and since $ |r_{j} - \tau_{j}| < \omega $,
		\begin{equation}
			\begin{gathered}
				\left|\frac{1}{r_{j}} - \frac{1}{\tau_{j}}\right| \leq \frac{\omega}{(\tau_{j} - \omega)\tau_{j}} = \frac{1}{\tau_{j} - \omega} - \frac{1}{\tau_{j}} .
			\end{gathered}
		\end{equation}
		Since $ \gamma(\mathbf{q}) \leq \tau_{j} $ and $ \omega \leq \frac{\gamma(\mathbf{q})}{2} $,
		\begin{equation}
			\begin{gathered}
				\frac{1}{\tau_{j} - \omega} = \frac{\tau_{j}}{\tau_{j} - \omega}\cdot \frac{1}{\tau_{j}} = \frac{1}{1 - \frac{\omega}{\tau_{j}}}\cdot \frac{1}{\tau_{j}} \leq \frac{2}{\tau_{j}} .
			\end{gathered}
		\end{equation}
		As a result, we have $ |\hat{D} - D(\mathbf{m}, \mathbf{q})| \leq D(\mathbf{m}, \mathbf{q}) $, and thus
		\begin{equation} \label{eq:bound conditioned}
			\begin{gathered}
				P_{\mathbf{C}_{<\omega}}(E_{i}) \leq \exp\left( \frac{\lambda^{2}u^{2}}{T}D(\mathbf{m},\mathbf{q}) - \lambda\epsilon \right) .
			\end{gathered}
		\end{equation}
		Combining Eqs.~\eqref{eq:bound Ei} and~\eqref{eq:bound conditioned} yields
		\begin{equation}
			\begin{gathered}
				P(E_{i}) \leq \exp\left( \frac{\lambda^{2}u^{2}}{T}D(\mathbf{m}, \mathbf{q}) - \lambda \epsilon \right) + (4n-12)\exp(-2T\omega^{2}) .
			\end{gathered}
		\end{equation}
		
		Choosing $ \lambda > 0 $ that minimizes the upper bound yields
		\begin{equation}
			\begin{gathered}
				P(E_{i}) \leq \exp\left( -\frac{T\epsilon^{2}}{4u^{2}D(\mathbf{m}, \mathbf{q})} \right) + (4n-12)\exp(-2T\omega^{2}) .
			\end{gathered}
		\end{equation}
		Solving the equation $ -\frac{T\epsilon^{2}}{4u^{2}D(\mathbf{m}, \mathbf{q})} = -2T\omega^{2} $  yields $ \omega = \sqrt{\frac{\epsilon^{2}}{8D(\mathbf{m}, \mathbf{q})u^{2}}} $, which gives
		\begin{equation}
			\begin{gathered}
				-2T\omega^{2} = -\frac{T\epsilon^{2}}{4D(\mathbf{m}, \mathbf{q})u^{2}} .
			\end{gathered}
		\end{equation}
		Particularly, to meet the assumption $ \omega \leq \frac{\gamma(\mathbf{q})}{2} $, we have to have $ \epsilon \leq \sqrt{2D(\mathbf{m}, \mathbf{q})\gamma(\mathbf{q})^{2}u^{2}} $. To conclude, provided that $ \epsilon \leq \sqrt{2D(\mathbf{m}, \mathbf{q})\gamma(\mathbf{q})^{2}u^{2}} $, we have
		\begin{equation}
			\begin{gathered}
				P(\sum_{s=2}^{n-2} \left( m_{s}(\hat{\phi}_{i,s}^{+}-\phi_{i,s}^{+}) + m_{s+1}(\phi_{i,s}^{-} - \hat{\phi}_{i,s}^{-}) \right) \geq \epsilon) \leq 4n\exp(-\frac{T\epsilon^{2}}{4D(\mathbf{m}, \mathbf{q})u^{2}}) .
			\end{gathered}
		\end{equation}
		Similarly, there is
		\begin{equation}
			\begin{gathered}
				P(\sum_{s=2}^{n-2} \left( m_{s}(\phi_{i,s}^{+}-\hat{\phi}_{i,s}^{+}) + m_{s+1}(\hat{\phi}_{i,s}^{-} - \phi_{i,s}^{-}) \right) \geq \epsilon) \leq 4n\exp(-\frac{T\epsilon^{2}}{4D(\mathbf{m}, \mathbf{q})u^{2}}) ,
			\end{gathered}
		\end{equation}
		and thus
		\begin{equation}
			\begin{gathered}
				P(|\hat{\phi}_{i} - \phi_{i}| \geq \epsilon) \leq 8n\exp(-\frac{T\epsilon^{2}}{4D(\mathbf{m}, \mathbf{q})u^{2}}) .
			\end{gathered}
		\end{equation}

		Eventually, we have
		\begin{equation}
			\begin{gathered}
				P(\| \hat{\boldsymbol\phi} - \boldsymbol\phi \|_{2} \geq \epsilon) \leq P(\bigcup_{i\in [n]} |\hat{\phi}_{i} - \phi_{i}|\geq \frac{\epsilon}{\sqrt{n}}) \leq 8n^{2}\exp(-\frac{T\epsilon^{2}}{4nD(\mathbf{m}, \mathbf{q})u^{2}}) .
			\end{gathered}
		\end{equation}
		Solving $ \delta \geq 8n^{2}\exp(-\frac{T\epsilon^{2}}{4nD(\mathbf{m}, \mathbf{q})u^{2}}) $ yields $ T \geq \frac{4nD(\mathbf{m}, \mathbf{q})u^{2}}{\epsilon^{2}}\log\frac{8n^{2}}{\delta} $. Note the assumption $ \epsilon \leq \sqrt{2D(\mathbf{m}, \mathbf{q})\gamma(\mathbf{q})^{2}u^{2}} $ can be removed if the configuration is fixed with $ T^{+}_{i,s} \approx \frac{s\cdot q_{s-1}}{n} T $ and $ T_{i,s}^{-} \approx \frac{(n-s)q_{s-1}}{n}T $.
	\end{proof}

	\section{Proofs of Propositions} \label{app:propositions}
	\ofaEstimator*
	\begin{proof}
		Let $ \Lambda = \{ \mathbf{x} \in \mathbb{R}^{n-1} \mid 0 \leq \sum_{j=1}^{n-1} x_{j} \leq L_{n} \} $ where $ L_{n} = n^{\frac{1}{2(n-1)}} $, and
		a smooth homeomorphism $ f : \Lambda \to \Delta $ is defined by letting
		\begin{equation}
			\begin{gathered}
				f(\mathbf{x}) = \frac{1}{L_{n}}(x_{1}, x_{2}, \cdots, x_{n-1}, L_{n}-\sum_{j=1}^{n-1}x_{j})^{\top} .
			\end{gathered}
		\end{equation}
		In other words, both $ f $ and $ f^{-1} $ are $ C^{\infty} $.
		Since the volume of $ \Delta $ is $ \frac{n^{\frac{1}{2}}}{(n-1)!} $, there is
		\begin{gather*}
			\frac{(n-1)!}{n^{\frac{1}{2}}}\int_{\mathbf{x} \in \Lambda} D(f(\mathbf{x}), \mathbf{q})  \sqrt{\det\left( Df(\mathbf{x})^{\top} Df(\mathbf{x}) \right)} \mathrm{d}\mathbf{x}
			= \int_{\mathbf{m} \in \Delta} D(\mathbf{m}, \mathbf{q}) \mathrm{d}\nu(\mathbf{m}) .
		\end{gather*}
		Note that $ \sqrt{\det\left( Df(\mathbf{x})^{\top} Df(\mathbf{x}) \right)} = 1 $ for every $ \mathbf{x} \in \Lambda $.
		With $ \overline{\Lambda} = \{ \mathbf{y} \in \mathbb{R}^{n-1} \mid 0 \leq \sum_{j=1}^{n-1}y_{j} \leq 1 \} $, we have
		\begin{equation}
			\begin{gathered}
				\frac{(n-1)!}{ n^{\frac{1}{2}}}\int_{\mathbf{x} \in \Lambda} D(f(\mathbf{x}), \mathbf{q}) \mathrm{d}\mathbf{x} = (n-1)! \int_{\mathbf{y} \in \overline{\Lambda}} D(f(L_{n}\mathbf{y}), \mathbf{q}) \mathrm{d}\mathbf{y} .
			\end{gathered}
		\end{equation}
		For simplicity, assume that $ n=4 $, notice that
		\begin{equation}
			\begin{gathered}
				\int_{y\in\overline{\Lambda}} y_{n-1}^{2} \mathrm{d}\mathbf{y} = \int_{0}^{1}\mathrm{d}y_{1}\int_{0}^{1-y_{1}}\mathrm{d}y_{2}\int_{0}^{1-y_{1}-y_{2}} y_{3}^{2} \mathrm{d}y_{3} = \frac{1}{3\cdot 4\cdot 5} = \frac{1}{\prod_{k=1}^{n-1}(2+k)} .
			\end{gathered}
		\end{equation}
		Therefore,
		\begin{gather*}
			\int_{\mathbf{y} \in \overline{\Lambda}} D(f(L_{n}\mathbf{y}), \mathbf{q}) \mathrm{d}\mathbf{y}
			= \sum_{s=2}^{n-2}\frac{n}{q_{s-1}}\int_{y\in\overline{\Lambda}} \left( \frac{y_{s}^{2}}{s} + \frac{y_{s+1}^{2}}{n-s} \right) \mathrm{d}\mathbf{y}\\
			= \frac{1}{\prod_{k=1}^{n-1}(2+k)} \sum_{s=2}^{n-2} \frac{n}{q_{s-1}}\left( \frac{1}{s} + \frac{1}{n-s} \right) ,
		\end{gather*}
		which leads to
		\begin{equation}
			\begin{gathered}
				\overline{D}(\mathbf{q}) = \frac{(n-1)!}{\prod_{k=1}^{n-1}(2+k)} \sum_{s=2}^{n-2} \frac{n}{q_{s-1}}\left( \frac{1}{s} + \frac{1}{n-s} \right) .
			\end{gathered}
		\end{equation}
		Since $ \overline{D}(\mathbf{q}) $ is convex in $ \mathbf{q} $, $ \mathbf{q}^{\text{OFA-A}} $ can be directly obtained using the KKT conditions, which is
		\begin{equation}
			\begin{gathered}
				q_{s-1}^{\text{OFA-A}} = \frac{\sqrt{\frac{n}{s} + \frac{n}{n-s}}}{\sum_{s=2}^{n-2}\sqrt{\frac{n}{s}+\frac{n}{n-s}}}  .
			\end{gathered}
		\end{equation}
		Therefore, we have
		\begin{equation}
			\begin{gathered}
				\overline{D}(\mathbf{q}^{\text{OFA-A}}) = \frac{(n-1)!}{\prod_{k=1}^{n-1}(2+k)} \left( \sum_{s=2}^{n-2} \sqrt{\frac{n}{s} + \frac{n}{n-s}} \right)^{2} .
			\end{gathered}
		\end{equation}
		Since $  \lim_{n\to\infty}\frac{(n-1)! (n-1)^{2}}{\prod_{k=1}^{n-1}(2+k)} = 2\Gamma(3) $, when $ n $ is sufficiently large, there is
		\begin{equation}
			\begin{gathered}
				\overline{D}(\mathbf{q}^{\text{OFA-A}}) \approx \frac{1}{n^{2}} \left( \sum_{s=2}^{n-2} \sqrt{\frac{n}{s} + \frac{n}{n-s}} \right)^{2} = \left( \frac{1}{n}\sum_{s=2}^{n-2} \sqrt{\frac{1}{\frac{s}{n}(1-\frac{s}{n})}} \right)^{2} < \left( \int_{0}^{1}\frac{1}{x(1-x)} \mathrm{d}x \right)^{2} = \pi^{2}.
			\end{gathered}
		\end{equation}
	\end{proof}
	
	\ofaConvergence*
	\begin{proof}
		Let $ \boldsymbol\phi $ be a semi-value such that $ p_{s} = \int_{0}^{1}w^{s-1}(1-w)^{n-s}\mathrm{d}\mu(w) = \int_{0}^{1}w^{s-1}(1-w)^{n-s}p_{\mu}(w)\mathrm{d}w $ such that $ p_{\mu}(w) \leq B $ for every $ w \in [0, 1] $. Particularly, we have
		\begin{equation}
			\begin{gathered}
				m_{s} = \binom{n-1}{s-1}p_{s} \leq B\cdot\binom{n-1}{s-1}\int_{0}^{1}w^{s-1}(1-w)^{n-s}\mathrm{d}w = B\cdot\binom{n-1}{s-1}\frac{(s-1)!(n-s)!}{n!} = \frac{B}{n} . 
			\end{gathered}
		\end{equation}
		Therefore,
		\begin{equation}
			\begin{gathered}
				D(\mathbf{m}, \mathbf{q}^{\text{OFA-A}}) \leq \frac{B^{2}}{n} \sum_{s=2}^{n-2}\frac{1}{q^{\text{OFA-A}}_{s-1}}\left( \frac{1}{s} + \frac{1}{n-s} \right) = B^{2} \left( \sum_{s=2}^{n-2} \frac{1}{\sqrt{s(n-s)}} \right)^{2} < B^{2}\pi^{2} .
			\end{gathered}
		\end{equation}
	\end{proof}
	
	
	\ofaWeighted*
	\begin{proof}	
		With $ q^{\text{OFA-A}}_{s-1} \propto \frac{1}{\sqrt{s(n-s)}} $, we have
		\begin{equation}
			\begin{gathered}
				D(\mathbf{m}, \mathbf{q}^{\text{OFA-A}}) = C\cdot n\cdot \sum_{s=2}^{n-2}\left( \sqrt{\frac{n-s}{s}}m_{s}^{2} + \sqrt{\frac{s}{n-s}}m_{s+1}^{2} \right)
				\text{ where } C = \sum_{s=2}^{n-2} \frac{1}{\sqrt{s(n-s)}} < \pi
			\end{gathered}
		\end{equation}
		Then,
		\begin{equation}
			\begin{gathered}
				D(\mathbf{m}, \mathbf{q}^{\text{OFA-A}}) \\
				= C\cdot \sum_{s=2}^{n-2} n\cdot \left( \sqrt{\frac{n-s}{s}}\binom{n-1}{s-1}^{2}\left( w^{s-1}(1-w)^{n-s} \right)^{2} + \sqrt{\frac{s}{n-s}}\binom{n-1}{s}^{2}\left( w^{s}(1-w)^{n-s-1} \right)^{2} \right) .
			\end{gathered}
		\end{equation}
		Specifically,
		\begin{equation}
			\begin{gathered}
				\sqrt{\frac{n-s}{s}}\binom{n-1}{s-1}^{2} = \sqrt{\frac{s}{n-s}}\frac{(n-1)!^{2}}{(s-1)!s!(n-s-1)!(n-s)!} \\
				\text{and } \sqrt{\frac{s}{n-s}}\binom{n-1}{s}^{2} = \sqrt{\frac{n-s}{s}}\frac{(n-1)!^{2}}{(s-1)!s!(n-s-1)!(n-s)!} ,
			\end{gathered}
		\end{equation}
		and thus
		\begin{equation}
			\begin{gathered}
				n\cdot \left( \sqrt{\frac{n-s}{s}}\binom{n-1}{s-1}^{2}\left( w^{s-1}(1-w)^{n-s} \right)^{2} + \sqrt{\frac{s}{n-s}}\binom{n-1}{s}^{2}\left( w^{s}(1-w)^{n-s-1} \right)^{2} \right)\\
				= n\cdot \left( w^{s-1}(1-w)^{n-s-1} \right)^{2}\frac{(n-1)!^{2}}{(s-1)!s!(n-s-1)!(n-s)!}\left( \sqrt{\frac{s}{n-s}}(1-w)^{2} + \sqrt{\frac{n-s}{s}}w^{2} \right)
			\end{gathered}
		\end{equation}
		Since 
		\begin{equation}
			\begin{gathered}
				\sqrt{\frac{s}{n-s}}(1-w)^{2} + \sqrt{\frac{n-s}{s}}w^{2} \leq \sqrt{\frac{s}{n-s}} + \sqrt{\frac{n-s}{s}} = \frac{n}{\sqrt{s(n-s)}} ,
			\end{gathered}
		\end{equation}
		there is
		\begin{equation}
			\begin{gathered}
				n\cdot \left( \sqrt{\frac{n-s}{s}}\binom{n-1}{s-1}^{2}\left( w^{s-1}(1-w)^{n-s} \right)^{2} + \sqrt{\frac{s}{n-s}}\binom{n-1}{s}^{2}\left( w^{s}(1-w)^{n-s-1} \right)^{2} \right)\\
				\leq \sqrt{s(n-s)}\frac{\left( \binom{n}{s} w^{s} (1-w)^{n-s} \right)^{2}}{w^{2}(1-w)^{2}} \leq n\cdot \frac{\left( \binom{n}{s} w^{s} (1-w)^{n-s} \right)^{2}}{w^{2}(1-w)^{2}}.
			\end{gathered}
		\end{equation}
		
		Using the identity $ \sum_{j=0}^{m} \binom{m}{j}^{2}(x+y)^{2j}(x-y)^{2(m-j)} = \sum_{j=0}^{m}\binom{2j}{j}\binom{2(m-j)}{m-j}x^{2j}y^{2(m-j)} $, there is
		\begin{equation}
			\begin{gathered}
				\sum_{s=2}^{n-2}\left( \binom{n}{s} w^{s} (1-w)^{n-s} \right)^{2} = \sum_{s=0}^{n} \binom{2s}{s}\binom{2(n-s)}{n-s}\frac{1}{2^{2s}}\left( \frac{2w-1}{2} \right)^{2(n-s)}\\
				= \binom{2n}{n}\left( \frac{2w-1}{2} \right)^{2n} + \sum_{s=1}^{n-1} \binom{2s}{s}\binom{2(n-s)}{n-s}\frac{1}{2^{2s}}\left( \frac{2w-1}{2} \right)^{2(n-s)} + \binom{2n}{n}\frac{1}{2^{2n}}.
			\end{gathered}
		\end{equation}
		For every $ k\geq 1 $, $ \binom{2k}{k} \approx \frac{2^{2k}}{\sqrt{k}} $ using the Stirling's approximation, and thus
		\begin{equation}
			\begin{gathered}
				\binom{2n}{n}\left( \frac{2w-1}{2} \right)^{2n} \approx \frac{z^{n}}{\sqrt{n}}, \quad \binom{2n}{n}\frac{1}{2^{2n}} \approx \frac{1}{\sqrt{n}}\\
				\sum_{s=1}^{n-1} \binom{2s}{s}\binom{2(n-s)}{n-s}\frac{1}{2^{2s}}\left( \frac{2w-1}{2} \right)^{2(n-s)} \approx \sum_{s=1}^{n-1}\frac{1}{\sqrt{s(n-s)}}z^{n-s} \leq \frac{\sum_{j=1}^{n-1}z^{j}}{\sqrt{n-1}},
			\end{gathered}
		\end{equation}
		where $ z = (2w-1)^{2} < 1 $. Therefore, we obtain $ \sum_{s=2}^{n-2}\left( \binom{n}{s} w^{s} (1-w)^{n-s} \right)^{2} \leq O(n^{-\frac{1}{2}}) $, which eventually leads to
		\begin{equation}
			\begin{gathered}
				D(\mathbf{m}, \mathbf{q}^{\text{OFA-A}}) \leq  \frac{n}{w^{2}(1-w)^{2}} \sum_{s=2}^{n-2}\left( \binom{n}{s} w^{s} (1-w)^{n-s} \right)^{2} \leq O(n^{\frac{1}{2}}) .
			\end{gathered}
		\end{equation}

		%
		%
	\end{proof}
	
	\genericConvergence*
	\begin{proof}
		If $ \mu(\{ 0 \}) \not= 0 $ ($ \mu(\{ 1 \}) \not= 0 $, respectively), its induced marginal contributions all reside in $ \phi_{i,1}^{+} $ and $ \phi_{i,0}^{-} $ ($ \phi_{i,n}^{+} $ and $ \phi_{i,n-1}^{-} $, respectively), which is computed exactly using Algorithm~\ref{alg:ofa}. Therefore, W.L.O.G., we assume that $ \mu((0,1)) = 1 $.
		
		Suffice it to show that if $ \int_{0}^{1}\frac{1}{w(1-w)} \mathrm{d}\mu(w) < \infty $, there is
		\begin{equation}
			\begin{gathered}
				\sum_{s=2}^{n-2}\sqrt{\frac{n}{s}m_{s}^{2} + \frac{n}{n-s}m_{s+1}^{2}} \in O(1) .
			\end{gathered}
		\end{equation}
		Specifically,
		\begin{gather*}
			\mathbf{D}(\mathbf{m}, \mathbf{q}^{\text{OFA-S}}) = \sum_{s=2}^{n-2}\sqrt{\frac{n}{s}m_{s}^{2} + \frac{n}{n-s}m_{s+1}^{2}} \leq \sum_{s=2}^{n-2}\left( \sqrt{\frac{n}{s}}m_{s} + \sqrt{\frac{n}{n-s}}m_{s+1} \right) \\
			= \int_{0}^{1} \sum_{s=2}^{n-2} \left( \sqrt{\frac{s}{n}}\binom{n}{s}w^{s-1}(1-w)^{n-s} + \sqrt{\frac{n-s}{n}}\binom{n}{s}w^{s}(1-w)^{n-s-1} \right) \mathrm{d}\mu(w) .
		\end{gather*}
		Since $ \sqrt{\frac{s}{n}}(1-w) + \sqrt{\frac{n-s}{n}}w \leq 2 $, we have
		\begin{equation}
			\begin{gathered}
				\int_{0}^{1} \sum_{s=2}^{n-2} \left( \sqrt{\frac{s}{n}}\binom{n}{s}w^{s-1}(1-w)^{n-s} + \sqrt{\frac{n-s}{n}}\binom{n}{s}w^{s}(1-w)^{n-s-1} \right) \mathrm{d}\mu(w) \\
				\leq \int_{0}^{1} \sum_{s=2}^{n-2} \frac{2\binom{n}{s}w^{s}(1-w)^{n-s}}{w(1-w)} \mathrm{d}\mu(w) \leq 2 \int_{0}^{1}\frac{1}{w(1-w)} \mathrm{d}\mu(w) \in O(1).
			\end{gathered}
		\end{equation}
	\end{proof}

	\section{Proof of Theorem~\ref{thm:connection}} \label{app:theorem 2}
	To prove this theorem, we first state useful definitions and lemmas.
	\begin{definition}[Semi Inner Product]
		Let $ \mathcal{V} $ is a real linear space. A semi inner product $ \langle \cdot, \cdot \rangle $ on $ \mathcal{V} $ satisfies, for every $ x,y,z \in \mathcal{V} $ and every $ \alpha \in \mathbb{R} $, i) $ \langle x,y \rangle = \langle y, x \rangle $, ii) $ \langle \alpha x, y \rangle = \alpha\langle x, y \rangle $, iii) $ \langle x+y, z \rangle = \langle x, z \rangle + \langle y, z \rangle $, and iv) $ \langle x, x \rangle \geq 0 $. In addition, we write $ \| x \| = \sqrt{\langle x,x \rangle} $ for every $ x \in \mathcal{V} $.
	\end{definition}
	
	\begin{lemma} \label{lem:optimal criterion}
		Let a semi inner product on a linear space $ \mathcal{V} $ be given, and $ \mathcal{A} \subseteq \mathcal{V} $ is some affine space. For the following optimization problem
		\begin{equation}
			\begin{gathered}
				\argmin_{x \in \mathcal{A}} \| x - p \|^{2}
			\end{gathered}
		\end{equation}
		where $ p \in \mathcal{V} $, $ x^{*} $ is optimal if and only if 
		\begin{equation}  
			\begin{gathered}
				\langle x^{*} - p, y - x^{*} \rangle = 0,\ \forall y \in \mathcal{A} . \label{eq:optimal criterion for projection}
			\end{gathered}
		\end{equation}
	\end{lemma}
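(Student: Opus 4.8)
The plan is to prove both directions of the equivalence by the standard first-order / variational technique, taking care that every step survives the passage from a genuine inner product to a \emph{semi} inner product. The key structural fact I would exploit is that $\mathcal{A}$ is affine: for any $x^{*}, y \in \mathcal{A}$ the entire line $x^{*} + t(y - x^{*}) = (1-t)x^{*} + ty$ lies in $\mathcal{A}$ for every $t \in \mathbb{R}$, so I can freely probe the objective along such lines. The only algebraic tool needed is the expansion $\| a + b \|^{2} = \| a \|^{2} + 2\langle a, b \rangle + \| b \|^{2}$, which follows from axioms (i)--(iii) alone.

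For sufficiency ($\Leftarrow$), I would assume the orthogonality condition and, for an arbitrary $y \in \mathcal{A}$, write $\| y - p \|^{2} = \| (y - x^{*}) + (x^{*} - p) \|^{2} = \| y - x^{*} \|^{2} + 2\langle x^{*} - p, y - x^{*} \rangle + \| x^{*} - p \|^{2}$. The cross term vanishes by hypothesis, and $\| y - x^{*} \|^{2} \geq 0$ by the positivity axiom (iv), so $\| y - p \|^{2} \geq \| x^{*} - p \|^{2}$; hence $x^{*}$ is optimal. Note this invokes only one-sided positivity, never definiteness.

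For necessity ($\Rightarrow$), I would assume $x^{*}$ optimal, fix $y \in \mathcal{A}$, and set $g(t) = \| x^{*} + t(y - x^{*}) - p \|^{2}$. The same expansion gives $g(t) = \| x^{*} - p \|^{2} + 2t\langle x^{*} - p, y - x^{*} \rangle + t^{2}\| y - x^{*} \|^{2}$, a quadratic in $t$ with nonnegative leading coefficient. Since affineness keeps $x^{*} + t(y - x^{*}) \in \mathcal{A}$ for all real $t$, optimality forces $g(t) \geq g(0)$ for every $t \in \mathbb{R}$. A quadratic $c + bt + at^{2}$ with $a \geq 0$ can satisfy $g(t) \geq g(0)$ for all $t$ only if the linear coefficient $b = 2\langle x^{*} - p, y - x^{*} \rangle$ is zero; otherwise picking $t$ small with sign opposite to $b$ makes $bt + at^{2} < 0$. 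Thus $\langle x^{*} - p, y - x^{*} \rangle = 0$, and since $y$ was arbitrary the condition holds throughout $\mathcal{A}$.

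The one point that genuinely demands care, rather than a true obstacle, is that $\langle \cdot, \cdot \rangle$ is only a semi inner product: there may be nonzero vectors of vanishing seminorm, so the minimizer need not be unique and the usual definiteness-based shortcuts (strict Cauchy--Schwarz, strict convexity) are unavailable. I would therefore ensure each step rests solely on bilinearity, symmetry, and the one-sided positivity $\langle x, x \rangle \geq 0$; the quadratic argument is robust precisely because it requires only $a \geq 0$ rather than $a > 0$. No completeness or closedness of $\mathcal{A}$ is needed, since the existence of a minimizer is not asserted here --- the lemma merely \emph{characterizes} optimality of a given $x^{*}$.
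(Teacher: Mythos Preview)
Your proposal is correct and follows essentially the same approach as the paper: both directions rest on the expansion $\|y-p\|^{2} = \|x^{*}-p\|^{2} + \|y-x^{*}\|^{2} + 2\langle x^{*}-p,\,y-x^{*}\rangle$ for sufficiency, and on examining the one-parameter family $x^{*}+t(y-x^{*})\in\mathcal{A}$ via the resulting quadratic in $t$ for necessity. The paper phrases necessity as a contradiction (finding a $t_{o}$ with strictly smaller value when the cross term is nonzero) while you argue directly that the linear coefficient must vanish, but this is a cosmetic difference only.
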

	\begin{proof}
		Suppose $ x^{*} $ verifies Eq.~\eqref{eq:optimal criterion for projection}, for every $ y \in \mathcal{A} $,
		\begin{equation}
			\begin{gathered}
				\| y - p \|^{2} = \| x^{*} - p \|^{2} + \| y - x^{*} \|^{2} + 2\langle x^{*}-p, y-x^{*} \rangle \geq \| x^{*} - p \|^{2} .
			\end{gathered}
		\end{equation}
		
		Next, suppose $ x^{*} $ is optimal, and for the sake of contradiction, assume that there is some $ y \in \mathcal{A} $ such that $ \langle x^{*}-p, y-x^{*} \rangle \not= 0  $. Write $ z = y - x^{*} $, for $ t \in \mathbb{R} $
		\begin{equation}
			\begin{gathered}
				\| x^{*}+tz - p \|^{2} = \| x^{*}-p \|^{2} + t^{2}\| z \|^{2} + 2t\langle x^{*}-p, z \rangle .
			\end{gathered}
		\end{equation}
		Since $ \langle x^{*}-p, z \rangle \not= 0 $, there exists some $ t_{o} \in \mathbb{R} $ such that $ t^{2}\| z \|^{2} + 2t\langle x^{*}-p, z \rangle < 0 $, and thus $ \| x^{*}+t_{o}z - p \|^{2} < \| x^{*} - p \|^{2} $, a contradiction.
	\end{proof}
	
	\begin{definition}[Projection Induced by a Semi Inner Product]
		Given a semi inner product on a linear space $ \mathcal{V} $, the set of all optimal solutions to the problem
		\begin{equation}
			\begin{gathered}
				\argmin_{x \in \mathcal{A}} \| x - p \|^{2} ,
			\end{gathered}
		\end{equation}
		where $ \mathcal{A} \subseteq \mathcal{V} $ is an affine space and $ p \in \mathcal{V} $, is denoted by $ \mathrm{Proj}_{\mathcal{A}}(\{ p \}) $. To account for the possibility that there are multiple optimal solutions, we extend the definition by letting $ \mathrm{Proj}_{\mathcal{A}}(S) = \bigcup_{p \in S}\mathrm{Proj}_{\mathcal{A}}(\{ p \}) $.
	\end{definition}
	
	\begin{lemma} \label{lem:successive projection}
		Let $ \mathcal{V} $ be a linear space with a semi inner product. Suppose there are two affine spaces $ \mathcal{B} \subseteq \mathcal{A} $, for every $ p \in \mathcal{V} $, there is
		\begin{equation}
			\begin{gathered}
				\mathrm{Proj}_{\mathcal{B}}(\mathrm{Proj}_{\mathcal{A}}(\{ p \})) \subseteq \mathrm{Proj}_{\mathcal{B}}(\{ p \}) .
			\end{gathered}
		\end{equation}  
	\end{lemma}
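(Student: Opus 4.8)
The plan is to apply the orthogonality-type optimality criterion of Lemma~\ref{lem:optimal criterion} twice and then decompose a single inner product. Fix $ p \in \mathcal{V} $, take any $ a \in \mathrm{Proj}_{\mathcal{A}}(\{ p \}) $ and any $ b \in \mathrm{Proj}_{\mathcal{B}}(\{ a \}) $; the goal is to show $ b \in \mathrm{Proj}_{\mathcal{B}}(\{ p \}) $, since the claimed set inclusion then follows by ranging over all such $ a $ and $ b $ and using the union convention in the definition of $ \mathrm{Proj} $ on sets. By Lemma~\ref{lem:optimal criterion}, it suffices to verify the single condition $ \langle b - p, y - b \rangle = 0 $ for every $ y \in \mathcal{B} $.

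First I would record the two facts supplied by optimality. Because $ a $ projects $ p $ onto $ \mathcal{A} $, Lemma~\ref{lem:optimal criterion} gives $ \langle a - p, z - a \rangle = 0 $ for all $ z \in \mathcal{A} $; because $ b $ projects $ a $ onto $ \mathcal{B} $, it gives $ \langle b - a, y - b \rangle = 0 $ for all $ y \in \mathcal{B} $. The key structural observation is that $ \mathcal{B} \subseteq \mathcal{A} $, so both $ b $ and every $ y \in \mathcal{B} $ also lie in $ \mathcal{A} $; this is exactly what lets the first optimality condition be instantiated at these particular points.

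The main computation is to split $ b - p = (b - a) + (a - p) $ and expand by bilinearity (properties (i)--(iii) of the semi inner product):
\[
\langle b - p, y - b \rangle = \langle b - a, y - b \rangle + \langle a - p, y - b \rangle .
\]
The first summand vanishes by optimality of $ b $ since $ y \in \mathcal{B} $. For the second, I would write $ y - b = (y - a) - (b - a) $ and use that $ y, b \in \mathcal{A} $, so that $ \langle a - p, y - a \rangle = 0 $ and $ \langle a - p, b - a \rangle = 0 $ by optimality of $ a $; hence the second summand also vanishes. This yields $ \langle b - p, y - b \rangle = 0 $ for all $ y \in \mathcal{B} $, and Lemma~\ref{lem:optimal criterion} then certifies $ b \in \mathrm{Proj}_{\mathcal{B}}(\{ p \}) $, completing the inclusion.

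I do not anticipate a genuine obstacle, since the argument uses only symmetry, bilinearity, and the optimality characterization, all of which the semi inner product provides; positive-definiteness is never invoked. The only subtlety worth flagging is that, because the form is merely a \emph{semi} inner product, projections need not be unique, which is precisely why both the statement and the proof are phrased as a set inclusion of $ \mathrm{Proj} $ rather than an equality of points, and why the ``arbitrary $ a $ and $ b $'' step is packaged through the union convention of $ \mathrm{Proj}(\cdot) $.
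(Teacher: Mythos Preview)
Your proof is correct and follows essentially the same approach as the paper's: split $b-p=(b-a)+(a-p)$, use bilinearity, and kill each summand with the appropriate optimality condition from Lemma~\ref{lem:optimal criterion}. The only cosmetic difference is that the paper first rephrases the optimality criterion via the associated linear subspace $\mathcal{L}_{\mathcal{A}}=\mathcal{A}-q$, so that $\langle a-p,\,y-b\rangle=0$ follows in one stroke from $y-b\in\mathcal{L}_{\mathcal{B}}\subseteq\mathcal{L}_{\mathcal{A}}$, whereas you achieve the same by the extra split $y-b=(y-a)-(b-a)$.
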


	\begin{proof}
		We rephrase Lemma~\ref{lem:optimal criterion} to ease the proof. For each affine space $ \mathcal{A} \subseteq \mathcal{V} $, define $ \mathcal{L}_{\mathcal{A}} = \mathcal{A} - q $ for some $ q \in \mathcal{A} $. Note that the resulting $ \mathcal{L}_{\mathcal{A}} $ is independent of the choice of $ q \in \mathcal{A} $ and it is a subspace in $ \mathcal{V} $. Therefore, Eq.~\eqref{eq:optimal criterion for projection} is equivalent to
		\begin{equation}
			\begin{gathered}
				\langle x^{*} - p, z \rangle = 0,\ \forall z \in \mathcal{L}_{\mathcal{A}} .
			\end{gathered}
		\end{equation}
		
		Suppose $ x \in \mathrm{Proj}_{\mathcal{B}}(\mathrm{Proj}_{\mathcal{A}}(\{ p \})) $, by Lemma~\ref{lem:optimal criterion}, there exists $ y \in \mathrm{Proj}_{\mathcal{A}}(\{ p \}) $ such that
		\begin{equation}
			\begin{gathered}
				\langle y-p, a \rangle = 0,\ \forall a \in \mathcal{L}_{\mathcal{A}}
				\ \text{ and }\ \langle x-y, b-x \rangle = 0,\ \forall b \in \mathcal{B} .
			\end{gathered}
		\end{equation}
		Therefore, 
		\begin{equation}
			\begin{gathered}
				\langle x-p, b-x \rangle = \langle x-y, b-x \rangle + \langle y-p, b-x \rangle = 0 + 0 .
			\end{gathered}
		\end{equation}
		$ \langle y-p, b-x \rangle = 0 $ is due to that $ b - x \in \mathcal{L}_{\mathcal{B}} \subseteq \mathcal{L}_{A} $.
	\end{proof}
	
	\begin{lemma}[{\cite[Theorem 12]{ruiz1998family}}] \label{lem:constrained}
		Let $ \mathbf{v}^{*} $ be the uniquely optimal solution to
		\begin{equation} \label{op:constrained}
			\begin{gathered}
				\argmin_{\mathbf{v} \in \mathbb{R}^{n}} \sum_{S\subseteq [n]} \eta_{s+1} \left( U(S) - U(\emptyset) - \sum_{i\in S} v_{i} \right)^{2}
				\ \text{ s.t. }\ \sum_{i\in [n]}v_{i} = U([n]) - U(\emptyset)
			\end{gathered}
		\end{equation}
		where $ \eta_{s} = p_{s-1} + p_{s} $ for $ 2\leq s\leq n $. Then, there is
		\begin{equation}
			\begin{gathered}
				v_{i}^{*} - v_{j}^{*} = \phi_{i} - \phi_{j} \ \text{ for every } i,j\in [n] .
			\end{gathered}
		\end{equation}
	\end{lemma}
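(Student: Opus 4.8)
The plan is to carry out the argument within the semi-inner-product projection framework just established. On the space $\mathcal{V} = \mathbb{R}^{2^{[n]}}$ of set functions, I would define $\langle f, g \rangle = \sum_{S \subseteq [n]} \eta_{s+1} f(S) g(S)$, which is readily checked to be a semi inner product (nonnegativity uses $\eta \geq 0$). Writing $W = U - U(\emptyset)$ and, for $\mathbf{v} \in \mathbb{R}^n$, $\ell_{\mathbf{v}}(S) = \sum_{i \in S} v_i$, the objective in the problem~\eqref{op:constrained} is exactly $\| W - \ell_{\mathbf{v}} \|^{2}$, and the feasible set is the affine subspace $\mathcal{A} = \{ \ell_{\mathbf{v}} : \sum_{i} v_i = W([n]) \} \subseteq \mathcal{V}$, whose direction space is $\mathcal{L}_{\mathcal{A}} = \{ \ell_{\mathbf{w}} : \sum_i w_i = 0 \}$. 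Since $\mathbf{v} \mapsto \ell_{\mathbf{v}}$ is an injective linear map (one recovers $v_i = \ell_{\mathbf{v}}(\{i\})$), the given unique minimizer $\mathbf{v}^{*}$ corresponds to $\ell_{\mathbf{v}^{*}} = \mathrm{Proj}_{\mathcal{A}}(W)$, and Lemma~\ref{lem:optimal criterion} supplies the orthogonality condition $\langle \ell_{\mathbf{v}^{*}} - W, z \rangle = 0$ for every $z \in \mathcal{L}_{\mathcal{A}}$.

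Next I would specialize this to the direction $z = \ell_{\mathbf{e}_k - \mathbf{e}_l}$, which lies in $\mathcal{L}_{\mathcal{A}}$ for any $k, l \in [n]$ because its coordinates sum to zero. Using $\ell_{\mathbf{e}_k - \mathbf{e}_l}(S) = \mathds{1}_{k \in S} - \mathds{1}_{l \in S}$, the orthogonality condition becomes
\begin{equation}
    \sum_{S \ni k} \eta_{s+1}\big(W(S) - \ell_{\mathbf{v}^{*}}(S)\big) = \sum_{S \ni l} \eta_{s+1}\big(W(S) - \ell_{\mathbf{v}^{*}}(S)\big) .
\end{equation}
Subsets containing both $k$ and $l$ occur identically on both sides and cancel; the surviving terms pair up under the swap $k \leftrightarrow l$. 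Writing each surviving subset as $T \cup k$ or $T \cup l$ with $T \subseteq [n] \setminus \{k, l\}$ (so the sizes, hence the weights, match) and noting $\ell_{\mathbf{v}^{*}}(T \cup k) - \ell_{\mathbf{v}^{*}}(T \cup l) = v_k^{*} - v_l^{*}$, the identity rearranges into
\begin{equation}
    (v_k^{*} - v_l^{*}) \sum_{T \subseteq [n] \setminus \{k,l\}} \eta_{t+2} = \sum_{T \subseteq [n] \setminus \{k,l\}} \eta_{t+2}\big( W(T \cup k) - W(T \cup l) \big) .
\end{equation}

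It then remains to match the right-hand side against $\phi_k - \phi_l$ and to pin down the scalar factor. Starting from $\phi_k = \sum_{R \subseteq [n] \setminus k} p_{r+1}\big(W(R \cup k) - W(R)\big)$ and splitting each index set $R$ according to whether $l \in R$, the terms not of the form $W(T \cup k) - W(T \cup l)$ cancel in $\phi_k - \phi_l$, yielding
\begin{equation}
    \phi_k - \phi_l = \sum_{T \subseteq [n] \setminus \{k,l\}} (p_{t+1} + p_{t+2})\big( W(T \cup k) - W(T \cup l) \big) .
\end{equation}
Since $\eta_{t+2} = p_{t+1} + p_{t+2}$ by hypothesis, the two sums have identical coefficients, so I would finish by verifying the normalization $\sum_{T \subseteq [n] \setminus \{k,l\}} \eta_{t+2} = 1$: expanding gives $\sum_{s=1}^{n}\big(\binom{n-2}{s-1} + \binom{n-2}{s-2}\big) p_s$, and Pascal's rule $\binom{n-2}{s-1} + \binom{n-2}{s-2} = \binom{n-1}{s-1}$ together with $\sum_{s=1}^{n} \binom{n-1}{s-1} p_s = 1$ collapses it to $1$. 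Consequently $v_k^{*} - v_l^{*} = \phi_k - \phi_l$ for every $k, l$.

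The main obstacle is the bookkeeping in the two combinatorial reductions: one must confirm that the swap-bijection on the projection side and the split-on-$l$ computation on the probabilistic-value side produce exactly the same coefficients, and that the leftover scalar is precisely $1$. This is where the specific weighting $\eta_s = p_{s-1} + p_s$ is forced—any other choice would leave a size-dependent factor multiplying $W(T \cup k) - W(T \cup l)$ and break the match—so the crux of the argument is recognizing that this weighting is exactly what aligns the residual orthogonality condition with the probabilistic value term by term.
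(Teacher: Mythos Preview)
Your argument is correct. The orthogonality condition from Lemma~\ref{lem:optimal criterion} applied to the direction $\ell_{\mathbf{e}_k - \mathbf{e}_l}$, the swap bijection $T\cup k \leftrightarrow T\cup l$, the split of $\phi_k-\phi_l$ according to membership of $l$, and the Pascal-rule normalization all check out exactly as you wrote them. One small remark: you do not actually use uniqueness of $\mathbf{v}^*$ anywhere---your computation shows that \emph{every} minimizer satisfies $v_k^*-v_l^*=\phi_k-\phi_l$, which is slightly more than the lemma asks.

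As for comparison with the paper: the paper does not prove Lemma~\ref{lem:constrained} at all---it is imported verbatim as Theorem~12 of \cite{ruiz1998family} and then used as a black box inside the proof of Theorem~\ref{thm:connection}. Your write-up therefore supplies something the paper does not contain: a self-contained derivation using the very semi-inner-product projection apparatus (Lemma~\ref{lem:optimal criterion}) that the paper sets up for Theorem~\ref{thm:connection}. This is a nice economy, since the same machinery now handles both the cited lemma and the theorem that depends on it. The trade-off is only that the paper's route is shorter on the page (one citation), whereas yours makes the role of the weighting $\eta_s=p_{s-1}+p_s$ fully explicit---it is precisely the choice that makes the residual orthogonality condition line up term-by-term with the probabilistic-value difference.
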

	
	Recall that the problem~\eqref{op:datamodels} is
	\begin{equation} 
		\begin{gathered}
			\argmin_{\boldsymbol\theta\in\mathbb{R}^{n}, b \in \mathbb{R}} \sum_{S \subseteq [n]} \eta_{s+1}\left( U(S) - b - \sum_{i\in S}\theta_{i} \right)^{2} ,
		\end{gathered}
	\end{equation}
	and our goal is to prove that
	\begin{equation}
		\begin{gathered}
			\theta_{i}^{*} - \theta_{j}^{*} = v_{i}^{*} - v_{j}^{*} \ \text{ for every }\ i,j \in [n] ,
		\end{gathered}
	\end{equation}	
	which together with Lemma~\ref{lem:constrained} is sufficient to complete our proof. 
	\connection*
	\begin{proof}
		The first part of our proof was inspired by \citep[Lemma 2.9]{hammer1992approximations}. 
		Let $ \mathcal{G} = \{ U:2^{[n]} \to \mathbb{R} \} $, $ \mathcal{AG} = \{ U \in \mathcal{G} \mid U(S) = a_{0} + \sum_{i\in S} a_{i} \ \text{ for every }\ S \subseteq [n] \} $ and $ \mathcal{A}_{U} = \{ g \in \mathcal{AG} \mid U([n]) = g([n]) \ \text{ and }\ U(\emptyset)=g(\emptyset) \} $. Note that $ \mathcal{G} $ is a linear space and the other two are affine spaces with $ \mathcal{A}_{U} \subseteq \mathcal{AG} $. For clarity, each game in $ \mathcal{AG} $ is written as $ [a_{0}, \mathbf{a}] $ where $ \mathbf{a} \in \mathbb{R}^{n} $. 
		
		A semi inner product on $ \mathcal{G} $ can be defined by letting $ \langle g_{1}, g_{2} \rangle = \sum_{S\subseteq [n]} \eta_{s+1}\cdot g_{1}(S)g_{2}(S) $ for every $ g_{1}, g_{2} \in \mathcal{G} $. 
		Then, 
		$ [b^{*}, \boldsymbol\theta^{*}] $ is the projection of $ U $ onto $ \mathcal{AG} $, whereas $ [U(\emptyset), \mathbf{v}^{*}] $ is the projection  of $ U $ onto $ \mathcal{A}_{U} $ where $ \mathbf{v}^{*} $ is the uniquely optimal solution to the problem~\eqref{op:constrained}.

		By Lemma~\ref{lem:successive projection}, there is $ \mathrm{Proj}_{\mathcal{A}_{U}}(\mathrm{Proj}_{\mathcal{AG}}(\{ U \})) \subseteq \mathrm{Proj}_{\mathcal{A}_{U}}(\{ U \}) $. Moreover, the uniqueness in problem~\eqref{op:constrained} implies that $ \mathrm{Proj}_{\mathcal{A}_{U}}(\{ U \}) = \{ [U(\emptyset), \mathbf{v}^{*}] \} $, and thus
		\begin{equation}
			\begin{gathered}
				\mathrm{Proj}_{\mathcal{A}_{U}}(\mathrm{Proj}_{\mathcal{AG}}(\{ U \})) = \mathrm{Proj}_{\mathcal{A}_{U}}(\{ U \}) =\{ [U(\emptyset), \mathbf{v}^{*}] \} .
			\end{gathered}
		\end{equation}
		Since $ [b^{*}, \boldsymbol\theta^{*}] \in \mathrm{Proj}_{\mathcal{AG}}(\{ U \}) $, the equality $ \mathrm{Proj}_{\mathcal{A}_{U}}(\{ [b^{*}, \boldsymbol\theta^{*}] \}) = \{ [U(\emptyset), \mathbf{v}^{*}] \} $ means that
		$ [U_{\emptyset}, \mathbf{v}^{*}] $ is the uniquely optimal solution to the problem
		\begin{equation} \label{op:successive projection}
			\begin{gathered}
				\argmin_{[U(\emptyset), \mathbf{v}] \in \mathcal{A}_{U}} \sum_{S\subseteq [n]} \eta_{s+1}\left( [U(\emptyset), \mathbf{v}](S) - [b^{*},\boldsymbol\theta^{*}](S) \right)^{2} .
			\end{gathered}
		\end{equation}
		Pick $ i,j \in [n] $ such that $ i \not= j $, and define an additive game $ \mathbf{e}^{i} \in \mathcal{AG} $ by letting $ \mathbf{e}^{i}(S) = 1 $ if $ i \in S $ and $ 0 $ otherwise, $ \mathbf{e}^{j} $ is defined similarly. Consider the problem
		\begin{equation} \label{op:single variable}
			\begin{gathered}
				\argmin_{t \in \mathbb{R}} \sum_{S\subseteq [n]} \eta_{s+1}\left( [U(\emptyset), \mathbf{v}^{*}](S) - [b^{*}, \boldsymbol\theta^{*}](S) + t(\mathbf{e}^{i}(S)-\mathbf{e}^{j}(S)) \right)^{2} .
			\end{gathered}
		\end{equation}
		Note that $ [U(\emptyset), \mathbf{v}^{*}] + t(\mathbf{e}^{i}-\mathbf{e}^{j}) \in \mathcal{A}_{U} $ for every $ t \in \mathbb{R} $, and the uniqueness to the problem~\eqref{op:successive projection} suggests that $ t^{*} = 0 $ is the uniquely optimal solution to the problem~\eqref{op:single variable}.
		Removing all constant terms in the problem~\eqref{op:single variable} yields an equivalent problem
		\begin{equation}
			\begin{aligned}
				\argmin_{t \in \mathbb{R}} & \sum_{S\subseteq [n] \colon i \in S, j\not\in S} \eta_{s+1}\left( [U(\emptyset), \mathbf{v}^{*}](S) - [b^{*}, \boldsymbol\theta^{*}](S) + t \right)^{2}\\ &+ \sum_{S\subseteq [n]\colon i\not\in S, j\in S} \eta_{s+1}\left( [U(\emptyset), \mathbf{v}^{*}](S) - [b^{*}, \boldsymbol\theta^{*}](S) - t \right)^{2} .
			\end{aligned}
		\end{equation}
		Write $ g = [U(\emptyset), \mathbf{v}^{*}] - [b^{*}, \boldsymbol\theta^{*}] $, since this problem is convex, letting the derivative equal $ 0 $ leads to
		\begin{equation}
			\begin{gathered}
				t^{*} = \frac{\sum_{S\subseteq[n]\colon i\not\in S, j\in S}\eta_{s+1}\cdot g(S) - \sum_{S\subseteq[n]\colon i\in S, j\not\in S}\eta_{s+1}\cdot g(S)}{2\sum_{S\colon i\in S, j\not\in S} \eta_{s+1}} = 0 .
			\end{gathered}
		\end{equation}
		Write $ g = [g_{0}, \mathbf{g}] $ where $ g_{0} = U(\emptyset) - b^{*} $ and $ \mathbf{g} = \mathbf{v}^{*} - \boldsymbol\theta^{*} $, there is
		\begin{equation}
			\begin{gathered}
				\sum_{S\subseteq[n]\colon i\in S, j\not\in S}\eta_{s+1}\cdot g(S) = \alpha (g_{0}+g_{i}) + \beta \sum_{1\leq k\leq n \colon k \not= i,j} g_{k} \\
				\text{where } \alpha = \sum_{s=1}^{n-1} \binom{n-2}{s-1} \eta_{s+1} \text{ and } \beta = \sum_{s=2}^{n-1} \binom{n-3}{s-2} \eta_{s+1} .
			\end{gathered}
		\end{equation}
		Similarly, we have $ \sum_{S\subseteq[n]\colon i\not\in S, j\in S}\eta_{s+1}\cdot g(S) = \alpha(g_{0}+g_{j}) + \beta \sum_{1\leq k\leq n\colon k\not= i, j} g_{k} $, and therefore
		\begin{equation}
			\begin{gathered}
				\alpha(g_{j} - g_{i}) = 0 .
			\end{gathered}
		\end{equation}
		Since $ \alpha > 0 $, we eventually get $ g_{i} = g_{j} $. In other words, $ v_{i}^{*} - \theta^{*}_{i} = v_{j}^{*} - \theta_{j}^{*} $.  
		Because $ i $ and $ j $ are chosen arbitrarily, our proof is completed.
		
		To be self-contained, we also prove that the problem~\eqref{op:datamodels} has only one optimal solution provided that $ \eta_{s} = p_{s-1} + p_{s} $ for $ 2\leq s \leq n $.
		W.L.O.G., assume $ \sum_{S\subseteq [n]} \eta_{s+1} = 1 $. By letting the derivative of the problem~\eqref{op:datamodels} equal $ 0 $, we have $ \mathbf{Ax} = \mathbf{b} $
		\begin{equation}
			\begin{gathered}
				\mathbf{A} = \begin{pmatrix}
					1 & \kappa & \kappa & \cdots & \kappa\\
					\kappa & \kappa & \tau & \cdots & \tau\\
					\kappa & \tau & \kappa & \ddots & \vdots\\
					\vdots & \vdots & \ddots & \ddots & \tau\\
					\kappa & \tau & \cdots & \tau & \kappa
				\end{pmatrix}, \\
				\kappa = \sum_{s=1}^{n}\binom{n-1}{s-1} \eta_{s+1}, \quad \tau = 	\sum_{s=2}^{n} \binom{n-2}{s-2} \eta_{s+1}, \quad
				b_{1} = \sum_{S\subseteq [n]} \eta_{s+1}U(S), \\
				b_{j+1} = \sum_{S\subseteq[n]\colon j \in S}\eta_{s+1}U(S)\ \text{ for every }\ j \in [n], \quad x_{1} = b \ \text{ and }\ x_{j+1} = \theta_{j} \ \text{ for every }\  j \in [n] .
			\end{gathered}
		\end{equation}	
		Left multiplying $ \mathbf{A} $ with some row operation matrix $ \mathbf{R} $ gives
		\begin{equation}
			\begin{gathered}
				\mathbf{RA} = \begin{pmatrix}
					1 & \kappa & \kappa & \cdots & \kappa\\
					0 & \kappa-\kappa^{2} & \tau-\kappa^{2} & \cdots & \tau-\kappa^{2}\\
					0 & \tau-\kappa^{2} & \kappa-\kappa^{2} & \ddots & \vdots\\
					\vdots & \vdots & \ddots & \ddots & \ddots\\
					0 & \tau-\kappa^{2} & \cdots & \tau-\kappa^{2} & \kappa-\kappa^{2}
				\end{pmatrix} .
			\end{gathered}
		\end{equation}
		It is sufficient to prove that the bottom-right $ n\times n $ submatrix  of $ \mathbf{RA} $ is invertible. 
		Suffice it to show $ \kappa - \tau \not=0 $ and $ \kappa + (n-1)\tau - n\kappa^{2} \not= 0 $. Using $ \binom{n}{s} = \binom{n-1}{s} + \binom{n-1}{s-1} $, we have
		\begin{equation}
			\begin{gathered}
				\kappa - \tau = \sum_{s=1}^{n-1} \binom{n-2}{s-1} \eta_{s+1} > 0 .
			\end{gathered}
		\end{equation}	
		Using $ n\binom{n-1}{s-1} = s\binom{n}{s} $, we have
		\begin{equation}
			\begin{gathered}
				\kappa+(n-1)\tau = \sum_{s=1}^{n} s\binom{n-1}{s-1} \eta_{s+1} = \frac{1}{n}\sum_{s=1}^{n}s^{2}\binom{n}{s} \eta_{s+1},\\
				n\cdot \kappa^{2} = n\cdot\left( \sum_{s=1}^{n}\binom{n-1}{s-1} \eta_{s+1} \right)^{2} = \frac{1}{n}\left( \sum_{s=1}^{n} s \binom{n}{s} \eta_{s+1} \right)^{2} .
			\end{gathered}
		\end{equation}
		Let $ \gamma = 1 - \eta_{1} $ and $ \zeta_{s} = \eta_{s+1} / \gamma $ for every $ s \in [n] $, there is
		\begin{equation}
			\begin{gathered}
				n\cdot \kappa^{2} = \frac{\gamma^{2}}{n}\left( \sum_{s=1}^{n} s\binom{n}{s}\zeta_{s} \right)^{2} = \frac{\gamma^{2}}{n} \mathbb{E}[s]^{2} \leq \frac{\gamma^{2}}{n}\mathbb{E}[s^{2}] = \gamma(\kappa+(n-1)\tau) \leq \kappa+(n-1)\tau.
			\end{gathered}
		\end{equation}
		If $ \eta_{1} > 0 $, the last inequality is strict as $ \gamma < 1 $. Otherwise, the first inequality is strict as $ \mathrm{Var}[s] = \mathbb{E}[s^{2}] - \mathbb{E}[s]^{2} > 0 $.
	\end{proof}
	
	\section{Overview of Estimators} \label{app:overview}
	Recall that each probabilistic value is defined to be, for every $ i \in [n] $,
	\begin{equation} \label{eq:probabilistic value}
		\begin{gathered}
			\phi_{i} = \phi_{i}(U) = \sum_{S\subseteq[n]\backslash i} p_{s+1}(U(S\cup i) - U(S))
		\end{gathered}
	\end{equation}
	where $ \mathbf{p} \in \mathbb{R}^{n} $ is a non-negative vector with $ \sum_{s=1}^{n}\binom{n-1}{s-1}p_{s} = 1 $. If $ p_{s} = \int_{0}^{1}w^{s-1}(1-w)^{n-s}\mathrm{d}\mu(w) $ for some probability measure $ \mu $ on the closed interval $ [0,1] $, the induced $ \boldsymbol\phi $ is referred to as a semi-value.
	
	\paragraph{The Sampling Lift Estimator \citep{moehle2021portfolio}}
	The sampling lift estimator is based on
	\begin{equation}
		\begin{gathered}
			\phi_{i} = \mathbb{E}_{S \subseteq [n]\backslash i}[U(S\cup i) - U(S)] \ \text{ where }\ P(S) = p_{s+1} .
		\end{gathered}
	\end{equation}
	The sampling procedure is: i) sample a subset size $ s \in [n] $ with $ P(s) = \binom{n-1}{s-1}p_{s} $, and then ii) sample a subset $ S $ uniformly from $ \{ R \subseteq [n]\backslash i \mid r=s-1 \} $. For semi-values such that $ p_{s} = \int_{0}^{1}w^{s-1}(1-w)^{n-s}\mathrm{d}\mu(w) $ where $ \mu $ is a probability measure on the closed interval $ [0,1] $, there is an alternative: i) sample a $ w \in [0, 1] $ according to $ \mu $, and then sample a subset $ S \subseteq [n]\backslash i $ by incorporating each player in $ [n]\backslash i $ with probability $ w $. With a sequence of sampled subsets $ \{ S_{j} \}_{j=1}^{T} $, the $ i $-th estimate is $ \hat{\phi}_{i} = \frac{1}{T}\sum_{j=1}^{T}(U(S_{j}\cup i) - U(S_{j})) $.
	
	\paragraph{The Weighted Sampling Lift Estimator \citep{kwon2022beta}}
	The formula it is built upon is
	\begin{equation}
		\begin{gathered}
			\phi_{i} = \mathbb{E}_{S\subseteq[n]\backslash i}^{\text{Shap}}\left[\frac{p_{s+1}}{p_{s+1}^{\text{Shap}}}(U(S\cup i) - U(S))\right] \ \text{ where }\ P(S) = p_{s+1}^{\text{Shap}} .
		\end{gathered}
	\end{equation} 
	Note that substituting $ \mathbf{p}^{\text{Shap}} $ in Eq.~\eqref{eq:probabilistic value} leads to the Shapley value. The sampling procedure is: i) sample a $ w $ uniformly from $ [0, 1] $, and then ii) sample a subset $ S \subseteq [n]\backslash i $ by incorporating each player in $ [n]\backslash i $ with probability $ w $. Then, the $ i $-th estimate is $ \hat{\phi}_{i} = \frac{1}{T}\sum_{j=1}^{T}\frac{p_{s_{j}+1}}{p_{s_{j}+1}^{\text{Shap}}}\left( U(S_{j}\cup i) - U(S_{j}) \right) $.
	
	\paragraph{The KernelSHAP Estimator \citep{lundberg2017unified}}
	This estimator is specific to the Shapley value. It employs the fact that the Shapley value $ \boldsymbol\phi_{i}^{\text{Shap}} $ is the uniquely optimal solution to
	\begin{equation} \label{op:ls for shap}
		\begin{gathered}
			\argmin_{\boldsymbol\phi \in \mathbb{R}^{n}} \sum_{\emptyset\subsetneq S \subsetneq [n]} \binom{n-2}{s-1}^{-1}\left( U(S) - U(\emptyset) - \sum_{i\in S}\phi_{i} \right)^{2} \ \text{ s.t. }\ \sum_{i\in [n]}\phi_{i} = U([n]) - U(\emptyset) .
		\end{gathered}
	\end{equation}
	Note that the weights can be scaled so that the objective is an expectation. A sequence of subsets $ \{ S_{j} \}_{j=1}^{T} $ where $ \emptyset \subsetneq S_{j} \subsetneq [n] $ is sampled according to $ P(S) \propto   \binom{n-2}{s-1}^{-1} $. Then, we have an approximate problem as
	\begin{equation}
		\begin{gathered}
			\argmin_{\boldsymbol\phi \in \mathbb{R}^{n}} \frac{1}{T}\sum_{j=1}^{T} \left( U(S_{j}) - U(\emptyset) - \sum_{i\in S_{j}}\phi_{i} \right)^{2} \ \text{ s.t. }\ \sum_{i\in [n]}\phi_{i} = U([n]) - U(\emptyset) ,
		\end{gathered}
	\end{equation}
	the uniquely optimal solution of which is treated as the estimates, i.e.,
	\begin{equation}
		\begin{gathered}
			\hat{\boldsymbol\phi}^{\text{Shap}} = \hat{\mathbf{A}}^{-1}\left( \hat{\mathbf{b}} - \mathbf{1}_{n}\frac{\mathbf{1}_{n}^{\top}\hat{\mathbf{A}}^{-1}\hat{\mathbf{b}}-U([n])+U(\emptyset)}{\mathbf{1}_{n}^{\top}\hat{\mathbf{A}}^{-1}\mathbf{1}_{n}} \right)\\
			\text{where }\ \hat{\mathbf{A}} = \frac{1}{T}\sum_{j=1}^{T}\mathbf{1}_{S_{j}}\mathbf{1}_{S_{j}}^{\top} \ \text{ and }\ \hat{\mathbf{b}}= \frac{1}{T}\sum_{j=1}^{T}(U(S_{j}) - U(\emptyset))\cdot\mathbf{1}_{S_{j}} . 
		\end{gathered}
	\end{equation}
	Specifically, $ \mathbf{1}_{S_{j}} \in \{ 0, 1 \}^{n} $ such that its $ i $-th entry is $ 1 $ if and only if $ i \in S_{j} $.
	
	\paragraph{The Unbiased KernelSHAP Estimator \citep{covert2021improving}}
	The uniquely optimal solution $ \boldsymbol\phi^{\text{Shap}} $ to the problem~\eqref{op:ls for shap} is
	\begin{equation}
		\begin{gathered}
			\boldsymbol\phi^{\text{Shap}} = \mathbf{A}^{-1}\left(\mathbf{b} - \mathbf{1}_{n}\frac{\mathbf{1}_{n}^{\top}\mathbf{A}^{-1}\mathbf{b}-U([n])+U(\emptyset)}{\mathbf{1}_{n}^{\top}\mathbf{A}^{-1}\mathbf{1}_{n}} \right) \\
			\text{where }\ \mathbf{A} = \mathbb{E}[\mathbf{1}_{S}\mathbf{1}_{S}^{\top}] \ \text{ and } \mathbf{b} = \mathbb{E}[(U(S)-U(\emptyset))\cdot \mathbf{1}_{n}] .
		\end{gathered}
	\end{equation}
	This estimator employs the fact that $ \mathbf{A}_{ij} = \frac{1}{2} $ if $ i=j $ and $ \frac{1}{n(n-1)}\frac{\sum_{s=2}^{n-1}\frac{s-1}{n-s}}{\sum_{s=1}^{n-1}\frac{1}{s(n-s)}} $ otherwise.
	In other words, the estimates of this estimator is
	\begin{equation} \label{eq: est of unbiased kernelSHAP}
		\begin{gathered}
			\hat{\boldsymbol\phi}^{\text{Shap}} = \mathbf{A}^{-1}\left(\hat{\mathbf{b}} - \mathbf{1}_{n}\frac{\mathbf{1}_{n}^{\top}\mathbf{A}^{-1}\hat{\mathbf{b}}-U([n])+U(\emptyset)}{\mathbf{1}_{n}^{\top}\mathbf{A}^{-1}\mathbf{1}_{n}} \right) \ \text{ where }\ \hat{\mathbf{b}}= \frac{1}{T}\sum_{j=1}^{T}(U(S_{j}) - U(\emptyset))\cdot\mathbf{1}_{S_{j}} .
		\end{gathered}
	\end{equation}
	Particularly $ \{ S_{j} \}_{j=1}^{T} $ where $ \emptyset\subsetneq S_{j} \subsetneq [n] $ are sampled using $ P(S) \propto \binom{n-2}{s-1}^{-1} $.
	
	Recently, \citet{fumagalli2024shap} proved that Eq.~\eqref{eq: est of unbiased kernelSHAP} can be simplified as
	\begin{equation}
		\begin{gathered}
			\hat{\phi}_{i}^{\text{Shap}} = \frac{U([n])-U(\emptyset)}{n} + \frac{2\sum_{s=1}^{n-1}\frac{1}{s}}{T}\sum_{j=1}^{T}U(S_{j}) \left( \mathds{1}_{i\in S_{j}} - \frac{s_{j}}{n} . \right)
		\end{gathered}
	\end{equation}

	\paragraph{The ARM Estimator \citep{kolpaczki2024approximating}}
	This estimator is designed according to
	\begin{equation}
		\begin{gathered}
			\phi_{i} = \mathbb{E}_{S\sim P^{+} \mid i\in S}[U(S)] - \mathbb{E}_{S\sim P^{-} \mid i\not\in S}[U(S)]
		\end{gathered}
	\end{equation}
	where $ P^{+}(S) \propto p_{s} $ for every $ \emptyset \subsetneq S \subseteq [n] $ and $ P^{-}(S) \propto p_{s+1} $ for every $ S \subsetneq [n] $ \citep[Proposition 8]{li2024faster}. A sequence of subsets $ \{ S_{j} \}_{j=1}^{T} $ are sampled using $ P^{+} $ and $ P^{-} $ alternatively, i.e., $ \{ S_{2k-1} \}_{k=1}^{\frac{T}{2}} $ are sampled independently according to $ P^{+} $, whereas $ \{ S_{2k} \}_{k=1}^{\frac{T}{2}} $ are sampled independently using $ P^{-} $. Then, the $ i $-th estimate is
	\begin{equation}
		\begin{gathered}
			\hat{\phi}_{i} = \frac{1}{T_{i}^{+}}\sum_{k=1}^{\frac{T}{2}}U(S_{2k-1})\mathds{1}_{i\in S_{2k-1}} - \frac{1}{T_{i}^{-}}\sum_{k=1}^{\frac{T}{2}}U(S_{2k})\mathds{1}_{i\not\in S_{2k}}  
		\end{gathered}
	\end{equation}
	where $ T_{i}^{+} = \sum_{k=1}^{\frac{T}{2}}\mathds{1}_{i\in S_{2k-1}} $ and $ T_{i}^{-} = \sum_{k=1}^{\frac{T}{2}}\mathds{1}_{i\not\in S_{2k}} $.
	
	\paragraph{The AME Estimator \citep{lin2022measuring}}
	This estimator is restricted to a sub-family of semi-values that satisfy $ \int_{0}^{1}\frac{1}{w(1-w)}\mathrm{d}\mu(w) < \infty $. For such a semi-value $ \boldsymbol\phi $, it can be cast as a uniquely optimal solution to
	\begin{equation}
		\begin{gathered}
			\argmin_{\mathbf{v} \in \mathbb{R}^{n}} \mathbb{E}[(Y-\bm{X}^{\top}\mathbf{v})^{2}]
		\end{gathered}
	\end{equation}
	where $ \bm{X} \in \mathbb{R}^{n} $ and $ Y $ are random variables. The sampling procedure is: i) sample a $ w \in (0, 1) $ using $ \mu $, 
	ii) sample a subset $ S $ by incorporating each player with probability $ w $, and then iii) $ Y = U(S) $ and $ \bm{X} = \bm{X}(S) $ such that $ X_{i} = \frac{1}{w\cdot C} $ if $ i \in S $ and $ -\frac{1}{(1-w)C} $ otherwise where $ C = \int_{0}^{1} \frac{1}{w(1-w)}\mathrm{d}\mu(w) $. With a sequence of subsets $ \{ S_{j} \}_{j=1}^{T} $, the uniquely optimal solution to the approximate problem
	\begin{equation}
		\begin{gathered}
			\argmin_{\mathbf{v} \in \mathbb{R}^{n}} \frac{1}{T}\sum_{j=1}^{T}\left( U(S_{j}) -  \bm{X}(S_{j})^{\top}\mathbf{v} \right)^{2}
		\end{gathered}
	\end{equation}
	is taken as the induced estimates, which is $ \hat{\boldsymbol\phi} = (\mathbf{A}^{\top}\mathbf{A})^{-1}\mathbf{A}^{\top}\mathbf{b} $
	where the $ j $-th row of $ \mathbf{A} $ is $ \bm{X}(S_{j})^{\top} $ and $ b_{j} = U(S_{j}) $.
	
	\paragraph{The MSR Estimator \citep{wang2023data}}
	The methodology of this estimator is limited to weighted Banzhaf values parameterized with $ 0<a<1 $ \citep[Appendix C.2]{wang2023data}. Precisely, $ p_{s} = a^{s-1}(1-a)^{n-s} $. Each subset is sampled by incorporating each player with probability $ a $, and then the $ i $-th estimate is
	\begin{equation}
		\begin{gathered}
			\hat{\phi}_{i} = \frac{1}{T_{i}^{+}}\sum_{j=1}^{T}U(S_{j})\mathds{1}_{i\in S_{j}} - \frac{1}{T_{i}^{-}}\sum_{j=1}^{T}U(S_{j})\mathds{1}_{i\not\in S_{j}}
		\end{gathered}
	\end{equation}
	where $ T_{i}^{+} = \sum_{j=1}^{T}\mathds{1}_{i\in S_{j}} $ and $ T_{i}^{-} = \sum_{j=1}^{T}\mathds{1}_{i\not\in S_{j}} $.
	
	\paragraph{The GELS Estimator \citep{li2024faster}}
	This estimator is established using the fact that $ \phi_{i} = v^{*}_{i} - v^{*}_{n+1} $ where $ \mathbf{v}^{*} \in \mathbb{R}^{n+1} $ is the uniquely optimal solution to
	\begin{equation}
		\begin{gathered}
			\argmin_{\mathbf{v} \in \mathbb{R}^{n+1}}\sum_{\emptyset\subsetneq S \subsetneq [n+1]} p_{s}\left( U(S\cap [n]) - \sum_{i\in S}v_{i} \right)^{2} .
		\end{gathered}
	\end{equation}
	The subsets $ \{ S_{j} \}_{j=1}^{T} $ where $ \emptyset \subsetneq S_{j} \subsetneq [n+1] $ are sampled using $ P(S) \propto p_{s} $, and then the $ i $-th estimate is
	\begin{equation}
		\begin{gathered}
			\hat{\phi}_{i} = \left( \sum_{s=1}^{n}\binom{n}{s-1}p_{s} \right)(\hat{v}_{i} - \hat{v}_{n+1}) 
		\end{gathered}
	\end{equation}
	where $ \hat{v}_{k} = \frac{1}{T_{k}}\sum_{j=1}^{T}U(S_{j}\cap [n])\mathds{1}_{k \in S_{j}} $ and $ T_{k} = \sum_{j=1}^{T}\mathds{1}_{k\in S_{j}} $.
	
	\paragraph{The Complement Estimator \citep{zhang2023efficient}}
	The complement estimator is specific to the Shapley value using the fact that
	\begin{equation}
		\begin{gathered}
			\phi_{i}^{\text{Shap}} = \frac{1}{n}\sum_{S \subseteq [n]\backslash i} \binom{n-1}{s}^{-1} \left( U(S\cup i) - U([n]\backslash(S\cup i)) \right) .
		\end{gathered}
	\end{equation}
	The sequence of subsets $ \{ S_{j} \}_{j=1}^{T} $ is sampled using i) sample a subset size $ s \in [n] $ uniformly, and then sample a subset $ S $ uniformly from $ \{ R \subseteq [n] \mid r = s \} $. Then, the $ i $-th estimate is
	\begin{equation}
		\begin{gathered}
			\hat{\phi}_{i}^{\text{Shap}} = \frac{1}{n}\sum_{s=1}^{n}\hat{\phi}_{i,s}  \ \text{ where }\ \hat{\phi}_{i,s} = \frac{1}{T_{i,s}}\sum_{j=1}^{n}\left( v_{j}\idt{i\in S_{j}, s_{j}=s} - v_{j}\idt{i\not\in S_{j}, n-s_{j}=s} \right)\\
			v_{j} = U(S_{j}) - U([n]\backslash S_{j}) \ \text{ and }\ T_{i,s} = \sum_{j=1}^{T}\left( \idt{i\in S_{j}, s_{j}=s} + \idt{i\not\in S_{j}, n-s_{j}=s} \right) .
		\end{gathered}
	\end{equation}
	
	\paragraph{The Group Testing Estimator \citep{jia2019towards}}
	We introduce the improved version presented by \citet{wang2023note}. Note that this estimator is specific to the Shapley value.
	A sequence of subsets $ \{ S_{j} \}_{j=1}^{T} $ are independently sampled according to: i) sample a subset size $ s \in [n] $ using $ P(s) \propto \frac{1}{s(n+1-s)} $, and then ii) sample a subset $ S $ uniformly from $ \{ R \subseteq [n+1] \mid r=s \} $. Then, the $ i $-th estimate is
	\begin{equation}
		\begin{gathered}
			\hat{\phi}_{i}^{\text{Shap}} = \frac{2\sum_{s=1}^{n}\frac{1}{s}}{T}\sum_{j=1}^{T} U(S_{j}\cap [n]) \left(\idt{i \in S_{j}, n+1\not\in S_{j}} - \idt{i \not\in S_{j}, n+1\in S_{j}} \right) .
		\end{gathered}
	\end{equation}
	
	\paragraph{The Permutation Estimator \citep{castro2009polynomial}}
	This estimator is specific to the Shapley value, using the formula
	\begin{equation}
		\begin{gathered}
			\phi_{i}^{\text{Shap}} = \frac{1}{n!}\sum_{\pi \in \Pi} (U(\mathcal{P}^{i}(\pi)\cup i) - U(\mathcal{P}^{i}(\pi)))
		\end{gathered}
	\end{equation}
	where $ \Pi $ contains all permutations of $ [n] $ and $ \mathcal{P}^{i}(\pi) $ is the subset that contains all players preceding $ i $ in $ \pi $ . Thus, it samples a sequence of permutations $ \{ \pi_{j} \}_{j=1}^{T} $ from $ \Pi $ uniformly with replacement, and then the $ i $-th estimate is $ \hat{\phi}_{i}^{\text{Shap}} = \frac{1}{T}\sum_{j=1}^{T} \left( U(\mathcal{P}^{i}(\pi_{j})\cup i) - U(\mathcal{P}^{i}(\pi_{j})) \right) $.

	\paragraph{The WeightedSHAP Estimator \citep{kwon2022weightedshap}}
	As mentioned in the main paper, it is based on
	\begin{equation}
		\begin{gathered}
			\phi_{i} = \sum_{s=1}^{n}m_{s}\cdot\mathbb{E}_{\substack{R\subseteq[n]\backslash i\\ r=s-1}}[U(R\cup i)-U(R)]
		\end{gathered}
	\end{equation}
	where $ m_{s} = \binom{n-1}{s-1}p_{s} $.
	
	For each player $ i \in [n] $, it samples a sequence of permutations $ \{ \pi_{j} \}_{j=1}^{T} $ of $ [n]\backslash i $. Then, the corresponding estimate is $ \hat{\phi}_{i} = \sum_{s=1}^{n}m_{s}\hat{\phi}_{i,s} $ where $ \hat{\phi}_{i,k} = \frac{1}{T}\sum_{j=1}^{T} \left( U(\mathcal{S}^{k}(\pi_{j}) \cup i) - U(\mathcal{S}^{k}(\pi_{j})) \right) $ and $ \mathcal{S}^{k}(\pi_{j}) $ is the subset that contains the first $ k-1 $ players in $ \pi_{j} $.
	
	\paragraph{The SHAP-IQ Estimator \citep{fumagalli2024shap}}
	Recall that its underlying formula is
	\begin{equation}
		\begin{gathered}
			\phi_{i} = p_{n}\cdot (U([n])-U(\emptyset)) + 2H\cdot\mathbb{E}_{\emptyset\subsetneq S \subsetneq [n]}[((n-s)m_{s}\mathds{1}_{i\in S} - sm_{s+1}\mathds{1}_{i\not\in S})\cdot(U(S)-U(\emptyset))]
		\end{gathered}
	\end{equation}
	where $ m_{s} = \binom{n-1}{s-1}p_{s} $, $ H = \sum_{j=1}^{n-1}\frac{1}{j} $, and $ P(S) \propto \binom{n-2}{s-1}^{-1} $. Therefore, a sequence of subsets $ \{ S_{j} \}_{j=1}^{T} $ where $ \emptyset \subsetneq S_{j} \subsetneq [n] $ is sampled using $ P(S) \propto \binom{n-2}{s-1}^{-1} $, and the $ i $-th estimate is
	\begin{equation}
		\begin{gathered}
			\hat{\phi}_{i} = p_{n}\cdot(U([n])-U(\emptyset)) + \frac{2H}{T}\sum_{j=1}^{T}(U(S_{j})-U(\emptyset))\cdot\left( (n-s)m_{s}\mathds{1}_{i\in S_{j}} - sm_{s+1}\mathds{1}_{i\not\in S_{j}} \right) .
		\end{gathered}
	\end{equation} 
	
	\newpage
	\section*{NeurIPS Paper Checklist}

	\begin{enumerate}
		
		\item {\bf Claims}
		\item[] Question: Do the main claims made in the abstract and introduction accurately reflect the paper's contributions and scope?
		\item[] Answer: \answerYes{} 
		\item[] Justification: Our claimed theories are presented in Section~\ref{sec:main} and are empirically verified in Section~\ref{sec:experiments}. 
		\item[] Guidelines:
		\begin{itemize}
			\item The answer NA means that the abstract and introduction do not include the claims made in the paper.
			\item The abstract and/or introduction should clearly state the claims made, including the contributions made in the paper and important assumptions and limitations. A No or NA answer to this question will not be perceived well by the reviewers. 
			\item The claims made should match theoretical and experimental results, and reflect how much the results can be expected to generalize to other settings. 
			\item It is fine to include aspirational goals as motivation as long as it is clear that these goals are not attained by the paper. 
		\end{itemize}
		
		\item {\bf Limitations}
		\item[] Question: Does the paper discuss the limitations of the work performed by the authors?
		\item[] Answer: \answerYes{} 
		\item[] Justification: Proposition~\ref{prop:ofaWeighted} demonstrates that our OFA-A estimator does not rival the previously best estimator for weighted Banzhaf values in terms of convergence rate, which is a price to pay for using a fixed sampling scheme for all probabilistic values.  
		\item[] Guidelines:
		\begin{itemize}
			\item The answer NA means that the paper has no limitation while the answer No means that the paper has limitations, but those are not discussed in the paper. 
			\item The authors are encouraged to create a separate "Limitations" section in their paper.
			\item The paper should point out any strong assumptions and how robust the results are to violations of these assumptions (e.g., independence assumptions, noiseless settings, model well-specification, asymptotic approximations only holding locally). The authors should reflect on how these assumptions might be violated in practice and what the implications would be.
			\item The authors should reflect on the scope of the claims made, e.g., if the approach was only tested on a few datasets or with a few runs. In general, empirical results often depend on implicit assumptions, which should be articulated.
			\item The authors should reflect on the factors that influence the performance of the approach. For example, a facial recognition algorithm may perform poorly when image resolution is low or images are taken in low lighting. Or a speech-to-text system might not be used reliably to provide closed captions for online lectures because it fails to handle technical jargon.
			\item The authors should discuss the computational efficiency of the proposed algorithms and how they scale with dataset size.
			\item If applicable, the authors should discuss possible limitations of their approach to address problems of privacy and fairness.
			\item While the authors might fear that complete honesty about limitations might be used by reviewers as grounds for rejection, a worse outcome might be that reviewers discover limitations that aren't acknowledged in the paper. The authors should use their best judgment and recognize that individual actions in favor of transparency play an important role in developing norms that preserve the integrity of the community. Reviewers will be specifically instructed to not penalize honesty concerning limitations.
		\end{itemize}
		
		\item {\bf Theory Assumptions and Proofs}
		\item[] Question: For each theoretical result, does the paper provide the full set of assumptions and a complete (and correct) proof?
		\item[] Answer: \answerYes{} 
		\item[] Justification: We have provided detailed proofs in the Appendices~\ref{app:theorem 1}, \ref{app:propositions} and \ref{app:theorem 2}.
		\item[] Guidelines:
		\begin{itemize}
			\item The answer NA means that the paper does not include theoretical results. 
			\item All the theorems, formulas, and proofs in the paper should be numbered and cross-referenced.
			\item All assumptions should be clearly stated or referenced in the statement of any theorems.
			\item The proofs can either appear in the main paper or the supplemental material, but if they appear in the supplemental material, the authors are encouraged to provide a short proof sketch to provide intuition. 
			\item Inversely, any informal proof provided in the core of the paper should be complemented by formal proofs provided in appendix or supplemental material.
			\item Theorems and Lemmas that the proof relies upon should be properly referenced. 
		\end{itemize}
		
		\item {\bf Experimental Result Reproducibility}
		\item[] Question: Does the paper fully disclose all the information needed to reproduce the main experimental results of the paper to the extent that it affects the main claims and/or conclusions of the paper (regardless of whether the code and data are provided or not)?
		\item[] Answer: \answerYes{} 
		\item[] Justification: Our experiment settings are stated in Section~\ref{sec:experiments}, and our method is presented in Algorithm~\ref{alg:ofa}.
		\item[] Guidelines:
		\begin{itemize}
			\item The answer NA means that the paper does not include experiments.
			\item If the paper includes experiments, a No answer to this question will not be perceived well by the reviewers: Making the paper reproducible is important, regardless of whether the code and data are provided or not.
			\item If the contribution is a dataset and/or model, the authors should describe the steps taken to make their results reproducible or verifiable. 
			\item Depending on the contribution, reproducibility can be accomplished in various ways. For example, if the contribution is a novel architecture, describing the architecture fully might suffice, or if the contribution is a specific model and empirical evaluation, it may be necessary to either make it possible for others to replicate the model with the same dataset, or provide access to the model. In general. releasing code and data is often one good way to accomplish this, but reproducibility can also be provided via detailed instructions for how to replicate the results, access to a hosted model (e.g., in the case of a large language model), releasing of a model checkpoint, or other means that are appropriate to the research performed.
			\item While NeurIPS does not require releasing code, the conference does require all submissions to provide some reasonable avenue for reproducibility, which may depend on the nature of the contribution. For example
			\begin{enumerate}
				\item If the contribution is primarily a new algorithm, the paper should make it clear how to reproduce that algorithm.
				\item If the contribution is primarily a new model architecture, the paper should describe the architecture clearly and fully.
				\item If the contribution is a new model (e.g., a large language model), then there should either be a way to access this model for reproducing the results or a way to reproduce the model (e.g., with an open-source dataset or instructions for how to construct the dataset).
				\item We recognize that reproducibility may be tricky in some cases, in which case authors are welcome to describe the particular way they provide for reproducibility. In the case of closed-source models, it may be that access to the model is limited in some way (e.g., to registered users), but it should be possible for other researchers to have some path to reproducing or verifying the results.
			\end{enumerate}
		\end{itemize}

		\item {\bf Open access to data and code}
		\item[] Question: Does the paper provide open access to the data and code, with sufficient instructions to faithfully reproduce the main experimental results, as described in supplemental material?
		\item[] Answer: \answerYes{} 
		\item[] Justification: The datasets we used are from open resources, and our code will be released on a github repo.
		\item[] Guidelines:
		\begin{itemize}
			\item The answer NA means that paper does not include experiments requiring code.
			\item Please see the NeurIPS code and data submission guidelines (\url{https://nips.cc/public/guides/CodeSubmissionPolicy}) for more details.
			\item While we encourage the release of code and data, we understand that this might not be possible, so “No” is an acceptable answer. Papers cannot be rejected simply for not including code, unless this is central to the contribution (e.g., for a new open-source benchmark).
			\item The instructions should contain the exact command and environment needed to run to reproduce the results. See the NeurIPS code and data submission guidelines (\url{https://nips.cc/public/guides/CodeSubmissionPolicy}) for more details.
			\item The authors should provide instructions on data access and preparation, including how to access the raw data, preprocessed data, intermediate data, and generated data, etc.
			\item The authors should provide scripts to reproduce all experimental results for the new proposed method and baselines. If only a subset of experiments are reproducible, they should state which ones are omitted from the script and why.
			\item At submission time, to preserve anonymity, the authors should release anonymized versions (if applicable).
			\item Providing as much information as possible in supplemental material (appended to the paper) is recommended, but including URLs to data and code is permitted.
		\end{itemize}

		\item {\bf Experimental Setting/Details}
		\item[] Question: Does the paper specify all the training and test details (e.g., data splits, hyperparameters, how they were chosen, type of optimizer, etc.) necessary to understand the results?
		\item[] Answer: \answerYes{} 
		\item[] Justification: Our experiment settings are stated in Section~\ref{sec:experiments}.
		\item[] Guidelines:
		\begin{itemize}
			\item The answer NA means that the paper does not include experiments.
			\item The experimental setting should be presented in the core of the paper to a level of detail that is necessary to appreciate the results and make sense of them.
			\item The full details can be provided either with the code, in appendix, or as supplemental material.
		\end{itemize}
		
		\item {\bf Experiment Statistical Significance}
		\item[] Question: Does the paper report error bars suitably and correctly defined or other appropriate information about the statistical significance of the experiments?
		\item[] Answer: \answerYes{} 
		\item[] Justification: Our experiment results in Section~\ref{sec:experiments} are all reported with standard deviation using $ 30 $ random seeds.
		\item[] Guidelines:
		\begin{itemize}
			\item The answer NA means that the paper does not include experiments.
			\item The authors should answer "Yes" if the results are accompanied by error bars, confidence intervals, or statistical significance tests, at least for the experiments that support the main claims of the paper.
			\item The factors of variability that the error bars are capturing should be clearly stated (for example, train/test split, initialization, random drawing of some parameter, or overall run with given experimental conditions).
			\item The method for calculating the error bars should be explained (closed form formula, call to a library function, bootstrap, etc.)
			\item The assumptions made should be given (e.g., Normally distributed errors).
			\item It should be clear whether the error bar is the standard deviation or the standard error of the mean.
			\item It is OK to report 1-sigma error bars, but one should state it. The authors should preferably report a 2-sigma error bar than state that they have a 96\% CI, if the hypothesis of Normality of errors is not verified.
			\item For asymmetric distributions, the authors should be careful not to show in tables or figures symmetric error bars that would yield results that are out of range (e.g. negative error rates).
			\item If error bars are reported in tables or plots, The authors should explain in the text how they were calculated and reference the corresponding figures or tables in the text.
		\end{itemize}
		
		\item {\bf Experiments Compute Resources}
		\item[] Question: For each experiment, does the paper provide sufficient information on the computer resources (type of compute workers, memory, time of execution) needed to reproduce the experiments?
		\item[] Answer: \answerYes{} 
		\item[] Justification: It is stated in Section~\ref{sec:experiments}.
		\item[] Guidelines:
		\begin{itemize}
			\item The answer NA means that the paper does not include experiments.
			\item The paper should indicate the type of compute workers CPU or GPU, internal cluster, or cloud provider, including relevant memory and storage.
			\item The paper should provide the amount of compute required for each of the individual experimental runs as well as estimate the total compute. 
			\item The paper should disclose whether the full research project required more compute than the experiments reported in the paper (e.g., preliminary or failed experiments that didn't make it into the paper). 
		\end{itemize}
		
		\item {\bf Code Of Ethics}
		\item[] Question: Does the research conducted in the paper conform, in every respect, with the NeurIPS Code of Ethics \url{https://neurips.cc/public/EthicsGuidelines}?
		\item[] Answer: \answerYes{} 
		\item[] Justification: We have complied with the NeurIPS Code of Ethics.
		\item[] Guidelines:
		\begin{itemize}
			\item The answer NA means that the authors have not reviewed the NeurIPS Code of Ethics.
			\item If the authors answer No, they should explain the special circumstances that require a deviation from the Code of Ethics.
			\item The authors should make sure to preserve anonymity (e.g., if there is a special consideration due to laws or regulations in their jurisdiction).
		\end{itemize}

		\item {\bf Broader Impacts}
		\item[] Question: Does the paper discuss both potential positive societal impacts and negative societal impacts of the work performed?
		\item[] Answer: \answerNA{} 
		\item[] Justification: Our work focuses on the convergence rate of estimators for probabilistic values that do not appear to have any significant societal impact.
		\item[] Guidelines:
		\begin{itemize}
			\item The answer NA means that there is no societal impact of the work performed.
			\item If the authors answer NA or No, they should explain why their work has no societal impact or why the paper does not address societal impact.
			\item Examples of negative societal impacts include potential malicious or unintended uses (e.g., disinformation, generating fake profiles, surveillance), fairness considerations (e.g., deployment of technologies that could make decisions that unfairly impact specific groups), privacy considerations, and security considerations.
			\item The conference expects that many papers will be foundational research and not tied to particular applications, let alone deployments. However, if there is a direct path to any negative applications, the authors should point it out. For example, it is legitimate to point out that an improvement in the quality of generative models could be used to generate deepfakes for disinformation. On the other hand, it is not needed to point out that a generic algorithm for optimizing neural networks could enable people to train models that generate Deepfakes faster.
			\item The authors should consider possible harms that could arise when the technology is being used as intended and functioning correctly, harms that could arise when the technology is being used as intended but gives incorrect results, and harms following from (intentional or unintentional) misuse of the technology.
			\item If there are negative societal impacts, the authors could also discuss possible mitigation strategies (e.g., gated release of models, providing defenses in addition to attacks, mechanisms for monitoring misuse, mechanisms to monitor how a system learns from feedback over time, improving the efficiency and accessibility of ML).
		\end{itemize}
		
		\item {\bf Safeguards}
		\item[] Question: Does the paper describe safeguards that have been put in place for responsible release of data or models that have a high risk for misuse (e.g., pretrained language models, image generators, or scraped datasets)?
		\item[] Answer: \answerNA{} 
		\item[] Justification:  Our work focuses on the convergence rate of estimators for probabilistic values that do not appear to pose any risk for misuse.
		\item[] Guidelines:
		\begin{itemize}
			\item The answer NA means that the paper poses no such risks.
			\item Released models that have a high risk for misuse or dual-use should be released with necessary safeguards to allow for controlled use of the model, for example by requiring that users adhere to usage guidelines or restrictions to access the model or implementing safety filters. 
			\item Datasets that have been scraped from the Internet could pose safety risks. The authors should describe how they avoided releasing unsafe images.
			\item We recognize that providing effective safeguards is challenging, and many papers do not require this, but we encourage authors to take this into account and make a best faith effort.
		\end{itemize}
		
		\item {\bf Licenses for existing assets}
		\item[] Question: Are the creators or original owners of assets (e.g., code, data, models), used in the paper, properly credited and are the license and terms of use explicitly mentioned and properly respected?
		\item[] Answer: \answerYes{} 
		\item[] Justification: It is stated in Section~\ref{sec:experiments}.
		\item[] Guidelines:
		\begin{itemize}
			\item The answer NA means that the paper does not use existing assets.
			\item The authors should cite the original paper that produced the code package or dataset.
			\item The authors should state which version of the asset is used and, if possible, include a URL.
			\item The name of the license (e.g., CC-BY 4.0) should be included for each asset.
			\item For scraped data from a particular source (e.g., website), the copyright and terms of service of that source should be provided.
			\item If assets are released, the license, copyright information, and terms of use in the package should be provided. For popular datasets, \url{paperswithcode.com/datasets} has curated licenses for some datasets. Their licensing guide can help determine the license of a dataset.
			\item For existing datasets that are re-packaged, both the original license and the license of the derived asset (if it has changed) should be provided.
			\item If this information is not available online, the authors are encouraged to reach out to the asset's creators.
		\end{itemize}
		
		\item {\bf New Assets}
		\item[] Question: Are new assets introduced in the paper well documented and is the documentation provided alongside the assets?
		\item[] Answer: \answerNA{} 
		\item[] Justification: We do not introduce any new assets.
		\item[] Guidelines:
		\begin{itemize}
			\item The answer NA means that the paper does not release new assets.
			\item Researchers should communicate the details of the dataset/code/model as part of their submissions via structured templates. This includes details about training, license, limitations, etc. 
			\item The paper should discuss whether and how consent was obtained from people whose asset is used.
			\item At submission time, remember to anonymize your assets (if applicable). You can either create an anonymized URL or include an anonymized zip file.
		\end{itemize}
		
		\item {\bf Crowdsourcing and Research with Human Subjects}
		\item[] Question: For crowdsourcing experiments and research with human subjects, does the paper include the full text of instructions given to participants and screenshots, if applicable, as well as details about compensation (if any)? 
		\item[] Answer: \answerNA{} 
		\item[] Justification: Our work does not involve human subjects.
		\item[] Guidelines:
		\begin{itemize}
			\item The answer NA means that the paper does not involve crowdsourcing nor research with human subjects.
			\item Including this information in the supplemental material is fine, but if the main contribution of the paper involves human subjects, then as much detail as possible should be included in the main paper. 
			\item According to the NeurIPS Code of Ethics, workers involved in data collection, curation, or other labor should be paid at least the minimum wage in the country of the data collector. 
		\end{itemize}
		
		\item {\bf Institutional Review Board (IRB) Approvals or Equivalent for Research with Human Subjects}
		\item[] Question: Does the paper describe potential risks incurred by study participants, whether such risks were disclosed to the subjects, and whether Institutional Review Board (IRB) approvals (or an equivalent approval/review based on the requirements of your country or institution) were obtained?
		\item[] Answer: \answerNA{} 
		\item[] Justification: Our work does not involve human subjects.
		\item[] Guidelines:
		\begin{itemize}
			\item The answer NA means that the paper does not involve crowdsourcing nor research with human subjects.
			\item Depending on the country in which research is conducted, IRB approval (or equivalent) may be required for any human subjects research. If you obtained IRB approval, you should clearly state this in the paper. 
			\item We recognize that the procedures for this may vary significantly between institutions and locations, and we expect authors to adhere to the NeurIPS Code of Ethics and the guidelines for their institution. 
			\item For initial submissions, do not include any information that would break anonymity (if applicable), such as the institution conducting the review.
		\end{itemize}
		
	\end{enumerate}
	
\end{document}